\tikzset{
  block/.style   = {rectangle, draw, minimum width=3cm, minimum height=0.9cm,
                    align=center},
  startstop/.style = {ellipse, draw, minimum width=2.6cm, minimum height=0.9cm,
                      align=center},
  decision/.style = {diamond, draw, aspect=2, align=center, inner sep=1pt},
  line/.style    = {-{Stealth[length=2mm,width=2mm]}},
  node distance=1.4cm
}
   \let\NAT@parse\undefined
\newcommand{\Z}{\mathbb{Z}}
\newcommand{\R}{\mathbb{R}}
\newcommand{\calD}{\mathcal{D}}
\newcommand{\calH}{\mathcal{H}}
\DeclareMathOperator*{\argmin}{arg\,min} 
\newtheorem{thm}{Theorem}
\newtheorem{rem}{Remark}
\newtheorem{lem}{Lemma}
\newtheorem{prop}{Proposition}
\newtheorem{cor}{Corollary}
\newtheorem{definition}{Definition}
\newtheorem{assumption}{Assumption}
\newtheorem{rec}{Requirement}
\newtheorem{nota}{Notation}
\newenvironment{proposition}{\begin{prop}}{\hfill \mbox{\small$\square$} \end{prop}}
\newenvironment{remark}{\begin{rem}}{\hfill $\bullet$ \end{rem}}
\newenvironment{defi}{\begin{definition}}{\hfill $\square$ \end{definition}}
\newenvironment{asp}{\begin{assumption}}{\hfill $\bullet$ \end{assumption}} 
\newenvironment{theorem}{\begin{thm}}{\hfill  \mbox{\small$\square$} \end{thm}}
\definecolor{darkgreen}{RGB}{0,100,0}
\definecolor{myred}{rgb}{0.75,0.2,0.2}
\definecolor{mydarkgreen}{rgb}{0.0, 0.6, 0.0} 
\definecolor{alcolor}{rgb}{.85,.27,.09} 
\definecolor{dark-red}{rgb}{0.4,0.15,0.15}
\definecolor{dark-blue}{rgb}{0.15,0.15,0.6}
\definecolor{medium-blue}{rgb}{0,0,0.5}
\definecolor{gray}{rgb}{0.4,0.4,0.4}
\long\def\invis#1{}
\begin{document}

\title{On Mobile Ad Hoc Networks for Coverage of Partially Observable Worlds}

\author{Edwin Meriaux$^*$\quad 
Shuo Wen$^*$\quad 
Louis-Roy Langevin\quad \\ Doina Precup\quad Antonio Lor\'{\i}a \quad Gregory Dudek

\thanks{* Co-first authors.}
\thanks{Edwin Meriaux, Shuo Wen, Doina Precup and Gregory Dudek are affiliated with the McGill University Computer Science Department and MILA Institute. (e-mail: edwin.meriaux@mail.mcgill.ca)}
\thanks{Edwin Meriaux is also affiliated with L2S at the University Paris-Saclay, CentraleSupélec.}
\thanks{Louis-Roy Langevin is affiliated with the McGill University Department of Mathematics and Statistics}
\thanks{Antonio Lor\'{\i}a is affiliated with CNRS L2S.}

}

\maketitle






\begin{abstract}
This paper addresses the movement and placement of mobile agents to establish a communication network in initially unknown environments. We cast the problem in a computational-geometric framework by relating the coverage problem and line-of-sight constraints to the Cooperative Guard Art Gallery Problem, and introduce its partially observable variant, the Partially Observable Cooperative Guard Art Gallery Problem (POCGAGP). We then present two algorithms that solve POCGAGP: CADENCE, a centralized planner that incrementally selects $270^\circ$ corners at which to deploy agents, and DADENCE, a decentralized scheme that coordinates agents using local information and lightweight messaging. Both approaches operate under partial observability and target simultaneous coverage and connectivity. We evaluate the methods in simulation across 1,500 test cases of varied size and structure, demonstrating consistent success in forming connected networks while covering and exploring unknown space. These results highlight the value of geometric abstractions for communication-driven exploration and show that decentralized policies are competitive with centralized performance while retaining scalability.
\end{abstract}

\begin{IEEEkeywords}
Mobile ad hoc networks (MANET), multi-agent systems (MAS), art gallery problem (AGP), partial observability.
\end{IEEEkeywords}

\IEEEpeerreviewmaketitle

\section{Introduction}

In this paper, we study how mobile agents can be positioned and coordinated to establish communication networks in initially unknown environments.

\subsection{Context and motivation}

Mobile Ad Hoc Networks (MANETs) consist of interconnected mobile devices (in the context of this paper, mobile sensor nodes) that are deployed dynamically without a predefined structure or topology and self-form in real time. 

\begin{figure}[]
    \centering
    \includegraphics[width=0.45\textwidth]{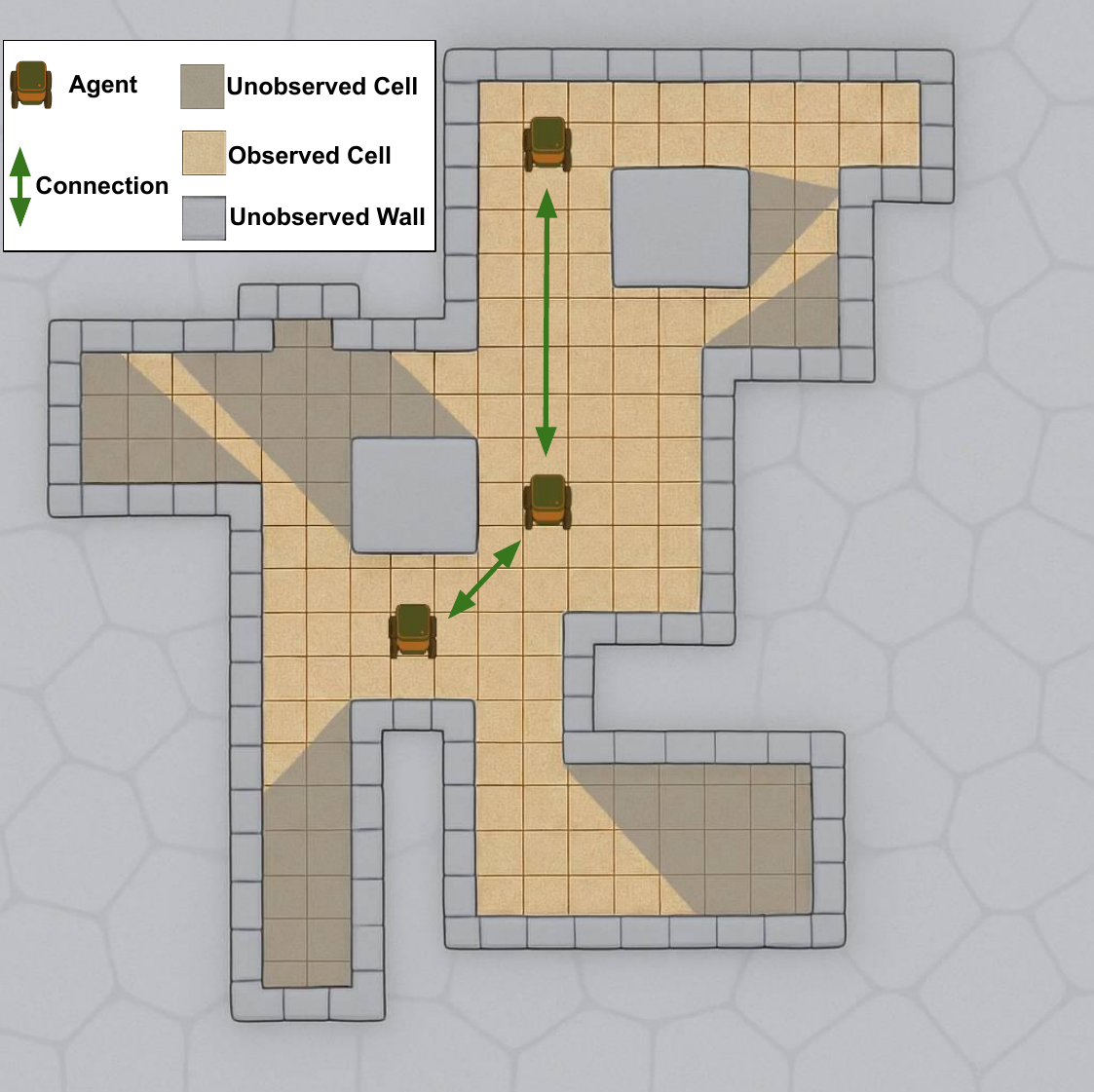} 
    \caption{Three robots forming a MANET in a partially observed indoor environment. The bright yellow areas represent observed cells, while the shadowed regions are uncovered. The two holes (polygons in the middle) and outer walls are physical barriers that cannot be covered. Green arrows indicate the agents' communication links.}
    \label{fig:deployed}
\end{figure}

MANETs comprise mobile sensor nodes that provide rapidly deployable, flexible, and resilient communication in environments where infrastructure is absent or temporary, such as disaster zones and remote regions~\cite{8991194}. For instance, MANETs have been successfully demonstrated in underground mines~\cite{zhang2023unmanned} or in confined-space search and rescue operations, such as in earthquake-damaged buildings~\cite{del2016survey}. The deployment of MANETs in such environments is complicated by physical barriers, such as dense rock formations~\cite{zhou2015rf}, metallic structures~\cite{aileen2021wifi}, or moving crowds, any of which can weaken or block wireless signals and limit node mobility. In disaster-affected or crowded areas, pre-existing maps often become inaccurate or obsolete due to structural collapse or blocked passageways. Corridors filled with debris, collapsed rooms, and large gatherings can drastically change the spatial layout such that the deployed nodes only have knowledge of the operational environment within their sensor range. These constraints give the MANET only partial information about the environment, making the setting \emph{partially observable}, at least initially, until the self-formed network evolves to achieve full coverage of the space. An example of a partially covered world is shown in Figure~\ref{fig:deployed}.

To ensure stable communication with users throughout the space, the MANET must simultaneously provide coverage and maintain network connectivity~\cite{wang2011coverage}. Users must be able to connect to any agent to transmit information packets, which may traverse multiple hops through intermediate nodes. Consequently, the system must guarantee that every relevant area falls within the communication range of at least one mobile node. Furthermore, the MANET must preserve connectivity \cite{poonawala_preserving_2017_7858679,ER_TAC_drones-sat} through single or multi-hop links, ensuring robustness against node movement, failures, or signal obstructions.

Broadly speaking, the network coverage problem concerns how effectively a set of agents can monitor, sense, or provide communication services over a physical environment~\cite{borralho2021survey}. The usual goal is to achieve complete coverage of the environment while maintaining communication between all agents, and simultaneously minimizing both the number of agents deployed and the total deployment time~\cite{ccabuk2021max,fan2010coverage}. As in other MANET settings, this challenge becomes significantly more complex in indoor, heavily obstructed environments, such as those constructed from concrete, rock, or metal, which severely degrade communication~\cite{meriaux2023test,aileen2021wifi}. In this setting, the connectivity aspect of the problem broadly concerns the ability of one agent to ``observe'' another. If two agents observe each other, they are connected. Depending on the sensing or communication modality, ``observing'' may mean, {\it e.g.}, that the other agent lies within the range of a LiDAR sensor, a wireless transceiver, or some other perception or communication device.

This paper addresses the problem of self-constructing MANETs in partially observable environments, ensuring both full spatial coverage and connectivity maintenance. 

\subsection{Coverage, connectivity, and the Art Gallery Problem(s)}

The network coverage problem~\cite{huang2003coverage} is closely related to a foundational question in computational geometry, known as {\it The Art Gallery Problem} (AGP) \cite{louvre-AGP}: How to place the fewest observers/cameras (referred to as guards) in a polygon, representing the surface of the world, so that every point inside is visible? Originally posed by Klee~\cite{honsberger1976mathematical}, the minimum-guard AGP seeks the smallest set of guards such that at least one can observe every interior point of the polygon.

Chvátal's Theorem~\cite{o1987art} establishes that $\left\lfloor \tfrac{n}{3} \right\rfloor$ guards are always sufficient to cover a polygon with $n$ vertices~\cite{chvatal1975combinatorial}. Subsequent works provided improved bounds on the number of agents that suffice to solve the AGP in orthogonal polygons, non-orthogonal worlds, and polygons with {\it holes} (polygons containing spaces not considered part of the world) which act as visibility barriers \cite{zylinski2004cooperative,zylinski2005cooperative}. Because these polygons model line-of-sight constraints, AGP is directly relevant to indoor sensor networks with LOS limitations.



The AGP was extended with the introduction of the {\it Cooperative Guards Art Gallery Problem} (CGAGP), which adds the requirement that agents must not only cover the entire world but also form a connected graph $G= (V,E)$. In this graph, the set of vertices $V$ represents the agents' locations, and the set of edges $E$ is defined such that an edge $e \in E$ exists if two agent locations (vertices) can observe each other. CGAGP naturally establishes a communication network among agents and can be applied to model scenarios such as multi-robot coordination, distributed sensing, and connected exploration. 

Determining the minimum number of guards in both the AGP and CGAGP settings is known to be NP-hard~\cite{1057165,liaw1993minimum}. 
Although computing an optimal solution is intractable in general, an upper bound on the required number of agents is known: 
for an orthogonal polygon with $n$ vertices and $h$ holes, the number of agents needed to achieve complete, connected 
coverage is bounded by~\cite{zylinski2004cooperative}:

\begin{equation}
M_{\mbox{\scriptsize CGAGP}} = \frac{n + 2h - 4}{2}.
\label{cgagp}
\end{equation}

This best worst-case bound~\cite{donald1999art} represents the smallest number of agents guaranteed to solve CGAGP in the most complex admissible polygon. In both AGP and CGAGP, computation is performed offline: the environment is fully known, and agent positions are fixed after deployment.

\subsection{Centralized and decentralized deployment algorithms}

MANET deployment algorithms for the connectivity and coverage problem generally fall into centralized or decentralized categories. Centralized approaches rely on a single entity with global knowledge to dictate agent actions, enabling efficient coordination and decision-making. However, they suffer from high computational complexity, limited scalability, and vulnerability to single points of failure. Classical centralized algorithms include \textit{Mixed Integer Linear Programming (MILP)} for optimal task allocation \cite{9515362}, and \textit{Centralized Model Predictive Control (MPC)} for multi-agent coordination in autonomous vehicle systems \cite{mihaly2020model}.

Conversely, decentralized control enables agents to make decisions using only local information and communications with peers~\cite{thien2018decentralized}. This approach provides two major benefits. First, the system becomes more robust, as the loss of a single agent does not prevent others from functioning independently. Second, computation is set such that each agent makes their own decision. this removes computational bottlenecks associated with one controller. Notable decentralized methods include \textit{Couzin's swarming model} for collective motion~\cite{couzin2002collective} and \textit{Ant Colony Optimization (ACO)} for path planning~\cite{ilie2010distributed}. Ultimately, the choice between centralized and decentralized depends on the specific computational and communication constraints of the system.

\subsection {Contributions}

Our contributions are summarized as follows.
We present the mathematical formulation of the Partially Observable Cooperative Guard Art Gallery Problem (POCGAGP). This novel formulation extends the CGAGP and AGP to the online setting  which requires that a MANET simultaneously explores, covers, and establishes connectivity within initially unknown environments. 

To address this challenge, we propose two MANET algorithms: Centralized Agent-Directed Exploration for Network Coverage (CADENCE) and Decentralized Agent-Directed Exploration for Network Coverage (DADENCE). CADENCE guarantees full coverage while maintaining the theoretical upper bounds of the classic CGAGP, whereas DADENCE leverages decentralized decision-making to maximize agent efficiency.

To rigorously validate these contributions, we introduce a new benchmark suite consisting of 1,500 orthogonal test worlds. We also introduce a novel quadtree decomposition complexity measure to rank the test worlds. In the extensive experiments, we demonstrate that our approaches outperform baseline methods in coverage capability, agent efficiency, and time-to-completion. Our results confirm that both algorithms effectively solve the POCGAGP while offering distinct advantages in theoretical guarantees and decentralized robustness, respectively.

\section{Problem Formulation}
\label{prob_formu}

\subsection{General problem setting}


\begin{defi}[Partially-observable CGAGP]\label{def:POCGAGP}
Given an initially unknown world $W \subset \mathbb{R}^N$ and a variable-size network of mobile agents, design an \emph{online} deployment policy that moves agents to positions $x \in W$ so that, using only information gathered so far, they progressively discover $W$ and eventually terminate in positions that allow them to collectively and cooperatively observe the whole world.
\end{defi}

We refer to the problem outlined above as the Partially-Observable Cooperative-Guard Art Gallery Problem (POCGAGP). Here, {\it collectively observing} the world means that the union of the regions observed by all agents covers the entire world. {\it Cooperatively observing} further requires that the deployed agents form a connected network. The notion of {\it observing} is made precise with Definition~\ref{defi:fov}.

The POCGAGP is a generalization of both the Art Gallery Problem (AGP) and the Cooperative Guard Art Gallery Problem (CGAGP). In the AGP, the goal is for guards (which we will use interchangeably with the term agent in the MANET setting of this paper) to collectively observe a \emph{known} world, without any requirement for cooperation (i.e., agents do not need to form a network). By contrast, in the CGAGP, agents must not only observe the world but also form a network while doing so, thus acting cooperatively.

For generality, Definition \ref{def:POCGAGP} does not limit the number of agents needed to collectively come to observe the entire world $W$ nor the time needed to achieve this objective. However, we impose the following constraints for this work:

\begin{rec}[Time Limit \(T_{\max}\)] \label{req_time}
The total deployment and agent movement process must occur over a finite time horizon of length \(T_{\max}\). 
\end{rec}

\begin{rec}[Agent Limit \(N_{\max}\)] \label{req_n}
The total number of agents that may exist in the world at any time is bounded by a given constant \(N_{\max}\).
\end{rec}

Requirement~\ref{req_time} is motivated by the fact that under an infinite time horizon, the POCGAGP degenerates into the CGAGP. If $T_{\max} = \infty$, the agents can first explore the entire (finite) world within an arbitrarily large time horizon, and then compute an optimal static configuration offline, deploy agents, and establish a network. However, such an approach is impractical for large-scale or time-sensitive deployments in the real world.

Requirement~\ref{req_n} captures a resource constraint by upper-bounding the number of deployable agents, reflecting real-world limitations like hardware, cost, or energy. Relaxing this constraint trivializes the POCGAGP, since under an unconstrained maximal number of deployable agents, one can place an agent at every position, $x \in W$, of the world.

These bounds remove the need for an algorithm to find a solution that minimizes the number of agents and the number of time steps. Seeking such an optimal solution would likely be NP-hard, as in the AGP and CGAGP. In this setting, we instead aim to find a solution that respects the resource constraints, including time, as solving an NP-hard optimization problem online is impractical. In addition, given insufficient resources, $T_{max}$ and $N_{max}$ can be set such that the world cannot be covered in the POCGAGP setting. An example would be an $N_{max} = 1$ to cover the world in  Figure~\ref{fig:deployed}. In those settings, there never exists a solution to the POCGAGP. We constrain ourselves to settings where a solution could exist.


Now that we have elaborated on Requirements~\ref{req_time} and \ref{req_n}, let us now clarify the graph connectivity requirements for POCAGP which follows from the same requirement in the CGAGP. To do this we introduce the following concept:

\begin{defi}[Visibility graph]\label{def:vgraph}
Let $W \subset \mathbb R^N$ be a world and $ A\subseteq W$ be a set of locations of deployed agents. The {\it visibility graph} of $ A$ in $W$, denoted $G( A)$, consists of a vertex set $V( A) =  A$ and edge set
    \begin{align*}
        E( A) := \{\, (x,y) \,:\, x,\, y  \in A \text{ and  $x$ is observable from $y$}\, \}.
    \end{align*}
\end{defi} 

  In Definition \ref{def:vgraph}, an edge exists between two points in the world if one is observable by the other, {\it i.e.,} a point is visible from another if a drawn line segment between them lies entirely within the polygon, unobstructed. We formally define the field of view in Definition \ref{FOV}.

To ensure that the agents form a valid communication network at all times, we impose the following requirement.

\begin{rec}[Connectivity of the visibility graph]\label{req_con}
The visibility graph formed by a network of agents that collectively observe a world $W$ must be connected at all times. 
\end{rec}

\subsection{The POCGAGP on the orthogonal plane}

In this paper, we restrict our analysis to worlds $W\subset \mathbb R^2$, consistent with the classical AGP and CGAGP formulations on which we build. Many practical deployment settings, such as building floors, tunnels, or mine shafts, are effectively planar, with agents operating on a single navigable layer. Focusing on 2D orthogonal worlds thus preserves the key geometric and communication challenges of coverage and connectivity while still enabling rigorous theoretical guarantees and tractable computation.

We start with formal definitions required to establish our algorithms. 

\begin{defi}[Interior angle]\label{def:angle}
 For a set $S \subseteq \R^2$, $x \in S$, and $0^\circ \le r \le 360^\circ$, $x$ is called an $r^\circ$ angle in $S$ if
    \[\lim_{\varepsilon \downarrow 0} \frac{\lambda[B_\varepsilon(x)\cap S]}{\pi\cdot\varepsilon^2} = \frac{r}{360},
\]        
    whenever this limit is well defined. $\lambda$ denotes the Lebesgue measure and
    $B_\varepsilon(x)$ is the ball of radius $\varepsilon$ centered at $x$.  We may say $r^\circ$ angle if $S$ is not ambiguous.
\end{defi}

\begin{defi}[Orthogonal polygon] \label{def:ortp}
An \emph{orthogonal polygon} is a subset \( P \subset \mathbb{R}^2 \) whose border consists of straight line segments (called walls) that are all parallel to the Cartesian axes. The walls being either parallel or perpendicular to each other, every interior angle of \( P \) is either \( 90^\circ \) or \( 270^\circ \). The intersecting points of the walls at these angles are referred to as {\it corners}.
\end{defi}

\begin{defi}[Orthogonal world] \label{def:ortw}
An \emph{orthogonal world} is an open subset \( W \subset \mathbb{R}^2 \) that consists of an open orthogonal polygon $P$, taken away $h\in \mathbb Z_{\geq 0}$ holes, and having $n$ corners. A hole $H$ corresponds to a closed orthogonal polygon in the world $W$, which we denote $H_h \subset \R^2$. $W := P \setminus \bigcup_{i \le h} H_i$. Additionally, it is required that $W$ is connected as a contiguous space; in particular, removing the holes does not disconnect the world into multiple components.
\end{defi}


In this orthogonal world \(W\), let \(A \subseteq W\) be the set of deployed agents, with cardinality \(|A|\), responsible for covering the world while maintaining a connected visibility graph.

\begin{remark}
    In the POCGAGP setting, agents may be added or removed from the set $A$. As agents cannot teleport into or out of the world, agents enter or leave the world as needed via a fixed deployment point \(x_d \in W\), provided that \(|A| \leq N_{\max}\) at all times. The point \(x_d\) is modeled as a fixed pseudo-agent representing the ability of the MANET of agents \(A\) to communicate with the outside world.
\end{remark}


This pseudo-agent will not be contained by the set $A$, but it counts as an agent which must remain connected by Requirement~\ref{req_con}. 
Connectivity between two agents in $A$, or between an agent $a \in A$ and the pseudo-agent, is defined as follows:

\begin{asp}[Mutual visibility]\label{asp1}
The agents are equipped with sensors such that for any two agents $a_x$, $a_y\in W$, it holds that if $a_x$ observes agent $a_y$, then agent $a_y$ observes agent $a_x$. \end{asp}

\begin{remark}
Under Assumption \ref{asp1}, the visibility graph $G(A\cup\{x_d\})$ is undirected and, by Requirement \ref{req_con}, connected. In more general settings, however, $G(A\cup\{x_d\})$ does not need to be undirected: the graph may be directed whenever visibility between two points is not mutual, for example, due to asymmetric sensor types or environmental conditions.
\end{remark}

Under Assumption~\ref{asp1}, the field of view of \(A \cup \{x_d\}\) in \(W\) is the set of points in \(W\) visible to all deployed entities, defined as follows.

\begin{defi}[Field of view]\label{FOV}
    Given a world $W \subset \R^2$ and a subset $A\subseteq W$, the field of view of $A$ in $W$ is defined as 
    \begin{align*}
        F_{W}(A) = \bigcup_{a \in A } \big\{ x \in W :\ 
        & tx + (1-t)a \in W 
        & \forall\, t \in (0,1) \big\}.
    \end{align*}
    We may simply write $F(A)$ if $W$ is clear from the context.
    \label{defi:fov}
\end{defi}

That is, $F_{W}(A \cup \{x_d\})$  consists of the points $x\in W$ that lie within the line of sight of at least one element of \(A \cup \{x_d\}\). Each agent of a MANET placed at the position \(x_i \in A\) ``observes'' its own visible region \(F_W(\{x_i\})\), for instance, the area sensed by an ideal $360^\circ$ LiDAR centered at $x_i$. Consequently, the collective field of view \(F_W(A \cup \{x_d\})\) represents the portion of the world currently known to the agents, as if they shared their LiDAR measurements with others.

Putting this all together, let us now define the exact POCGAGP setting we look to solve:
\begin{defi}\label{def:POCGAGP2} {\it (POCGAGP on the orthogonal plane, with maintained coverage and connectivity)}
  Given an orthogonal world $W\subset \mathbb R^2$, form a MANET having no more than a given number $N_{\max}$ of agents, a pseudo-agent $x_d \in W$ and, in $T_{\max}$ units of time, make the network's agents assume positions $A \subseteq W$ in a way that, as the network self-forms in real time, for every instant $t < T_{\max}$, the visibility graph $G(A \cup \{x_d\})$ is connected. In addition, prior to that instant, the set of agents, denoted $A^-$ must satisfy
\[
 F_W(A^- \cup \{x_d\}) \subseteq F_W(A \cup \{x_d\}).
\]
That is, both the connectivity and coverage must be maintained at any moment, and at the end the world is fully covered and connected.
\end{defi}

The requirement that both the connectivity and the coverage are maintained at all times (which is only imposed at the end state for CGAGP) has a clear practical motivation. For instance, there may exist a network user, either a human or a robot, and located anywhere within the world \(W\), who relies on the network formed by \(A \cup \{x_d\}\) to communicate with the outside world or with other users. Once such a user becomes connected, maintaining that connection is essential.

To ensure efficient and effective coverage of the finite world  \(W\),  we make the following assumption.  
\begin{asp}\label{asp2}
The orthogonal world $W$ is not convex. 
\end{asp}
Since $W$ is orthogonal, Assumption \ref{asp2} means that the world contains at least one $270^\circ$ interior angle. 

\section{MANET-building Algorithms}

\label{Algo}

In this section, we present two MANET algorithms to address POCGAGP. The first, called Centralized Agent-Directed Exploration for Network Coverage (CADENCE), uses a centralized controller, but is distributed in nature. The second, called  Decentralized Agent-Directed Exploration for Network Coverage (DADENCE), is fully decentralized.

\subsection{CADENCE}
\label{sec:cadence}

The CADENCE algorithm achieves full coverage of an orthogonal world $W \subset \mathbb R^2$ by iteratively constructing a MANET with an underlying connected visibility graph, through the systematic deployment of agents at observed, {\it valid} \(270^\circ\) corners, which are defined as follows. 

\begin{defi}[Valid corner]\label{valid}
The valid corners of an orthogonal world $W$ correspond to all the $270^\circ$ interior angles in $W$, except for the top-left corners of holes. The term {\it top-left} corner of a hole refers to its corner that has the largest $y$-coordinate, and taking the corner with the smallest $x$-coordinate in case of equalities.
\end{defi} 

An example of which corners are valid and which are not is shown in Figure~\ref{fig:all_270}, where red circles indicate the valid corners.

\begin{remark}
    The choice to treat all $270^\circ$ corners as valid except those at the top-left of holes is purely a design choice in our setting. One could equally well designate the top-left, bottom-left, top-right, or bottom-right $270^\circ$ corner of each hole as invalid, provided the choice is applied consistently.
\end{remark}

\begin{algorithm}[h!]
\caption{CADENCE}\label{alg:CADENCE}
    \begin{algorithmic}[1]
    \State \textbf{Input :}  an unknown world $W$, a blackbox $\alpha$ (representing the sensor observations of the agents) such that $\alpha(X)$ outputs the valid corners in $F_W(X)$ for any $X \subset W$, and an initial deployment point $x_d \in W$.
  
    \State Set $A = \emptyset$, $S' := \alpha(\{x_d\})$.
    \label{cadln:2}
    \While{$S' \ne A$ {\bf and}  $S' \ne A\cup \{x_d\}$} \label{cadln:3}
    \State \parbox[t]{\dimexpr\linewidth-\algorithmicindent}{%
        Pick $x \in  S' \setminus (A\cup \{x_d\}) $   
    }\label{cadln:4}
    \State \parbox[t]{\dimexpr\linewidth-\algorithmicindent}{%
        Set $A = A\cup\{x\}$ 
    }\label{cadln:5}
    \State \parbox[t]{\dimexpr\linewidth-\algorithmicindent}{%
        Set $S' := \alpha(A\cup \{x_d\})$ 
    }\label{cadln:6}
    \EndWhile\label{cadln:7}
    \State \parbox[t]{\dimexpr\linewidth-\algorithmicindent}{%
        Set $\mathcal{F} = F(A \cup\{x_d\})$.
    }\label{cadln:8}
    \State \textbf{Output :} $A$, $\mathcal{F}$.\label{cadln:9}
    \end{algorithmic}    
\end{algorithm}

The design of CADENCE is inspired by the fact that, under Assumptions~\ref{asp1} and \ref{asp2}, a world $W\subset \mathbb R^2$ can be fully covered with a connected visibility graph by placing vertices at all reflexive corners (interior angles $r^\circ$, with $r > 180$), which in orthogonal worlds are exactly the $270^\circ$ corners. Classical results in computational geometry show that such a reflexive vertex visibility graph yields shortest paths between any two points, even if they are not mutually visible~\cite{lavalle2006planning,tan2009visibility,kapoor1988efficient,de2000computational,dudek2024computational,latombe2012robot,o2017visibility}. As part of this discussion regarding visibility graphs, the previously defined term of {\it valid corner} will be extensively used to discuss the vertices in the visibility graph. For eventual use in CADENCE, those valid corners will refer to the position of deployed agents.

Consequently, such a visibility graph necessarily (i) covers the entire world, otherwise some points could not be reached, and (ii) is connected, since any shortest path between two non–mutually visible points can be represented as a sequence of its edges. Another significant consequence of placing the vertices of a visibility graph at all reflexive corners of an orthogonal polygon is that 
\begin{equation}
  \label{453} 
|A| \leq M_{\mbox{\scriptsize refl}}, \quad M_{\mbox{\scriptsize refl}} := \frac{n + 4h - 4}{2},
\end{equation}

$n$ is the number of $90^\circ$ or $270^\circ$ interior angles in $W$, and $h$ is the number of holes. By placing the vertices of the visibility graph only to all the reflexive corners, as above, $|A|$ does not exceed $M_{\mbox{\scriptsize refl}}$. We show this in the following statement.

\begin{lem}\label{cadence_lem_bad}
An orthogonal world $W$ contains exactly $M_{\mbox{\scriptsize refl}}$ reflexive corners.
\end{lem}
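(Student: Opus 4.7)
The plan is a counting argument based on the turning-angle identity for simple closed orthogonal polygons: if such a polygon has $r$ interior $270^\circ$ angles and $c$ interior $90^\circ$ angles, then $c - r = 4$. This follows from the fact that the signed exterior angles along a simple closed curve traversed once counterclockwise sum to $360^\circ$, while each $90^\circ$ interior angle contributes an exterior turn of $+90^\circ$ and each $270^\circ$ interior angle contributes $-90^\circ$.

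Next, I would split the corner set of $W$ into contributions from the outer polygon $P$ and from each hole $H_i$. Let $R_0$ (resp. $C_0$) denote the number of reflexive (resp. convex) interior angles of $P$, and for each $i \le h$ let $r_i$ (resp. $c_i$) denote the number of reflexive (resp. convex) interior angles of $H_i$ viewed as a standalone simple orthogonal polygon. The turning-angle identity applied to $P$ and to each $H_i$ then gives $C_0 - R_0 = 4$ and $c_i - r_i = 4$ for every $i$. The key geometric observation is the angle inversion at holes: since $W = P \setminus \bigcup_i H_i$, the interior angle of $W$ at a corner $v$ of $H_i$ equals $360^\circ$ minus the interior angle of $H_i$ at $v$. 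Hence each convex corner of $H_i$ becomes a $270^\circ$ (reflexive) corner of $W$, and each reflexive corner of $H_i$ becomes a $90^\circ$ (convex) corner of $W$. Corners of $P$, by contrast, retain their angle type in $W$ because locally the interior of $W$ coincides with that of $P$ there.

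Let $R$ denote the total number of reflexive corners of $W$ and $n$ the total number of corners. Then
\[
R = R_0 + \sum_{i=1}^{h} c_i, \qquad n = (R_0 + C_0) + \sum_{i=1}^{h}(r_i + c_i).
\]
Computing $2R - n = (R_0 - C_0) + \sum_{i=1}^{h}(c_i - r_i) = -4 + 4h$, and rearranging yields $R = (n + 4h - 4)/2 = M_{\mbox{\scriptsize refl}}$, which is the desired equality.

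The main obstacle is less the algebra than the careful bookkeeping around the angle inversion at holes, together with a precise invocation of the turning-angle identity (which requires that $P$ and each $H_i$ be simple and orthogonal, both guaranteed by Definition~\ref{def:ortw}). Once the inversion is in place and the identity is applied separately to the outer boundary and each hole, the count follows by a direct sum.
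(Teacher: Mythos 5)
Your proposal is correct and follows essentially the same route as the paper: both decompose the corner count into the outer boundary plus the holes, use the orthogonal turning-angle identity (convex minus reflexive equals four) on each simple closed boundary, and invoke the angle inversion at hole corners; the paper simply states the resulting per-component counts ($\tfrac{n_{\text{outer}}}{2}-2$ and $\tfrac{n_H}{2}+2$) directly, whereas you derive them. Your bookkeeping and final algebra are sound.
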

\begin{proof}
%
Let $\mathcal{H}$ be the set of holes in $W$. The corners in an orthogonal world consist of $n_{\text{outer}}$ corners on the outer boundary of $W$ and $n_H$ corners on each hole $H \in \mathcal{H}$. The outer boundary contains ($\frac{n_{\text{outer}}}{2} - 2$) $270^\circ$-corners, and since the boundary of each hole $H \in \mathcal{H}$ is oriented oppositely, a hole with $n_H$ vertices contributes $\frac{n_H}{2} + 2$ such corners. Summing over all $h$ holes, the total number of $270^\circ$-corners is
\[
\frac{n_{\text{outer}}}{2} - 2 + \sum_{H \in \calH}\left( \frac{n_H}{2} + 2 \right) = \frac{n + 4h - 4}{2}.
\]
\end{proof}

Given the goal of POCGAGP, it is not required that the visibility graph contain the shortest paths between two non mutually visible points. In other words, it is possible that not all \textit{reflexive corners} need to host agents. We then prove that we can reduce the bounds of $M_{refl}$ (Eq.~\ref{453}). This is done by removing one reflexive corner of each hole from the visibility graph (of all reflexive corners) without losing either connectivity or coverage.

\begin{theorem}\label{cadence_thm}
Under Assumptions~\ref{asp1} and \ref{asp2}, an orthogonal world $W$ is fully covered with a connected visibility graph by placing vertices at all valid corners, and the number of vertices is 
\begin{equation}
  \label{457} 
M_{\mbox{\scriptsize valid}} := \frac{n + 2h - 4}{2}. 
\end{equation}
\end{theorem}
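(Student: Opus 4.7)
The plan is to leverage Lemma~\ref{cadence_lem_bad} together with the classical fact, invoked just before the theorem statement, that placing agents at every reflexive corner yields a visibility graph that both covers $W$ and is connected. The cardinality bound then falls out immediately: the valid corners differ from the reflexive corners by exactly the $h$ deleted top-left hole corners, so $M_{\text{valid}} = M_{\text{refl}} - h = \frac{n+4h-4}{2} - h = \frac{n+2h-4}{2}$. What remains is to show that, for each hole $H$, removing the top-left corner $c^*$ of $H$ from the reflexive visibility graph preserves both coverage and connectivity.

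For coverage, I would fix a point $p\in W$ visible from $c^*$ and exhibit another valid corner that also sees $p$. Let $c_1$ denote the corner of $H$ adjacent to $c^*$ along $H$'s top edge and $c_2$ the corner adjacent along $H$'s left edge; a short orientation check, using that $c^*$ is the unique top-left corner of $H$, shows both $c_1$ and $c_2$ are reflexive and hence valid. Because $H$ sits locally in the closed lower-right quadrant of $c^*$, the segment $[c^*,p]$ avoids $H$ and $p$ lies in the closure of the upper, upper-left, or lower-left sector of $c^*$; for $p$ in the upper or upper-right direction I would shift the sightline to $c_1$, and for $p$ in the left or lower-left direction I would shift to $c_2$. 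In each case the shifted segment hugs the relevant edge of $H$ and, by the orthogonal structure of $W$, should inherit obstruction-freeness from $[c^*,p]$.

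For connectivity, it suffices to check that $c^*$ is not a cut vertex of the reflexive visibility graph. The remaining corners of $H$ form a boundary cycle around $H$ in which consecutive pairs are mutually visible along $H$'s edges, so $c_1$ and $c_2$ are connected by a path in the reduced graph that goes ``the other way'' around $H$; consequently any path originally routed through $c^*$ can be rerouted. Since the same surgery can be performed independently for each hole, the valid-corner visibility graph remains connected.

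The main obstacle I anticipate is the shifting step in the coverage argument. Translating the endpoint from $c^*$ to $c_1$ or $c_2$ moves the sightline into an open strip adjacent to an edge of $H$, and a priori the presence of other holes or outer-wall concavities could introduce a new blockage even though $[c^*,p]$ was clear. A rigorous version will likely require a staircase-style case analysis exploiting axis-alignment: one extends the top (respectively, left) edge of $H$ until the first wall and argues, via the orthogonal geometry of $W$ and Assumption~\ref{asp2}, that any obstruction of the shifted segment would already have blocked the original one, with a short limiting argument to handle points lying exactly on extended edges.
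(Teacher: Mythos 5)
Your route is genuinely different from the paper's --- the paper transforms $W$ into a hole-free polygon $W'$ by attaching an arbitrarily thin axis-aligned wall from each top-left hole corner to the outer boundary or to another hole, notes that the reflexive corners of $W'$ form a subset of the valid corners of $W$, and invokes the classical hole-free result; you instead keep $W$ fixed and delete the top-left corner $c^*$ of each hole from the all-reflexive-corner guard set, transferring its visibility to its two boundary neighbours. The difficulty is that the local containment your coverage argument rests on, $F(\{c^*\})\subseteq F(\{c_1\})\cup F(\{c_2\})$, is false. Take $H=[10,11]^2$, so $c^*=(10,11)$, $c_1=(11,11)$, $c_2=(10,10)$, and let $p=(0,22)$. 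Insert two thin rectangular holes occupying $y\in[16.4,16.6]$ with $x\in[1,4.9]$ and $x\in[5.1,9]$. The segment $[c^*,p]$ threads the gap (its $x$-coordinate stays in $(4.9,5.1)$ across that band), but $[c_1,p]$ crosses the right rectangle ($x=5.5$ at $y=16.5$) and $[c_2,p]$ crosses the left one ($x\approx 4.58$), so $p$ sees $c^*$ and neither neighbour. This also defeats your anticipated repair: here the obstruction of the shifted segment does \emph{not} block the original one, so no staircase or limiting argument of the kind you sketch can rescue the pointwise claim. The theorem survives only because $p$ sees valid corners of the slit-forming holes --- a global fact that your per-hole surgery does not capture, and which is exactly what the paper's wall-addition construction supplies by reducing to the hole-free case.

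Two further gaps. First, for a non-convex hole the boundary neighbours of $c^*$ need not both be reflexive: for the L-shaped hole with vertices $(0,0),(2,0),(2,2),(1,2),(1,1),(0,1)$ one has $c^*=(1,2)$, and its left-edge neighbour $(1,1)$ is a $90^\circ$ corner of $W$, hence not a valid corner, so the "short orientation check" fails. Second, the connectivity argument shows only that $c_1$ and $c_2$ stay connected after deleting $c^*$; to conclude $c^*$ is not a cut vertex you must place \emph{all} of its visibility-neighbours in one component of the reduced graph, and the "boundary cycle" you invoke passes through $90^\circ$ corners of $W$ that carry no guard, so consecutive \emph{valid} corners along the hole boundary need not be mutually visible. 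Only the counting step $M_{\mbox{\scriptsize valid}}=M_{\mbox{\scriptsize refl}}-h$ is solid and coincides with the paper's.
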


\begin{proof}
Given an orthogonal world $W \subset \mathbb{R}^2$, we construct a modified copy $W'$ by adding arbitrarily thin walls (of negligible width). Each added wall has one endpoint at an invalid (top-left corner of the hole) $270^\circ$ corner and extends into free space until it meets either the outer boundary of $W$ without intersecting any hole, or the boundary of a hole. By convention, all the added walls are assumed to be either horizontal or vertical to prevent creating disconnected spaces in $W'$.

This transformation does not disconnect $W'$, since for any two points in $W'$, a continuous path still exists. This transformation effectively reduces the number of necessary reflexive corners by exactly one per hole. This action can only reduce the set of reflexive angles, making the set of reflexive angles in $W'$ a subset of those in $W$. This construction is illustrated in Figures~\ref{fig:all_270}-\ref{fig:valid}: starting from the world where agents are deployed at all reflexive corners (Figure~\ref{fig:all_270}), one hole is connected to the outer wall (Figure~\ref{fig:connect_1}), then holes are connected (Figure~\ref{fig:connect_2}), until a hole-free world remains (Figure~\ref{fig:valid}). To note, once a hole is connected to part of the outer boundary, it is no longer a hole but part of the outer boundary.

The transformed $W'$ no longer contains the holes of $W$, and thus all of its reflexive corners are valid corners. It must be noted that $W'$ might have fewer valid corners than $W$, as in certain cases the thin wall construction removes a valid angle, as seen in Figure~\ref{fig:connect_2}. This construction of thin walls never creates a new valid corner. Given the set of valid corners in $W'$, there exists a connected visibility graph that covers the full world with at most the set of valid corners. By definition, the number of valid corners is 
\[M_{\mbox{\scriptsize valid}}= M_{\mbox{\scriptsize refl}}- h= \frac{n + 2h - 4}{2}. \]

\end{proof}

\begin{proposition}\label{cadence_prop}
Given an orthogonal world $W\subset \mathbb R^2$ satisfying Assumptions~\ref{asp1} and \ref{asp2}, the algorithm CADENCE forms a MANET having no more than $M_{\mbox{\scriptsize valid}}$ agents, in finite time, and always halts having covered the whole world. Moreover, at any time during the execution of CADENCE, the visibility graph $G(A \cup \{x_d\})$ is connected and the world coverage is maintained.
\end{proposition}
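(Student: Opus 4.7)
The plan is to break the proposition into four sub-claims and dispatch them in an order that reuses earlier work: (i) an upper bound $|A|\le M_{\mbox{\scriptsize valid}}$, (ii) finite-time termination, (iii) the two running invariants (connectivity and non-decreasing coverage), and (iv) full coverage at halt. Throughout, the key structural fact I would lean on is Theorem~\ref{cadence_thm}: the set $V$ of all valid corners of $W$ induces a visibility graph that is connected and covers $W$, with $|V|=M_{\mbox{\scriptsize valid}}$.

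For the first two sub-claims I would argue jointly. By inspection of lines~\ref{cadln:4}--\ref{cadln:6}, the set $A$ is strictly increasing at every iteration, and every newly added $x$ lies in $\alpha(A\cup\{x_d\})$, hence is a valid corner of $W$. Consequently $A\subseteq V$ is maintained, so $|A|\le|V|=M_{\mbox{\scriptsize valid}}$, and the while-loop executes at most $M_{\mbox{\scriptsize valid}}$ iterations, each performing a bounded amount of work, before halting. For the running invariants I would induct on the iteration count. Initially $G(\{x_d\})$ is a single vertex, trivially connected. Inductively, whenever an $x$ is chosen, $x\in\alpha(A\cup\{x_d\})$ implies $x$ is visible from some $y\in A\cup\{x_d\}$; by Assumption~\ref{asp1}, the edge $(x,y)$ lies in $E(A\cup\{x\}\cup\{x_d\})$, attaching $x$ to the previously connected graph. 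Coverage monotonicity $F_W(A^-\cup\{x_d\})\subseteq F_W(A\cup\{x_d\})$ is immediate from Definition~\ref{defi:fov}, since the union defining $F_W$ only grows when agents are added.

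The main obstacle, and really the only nontrivial step, is sub-claim (iv): at termination $F_W(A\cup\{x_d\})=W$. I plan to prove the stronger statement that $V\subseteq A\cup\{x_d\}$ at termination, from which $F_W(A\cup\{x_d\})\supseteq F_W(V)=W$ follows by monotonicity of $F_W$ and Theorem~\ref{cadence_thm}. I would argue by contradiction: suppose some $v\in V\setminus(A\cup\{x_d\})$ survives at termination. Since $x_d\in W=F_W(V)$, there is some $v_0\in V$ from which $x_d$ is visible; appending the edge $(v_0,x_d)$ to the connected visibility graph on $V$ given by Theorem~\ref{cadence_thm} yields a connected visibility graph on $V\cup\{x_d\}$. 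Take a path in this graph from $v$ to some vertex of $A\cup\{x_d\}$, let $y$ be the first vertex on the path that lies in $A\cup\{x_d\}$, and let $z$ be its immediate predecessor. Then $z\notin A\cup\{x_d\}$, yet $z$ is a valid corner visible from $y\in A\cup\{x_d\}$, so $z\in\alpha(A\cup\{x_d\})=S'$. Hence $z\in S'\setminus(A\cup\{x_d\})$, meaning the while-loop guard on line~\ref{cadln:3} is still satisfied, contradicting the assumption that CADENCE had halted. This closes the last sub-claim and proves the proposition.
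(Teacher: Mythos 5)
Your proof is correct, and its overall skeleton matches the paper's: both arguments reduce everything to Theorem~\ref{cadence_thm}, establish that $A$ is always a growing subset of the set $V$ of valid corners (giving the $M_{\mbox{\scriptsize valid}}$ bound and termination in at most $|V|$ iterations), verify the connectivity and coverage invariants by noting that each newly added corner is visible from an already-deployed position (Assumption~\ref{asp1}), and then show full coverage by proving that no valid corner can remain unoccupied when the loop guard on line~\ref{cadln:3} fails. Where you genuinely diverge is in the execution of that last, critical step. The paper argues by contradiction that an unoccupied valid corner hidden from every point of $A\cup\{x_d\}$ would force $(A\cup\{x_d\})\cap W=\emptyset$; as written, that inference is loose, since a hidden corner failing to see any \emph{agent} does not obviously conflict with $F(V)=W$ (the agents could be covered by other, occupied corners). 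Your argument instead exploits the \emph{connectivity} half of Theorem~\ref{cadence_thm}: attach $x_d$ to the connected visibility graph on $V$, walk along a path from a surviving unoccupied corner toward $A\cup\{x_d\}$, and extract the first occupied vertex $y$ together with its unoccupied predecessor $z\in V$; then $z\in\alpha(A\cup\{x_d\})\setminus(A\cup\{x_d\})$ contradicts the loop guard. This is a cleaner and fully rigorous version of the step the paper handles informally, at the modest cost of needing the auxiliary observation that $x_d$ sees some $v_0\in V$ (immediate from $F_W(V)=W$ and Definition~\ref{defi:fov}). One pedantic point you leave implicit, as does the paper: the pick on line~\ref{cadln:4} is well defined because $A\subseteq S'$ always holds (each deployed agent is a valid corner lying in its own field of view), so the guard being true guarantees $S'\setminus(A\cup\{x_d\})\neq\emptyset$.
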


\begin{proof}
After Theorem \ref{cadence_thm}, for an orthogonal world satisfying Assumptions~\ref{asp1} and \ref{asp2} to be fully covered, by no more than $M_{\mbox{\scriptsize valid}}$ agents, it suffices to deploy agents to all the valid corners. We show that the CADENCE algorithm does precisely that, and it halts only after placing agents at all valid corners. 

CADENCE is initialized with no agents deployed, i.e., with $A = \emptyset$. Therefore, initially, the only portion of the world that is covered corresponds to what is visible from the deployment point $x_d$; this corresponds to $\mathcal{F}:=F(\{x_d\})$. Given Assumption \ref{asp2}, there exists an initially unknown set of valid corners $S \neq \emptyset$.

Until the algorithm halts it is only pertinent to assume that at least one valid corner is not occupied, because otherwise, $F(A \cup\{x_d\}) = F(G)$, where $G$ is the set of vertices in the connected visibility graph generated by Theorem \ref{cadence_thm}, making  $F(A \cup\{x_d\})= W$. Now, assume that among the unoccupied valid corners, at least one is visible from points $x \in A \cup\{x_d\}$. Therefore, $S'\neq A$ and $S' \ne (A\cup \{x_d\})$ and, consequently, the \texttt{while} loop at {\it lines~\ref{cadln:3}--\ref{cadln:7}} is executed until $S'= A$ or $S' = (A\cup \{x_d\})$; that is, until all the visible corners are successively occupied by an agent or the deployment point. Then, either no valid corner remains unoccupied and, by Theorem \ref{cadence_thm}, $F(A \cup\{x_d\}) = W$, or all of the remaining unoccupied corners are hidden from points $x \in A \cup\{x_d\}$. 

In this latter case, the set of hidden corners is $ S \setminus S'$, so let $x^h\in  S \setminus S' $ denote the position of a valid corner hidden from any point $x \in A \cup\{x_d\}$. By the mutual-visibility hypothesis (Assumption \ref{asp1}), the positions $x \in A \cup\{x_d\}$ are also hidden from $x^h\in S \setminus S' $. This forms a contradiction. By Theorem \ref{cadence_thm}, there exists a visibility graph $G$ such that $F(G)=W$, and $G$ is composed of valid corners which necessarily contain vertices at the positions $x^h$. From the latter, the fact that $F(G)=W$, and that the positions  $x \in A \cup\{x_d\}$ are hidden from any $x^h\in  S \setminus S' $, the only way this is possible (given theorem 1) is that $\{A \cup\{x_d\}\}\cap W =\emptyset$, which itself is impossible. Thus, no unoccupied valid corner remains hidden, and upon termination, CADENCE forms a graph $G$ that covers the entire world $W$. Since $|G| = M_{\text{valid}}$, the CADENCE result follows.

Finally, note that as CADENCE runs as described above, every time that {\it line~\ref{cadln:5}}  is executed, the actual visibility graph remains connected (in view of Assumption \ref{asp2}) and the actual partial coverage of the world is maintained or expanded. 
\end{proof}

Given this bound, the key strength of CADENCE is that it is agnostic to prior knowledge of the world: in the online POCGAGP, it can deploy agents to form a connected MANET that fully covers the world while achieving the same worst-case upper bound on the number of agents as if the deployment were planned in a fully known environment. This means that CADENCE performs equally well in both POCGAGP and CGAGP settings.

\begin{figure}
    \centering
    \begin{subfigure}[t]{0.45\columnwidth}
        \centering
        \begin{tikzpicture}
            \draw[thick] (0.5,0) -- (0.5,3) -- (4,3) -- (4,0) -- (0.5,0);
            \fill (1,0) -- (2,0) -- (2,0.5) -- (1,0.5) -- cycle;
            \fill (1,1) -- (2,1) -- (2,2.5) -- (1.5,2.5) -- (1.5,1.5) -- (1,1.5) -- cycle;
            \fill (2.5,1) -- (3.5,1) -- (3.5,2.5) -- (2.5,2.5) -- (2.5,2) -- (3,2) -- (3,1.5) -- (2.5,1.5) -- cycle;
            \fill (2.5,0.75) -- (3,0.75) -- (3,0.25) -- (2.5,0.25) -- cycle;

            \draw[fill=white, draw=white] (1.035,-0.1) -- (1.965,-0.1) -- (1.965,0.465) -- (1.035,0.465) -- cycle;

            \fill[red] (1,1) circle (0.1cm);
            \fill[red] (2,1) circle (0.1cm);
            \fill[red] (2,2.5) circle (0.1cm);
            \fill[red] (1,1.5) circle (0.1cm);
            \fill[red] (2.5,1) circle (0.1cm);
            \fill[red] (2.5,1.5) circle (0.1cm);
            \fill[red] (2.5,2) circle (0.1cm);
            \fill[red] (3.5,1) circle (0.1cm);
            \fill[red] (3.5,2.5) circle (0.1cm);
            \fill[darkgreen] (1.5,2.5) circle (0.1cm);
            \fill[darkgreen] (2.5,2.5) circle (0.1cm);
            
            \fill[darkgreen] (2.5,0.75) circle (0.1cm);
            \fill[red] (3,0.75) circle (0.1cm);
            \fill[red] (3,0.25) circle (0.1cm);
            \fill[red] (2.5,0.25) circle (0.1cm);
            
            \fill[red] (2,.5) circle (0.1cm);
            \fill[red] (1,.5) circle (0.1cm);

        \end{tikzpicture}
        \subcaption{Original polygon $W$.}
        \label{fig:all_270}
    \end{subfigure}
    \hfill
    \begin{subfigure}[t]{0.45\columnwidth}
        \centering
        \begin{tikzpicture}
            \draw[thick] (0.5,0) -- (0.5,3) -- (4,3) -- (4,0) -- (0.5,0);
            \fill (1,0) -- (2,0) -- (2,0.5) -- (1,0.5) -- cycle;
            \draw[fill=white, draw=white] (1.035,-0.1) -- (1.965,-0.1) -- (1.965,0.465) -- (1.035,0.465) -- cycle;
            \fill (1,1) -- (2,1) -- (2,2.5) -- (1.5,2.5) -- (1.5,1.5) -- (1,1.5) -- cycle;
            \fill (2.5,1) -- (3.5,1) -- (3.5,2.5) -- (2.5,2.5) -- (2.5,2) -- (3,2) -- (3,1.5) -- (2.5,1.5) -- cycle;
            \fill (2.5,0.75) -- (3,0.75) -- (3,0.25) -- (2.5,0.25) -- cycle;

            \fill[blue] (1.5,2.5) -- (1.5,3) -- (1.6,3) -- (1.6,2.5) -- cycle;
            
            \fill[red] (1,1) circle (0.1cm);
            \fill[red] (2,1) circle (0.1cm);
            \fill[red] (2,2.5) circle (0.1cm);
            \fill[red] (1,1.5) circle (0.1cm);
            \fill[red] (2.5,1) circle (0.1cm);
            \fill[red] (2.5,1.5) circle (0.1cm);
            \fill[red] (2.5,2) circle (0.1cm);
            \fill[red] (3.5,1) circle (0.1cm);
            \fill[red] (3.5,2.5) circle (0.1cm);
            \fill[darkgreen] (2.5,2.5) circle (0.1cm);
            
            \fill[darkgreen] (2.5,0.75) circle (0.1cm);
            \fill[red] (3,0.75) circle (0.1cm);
            \fill[red] (3,0.25) circle (0.1cm);
            \fill[red] (2.5,0.25) circle (0.1cm);
            
            \fill[red] (2,.5) circle (0.1cm);
            \fill[red] (1,.5) circle (0.1cm);
        \end{tikzpicture}
        \subcaption{Reducing one hole by connecting one invalid reflexive corner to the outer wall.}
        \label{fig:connect_1}
    \end{subfigure}

    \par\bigskip

    \begin{subfigure}[t]{0.45\columnwidth}
        \centering
        \begin{tikzpicture}
            \draw[thick] (0.5,0) -- (0.5,3) -- (4,3) -- (4,0) -- (0.5,0);
            \fill (1,0) -- (2,0) -- (2,0.5) -- (1,0.5) -- cycle;
            \draw[fill=white, draw=white] (1.035,-0.1) -- (1.965,-0.1) -- (1.965,0.465) -- (1.035,0.465) -- cycle;
            \fill (1,1) -- (2,1) -- (2,2.5) -- (1.5,2.5) -- (1.5,1.5) -- (1,1.5) -- cycle;
            \fill (2.5,1) -- (3.5,1) -- (3.5,2.5) -- (2.5,2.5) -- (2.5,2) -- (3,2) -- (3,1.5) -- (2.5,1.5) -- cycle;
            \fill (2.5,0.75) -- (3,0.75) -- (3,0.25) -- (2.5,0.25) -- cycle;

            \fill[blue] (1.5,2.5) -- (1.5,3) -- (1.6,3) -- (1.6,2.5) -- cycle;
            \fill[blue] (2.5,0.75) -- (2.5,1) -- (2.6,1) -- (2.6,0.75) -- cycle;
            
            \fill[red] (1,1) circle (0.1cm);
            \fill[red] (2,1) circle (0.1cm);
            \fill[red] (2,2.5) circle (0.1cm);
            \fill[red] (1,1.5) circle (0.1cm);
            \fill[red] (2.5,1.5) circle (0.1cm);
            \fill[red] (2.5,2) circle (0.1cm);
            \fill[red] (3.5,1) circle (0.1cm);
            \fill[red] (3.5,2.5) circle (0.1cm);
            \fill[darkgreen] (2.5,2.5) circle (0.1cm);
            
            \fill[red] (3,0.75) circle (0.1cm);
            \fill[red] (3,0.25) circle (0.1cm);
            \fill[red] (2.5,0.25) circle (0.1cm);
            
            \fill[red] (2,.5) circle (0.1cm);
            \fill[red] (1,.5) circle (0.1cm);
        \end{tikzpicture}
        \subcaption{Connecting a hole to another hole.}
        \label{fig:connect_2}
    \end{subfigure}
    \hfill
    \begin{subfigure}[t]{0.45\columnwidth}
        \centering
        \begin{tikzpicture}
            \draw[thick] (0.5,0) -- (0.5,3) -- (4,3) -- (4,0) -- (0.5,0);
            \fill (1,0) -- (2,0) -- (2,0.5) -- (1,0.5) -- cycle;
            \draw[fill=white, draw=white] (1.035,-0.1) -- (1.965,-0.1) -- (1.965,0.465) -- (1.035,0.465) -- cycle;
            \fill (1,1) -- (2,1) -- (2,2.5) -- (1.5,2.5) -- (1.5,1.5) -- (1,1.5) -- cycle;
            \fill (2.5,1) -- (3.5,1) -- (3.5,2.5) -- (2.5,2.5) -- (2.5,2) -- (3,2) -- (3,1.5) -- (2.5,1.5) -- cycle;
            \fill (2.5,0.75) -- (3,0.75) -- (3,0.25) -- (2.5,0.25) -- cycle;

            \fill[blue] (2.5,2.5) -- (2.5,3) -- (2.6,3) -- (2.6,2.5) -- cycle;
            \fill[blue] (1.5,2.5) -- (1.5,3) -- (1.6,3) -- (1.6,2.5) -- cycle;
            \fill[blue] (2.5,0.75) -- (2.5,1) -- (2.6,1) -- (2.6,0.75) -- cycle;
            
            \fill[red] (1,1) circle (0.1cm);
            \fill[red] (2,1) circle (0.1cm);
            \fill[red] (2,2.5) circle (0.1cm);
            \fill[red] (1,1.5) circle (0.1cm);
            \fill[red] (2.5,1.5) circle (0.1cm);
            \fill[red] (2.5,2) circle (0.1cm);
            \fill[red] (3.5,1) circle (0.1cm);
            \fill[red] (3.5,2.5) circle (0.1cm);
            
            \fill[red] (3,0.75) circle (0.1cm);
            \fill[red] (3,0.25) circle (0.1cm);
            \fill[red] (2.5,0.25) circle (0.1cm);
            
            \fill[red] (2,.5) circle (0.1cm);
            \fill[red] (1,.5) circle (0.1cm);
        \end{tikzpicture}
        \subcaption{Formation of the full polygon $W'$ with no holes.}
        \label{fig:valid}
    \end{subfigure}

    \caption{Transformation of the polygon $W$ into $W'$ to remove all invalid reflexive corners (in green) while retaining only the valid corners (in red). The added walls for $W'$ are in dark blue.}
    \label{fig:polygon_transform}
\end{figure}
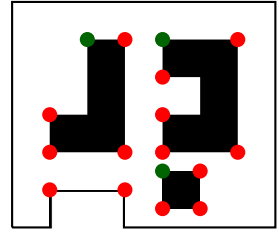
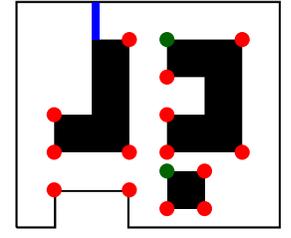
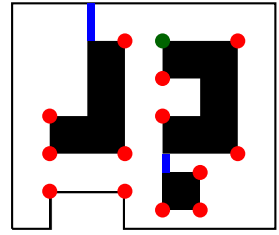
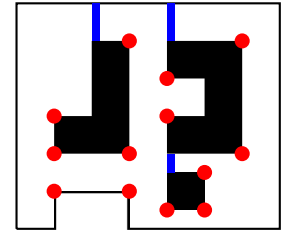

\subsection{DADENCE}
\label{sec:dadence}
Our second algorithm is decentralized: a set of agents are sequentially spawned in at an initial point $x_d$ in an orthogonal polygon $W$ and subsequently moves to strategically chosen positions.
More precisely, at a given instant in the process, a target point $x^*$ at the edge of the field of view (we will define as the border in Definition \ref{def:border}) 
is selected. The point $x^*$ is selected such that the average distance to all agents and the deployment point is minimized. Once the $x^*$ is selected, the agents advance towards it in steps of fixed length until the agents can observe beyond $x^*$ ({\it i.e.,} until $x^*$ is no longer at the edge of the field of view). Because the steps by which the agents advance are of fixed length, it is natural to consider that the world is a connected subset of $\mathbb{Z}^2$ but not $\mathbb R^2$ ({\it i.e.,} a discretized world). We stress that this entails no loss of generality, since the step size is arbitrary and can be chosen arbitrarily small.

We then introduce the following concepts, along with the discrete world.
\begin{defi}[Real embedding]
Given a subset in a discrete world $A \subset \Z^2$, the real embedding of $A$ is defined as 
    \begin{align*}
        \R(A) := \bigcup_{s \in A} \left\{x \in \R^2 \,:\, \max(| x_1 - s_1 |,|x_2-s_2|) \le \tfrac{1}{2}\right\}\,.
    \end{align*}
\end{defi}
In particular, for a given point $x \in \Z^2$, its real embedding, denoted $\mathbb R(\{x\})$, corresponds to a square region around that point and is called a {\it cell}.\label{cell}

\begin{defi}[Discrete field of view]
    \label{WFOV}
  Let $W$ be a discrete world and $A$ be a subset of $W$. For $x \in W$, let 
    \begin{align*}
        \calD(x) := \{y \in W \,:\, \R(\{y\})\cap F_{\R(W)}(\{x\}) \ne \emptyset\}, 
    \end{align*}
where $F_{(\cdot)}(\cdot)$ is as in Definition \ref{FOV}. Then, the discrete field of view of $A$ in $W$ is defined as
    \begin{align*}
        \tilde{F}_W(A) = \bigcup_{x \in A}\{y \in W \,:\, y \in \calD(\{x\}) \text{ or } x \in \calD(\{y\})\}.\
    \end{align*}
We may simply write $\tilde{F}(A)$ if $W$ is not ambiguous.
\end{defi}

Similar to the continuous worlds, two points $x_a$ and $x_b$ of $W$ are in each other's discrete field of view if there exists an unobstructed straight line in $\R(W)$ between $x_a$ and a point in the cell $\R(\{x_b\})$ or there exists an unobstructed straight line between $x_b$ and a point in the cell $\R(\{x_a\})$.


Note that in a discretized world, agents can move towards their given target $x^*$, advancing only in the North, South, East, or West directions. That is, along paths from one cell to another that are adjacent. This leads to the following concept.

\begin{defi}[Square graph]
\label{square_graph}
  Given $W \subset \Z^2$, the square graph of $W$, denoted $\bar{G}(W)$, has vertex set $\bar{V}(W) = W$ and edge set $\bar{E}(W)$ consisting of all pairs of vertices $u$ and $v$ with
    \begin{align*}
        |u_1 - v_1| + |u_2 - v_2| = 1\,.
    \end{align*}
\end{defi}

Then, for the agents to move from their location towards a point $x^*$ in the border of the discrete field of view optimally, they follow a shortest path on the square graph through strictly closer neighbors. We will now introduce these concepts:

\begin{nota}[Shortest path]
For two vertices $u,v \in \bar{V}(W)$, we denote the length of a shortest path on the graph $\bar{G}(W)$, from $u$ to $v$, as $L_{u,v}$ .
\end{nota}

\begin{defi}[Strictly closer neighbors]
    For two vertices $u,v \in V$, we denote as
    \begin{align*}
        N_v(u) := \{w \in \bar{V}(W) \,:\, (u,w) \in \bar{E}(W),\, L_{w,v} < L_{u,v}\}, 
    \end{align*}
the set of neighbors of $u$ that belong to shortest paths connecting $u$ and $v$. 
\end{defi}

\begin{defi}[Border]
\label{def:border}
 Given a partially observed world $F \subset W$, we define the border of $F$ within $W$ as
    \begin{align*}
        B_W(F) := \{x \in F \,:\, x \text{ has a neighbor } y \in W \text{ where } y \notin F\}. \\
    \end{align*}
Above, neighbors refer to the adjacent vertices in the square graph $\bar{G}(W)$. We may simply write $B(F)$ if $W$ is not ambiguous.
\end{defi}

We remark that, under Definitions~\ref{WFOV} and~\ref{def:border}, an exception arises when two cells intersect only at a corner (Figure~\ref{fig:forbidden-diagonal}). In this configuration, a line segment between the cell centers passes through the shared corner. As a result, the field of view incorrectly includes a region of $W$ for which no valid path exists for any agent to reach that border point (per Definition~\ref{square_graph}), creating an artificial and unrealistic “see-through-the-corner” effect. To rule out this undesired case, we impose the following assumption.
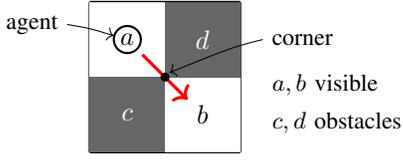
\begin{figure}[t]
  \centering
  \begin{tikzpicture}[scale=1]

    \foreach \x in {0,1,2} {
      \draw[thick] (\x,0) -- (\x,2);
    }
    \foreach \y in {0,1,2} {
      \draw[thick] (0,\y) -- (2,\y);
    }

    \fill[white!15] (0,1) rectangle (1,2);  
    \fill[white!15] (1,0) rectangle (2,1);  

    \fill[black!60] (1,1) rectangle (2,2);  
    \fill[black!60] (0,0) rectangle (1,1);  

    \node at (0.5,1.5) {$a$};
    \node[white] at (1.5,1.5) {$d$};
    \node[white] at (0.5,0.5) {$c$};
    \node at (1.5,0.5) {$b$};

    \draw[thick] (0.5,1.5) circle (0.18);

    \draw[red,very thick,->] (0.7,1.3) -- (1.3,0.7);

    \fill[black] (1,1) circle (0.06);

    \node[anchor=west] (cornerlabel) at (2.3,1.5) {\small corner};
    \draw[->] (cornerlabel.west) .. controls (1.9,1.4) .. (1.06,1.06);

    \node[anchor=east] (agentlabel) at (-0.3,1.7) {\small agent};
    \draw[->] (agentlabel.east) .. controls (0.0,1.65) .. (0.32,1.55);

    \node[anchor=west,align=left] at (2.3,0.9) {\small $a,b$ visible};
    \node[anchor=west,align=left] at (2.3,0.4) {\small $c,d$ obstacles};

  \end{tikzpicture}
  \caption{Forbidden diagonal line-of-sight: Assumption~\ref{asp3} rules out configurations where a top-left cell sees a bottom-right cell through a single diagonal corner between two obstacle cells (cf. Definition~\ref{WFOV}).}
  \label{fig:forbidden-diagonal}
\end{figure}

\begin{asp}
Consider any $2\times 2$-portion of the square graph $\bar G(W)$ as seen in Figure~\ref{fig:forbidden-diagonal}, formed by cells $a$, $b$, $c$, and $d$.  If $a, \, b \in W$, then, necessarily either $c \in W$ or $d\in W$. Similarly, if $c, \, d \in W$, then, necessarily either $a \in W$ or $b\in W$.
\label{asp3}
\end{asp}

Equivalently, two cells in $W$ may share a corner only if they are both adjacent to the same cell.

As the agents move towards the target $x^*$ as explained previously, they must form a connected visibility graph, which is defined as follows. 

\begin{defi}[Discrete visibility graph]
  Let $W$ be a world and $A$ a finite subset of $W$. The discrete visibility graph  of $A$ in $W$, denoted $\tilde{G}(A)$, has vertex set $\tilde{V}(A) = A$ and edge set
    \begin{align*}
        \tilde E(A) = \{xy \,:\, x,y \in A \text{ and } x \in \tilde{F}(\{y\})\}.
    \end{align*}
\end{defi}

\begin{algorithm}
\caption{DADENCE}\label{algo:dadence}
\begin{algorithmic}[1]
\State \textbf{Input:} an unknown world $W$, a blackbox $\beta$ (representing the sensor observations of the agents) such that $\beta(X) = F_W(X)$ for any $X \subset W$, and an initial deployment point $x_d \in W$.
  \State  Set $A = \emptyset$ and  $\mathcal{F} = \beta(\{x_d\})$.\label{832}
  \While{$B(\mathcal F) \neq \emptyset$}\label{833,37}
   \If{ $A = \emptyset$ } $A=\{x_d\}$ \label{833} \EndIf
    \State $x^* = \argmin_{x \in B(\mathcal{F})} \sum_{s \in A\cup \{x_d\}} L_{x,s}$ \label{dadln:6}
    
    \If{$x_d \in A$} \label{835}
      \State set $x = x_d$\label{1046} 
      \While{$x \in A$}\label{wh:837} 
        \State pick $y \in N_{x^*}(\{x\})$ and set $x = y$
      \EndWhile\label{wh:839}
      \State $A = A \cup \{x\} \setminus \{x_d\}$ \label{840}
      \State $\mathcal{F} = \mathcal{F} \cup \beta(\{x\})$\label{1051} 
    \Else\label{el:842}
      
      \State $A' = A$; \quad $\mathcal{S} = \emptyset$ \label{1054}
      \ForAll{$s \in A$}\label{for:844}
        \If{$N_{x^*}(s)\setminus A' \neq \emptyset$}
          \State $\mathcal{S} = \mathcal{S} \cup \{s\}$
          \State pick $x_s \in N_{x^*}(s)\setminus A'$
          \State $A' = A' \cup \{x_s\} \setminus \{s\}$
        \EndIf
      \EndFor \label{1061}
      \While{$\tilde G(A'\cup\{x_d\})$ is disconnected \label{wh:851}
        \\  \hspace{18mm} 
        \textbf{ or } $\mathcal{F} \not\subseteq \beta(A'\cup\{x_d\})$ 
         } 
        \State pick $s \in \mathcal{S}\setminus A' $ 
        \State $A' = A' \cup \{s\} \setminus \{x_s\}$
        \State $\mathcal{S} = \mathcal{S} \setminus \{s\}$
      \EndWhile\label{858}
      \If{$A = A'$} \label{1071}
        \State $A = A \cup \{x_d\}$ \label{860}
      \Else
        \State $A = A'$ \label{862}
      \EndIf
      \State $\mathcal{F} = \beta(A\cup\{x_d\})$\label{864}
    \EndIf
  \EndWhile\label{867}
  \State \textbf{Output:} $A$
\end{algorithmic}
\end{algorithm}

\begin{figure}[h!]
  \centering
  \includegraphics[width=0.4\textwidth]{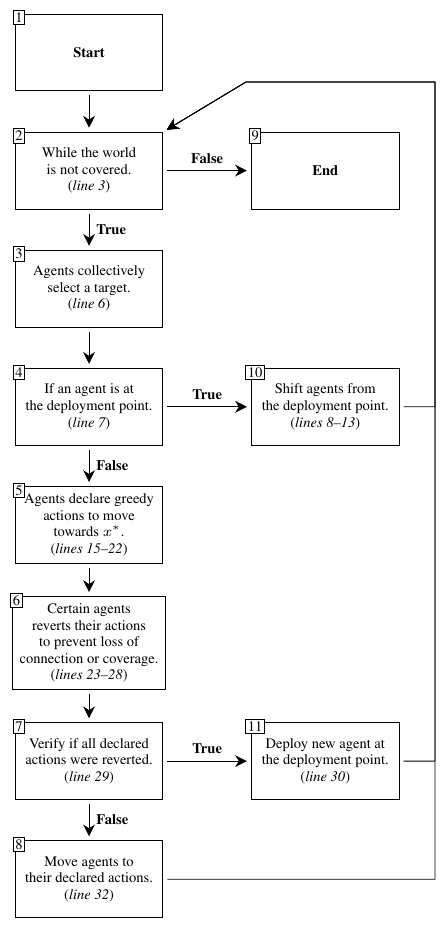}
  \caption{Flowchart illustrating the DADENCE algorithm. The numbered blocks (1--11) represent its main components.}
  \label{flowchart}
\end{figure}



    





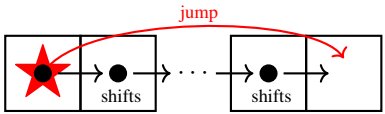
\begin{figure}[]
  \centering
  \begin{tikzpicture}[scale=1]

    \draw[thick] (0,0) rectangle (1,1);
    \draw[thick] (1,0) rectangle (2,1);
    \draw[thick] (3,0) rectangle (4,1);
    \draw[thick] (4,0) rectangle (5,1);

    \draw[->,thick,red]
      (0.5,0.5) .. controls (1,1.3) and (4.1,1.3) .. (4.5,0.7);

    \node[text=red,scale=2.5] at (0.5,0.5) {$\bigstar$};
    \fill (0.5,0.5) circle (0.12);

    \fill (1.5,0.5) circle (0.12);
    \fill (3.5,0.5) circle (0.12);

    \node at (2.525,0.5) {$\cdots$};

    \draw[->,thick] (0.7,0.5) -- (1.3,0.5);

    \draw[->,thick] (1.7,0.5) -- (2.2,0.5);
    \draw[->,thick] (2.8,0.5) -- (3.3,0.5);
    \draw[->,thick] (3.7,0.5) -- (4.3,0.5);

    \node[text=red,anchor=west]   at (2.2,1.3) {\scriptsize jump};
    \node[text=black,anchor=west] at (1.15,.2) {\scriptsize shifts};
    \node[text=black,anchor=west] at (3.15,.2) {\scriptsize shifts};

  \end{tikzpicture}

  \caption{Illustration of the queue-jump step corresponding to block~10 in Figure~\ref{flowchart}. The deployment point is marked in red, and the agents are shown in black. The black arrows represent the individual agent shifts towards $x^*$, while the red arrow depicts the effective result: the agent at the deployment point effectively leaping over the subsequent agent(s).}
  \label{fig:shift}
\end{figure}

We now provide several key remarks regarding the decentralized execution of Algorithm~\ref{algo:dadence} DADENCE. Algorithm \ref{algo:dadence} details the code running on each decentralized agent. Agents exchange observations and, because they follow identical greedy rules, can compute the actions of other agents to reach a consensus on action execution (Line \ref{wh:851}). In Section~\ref{sec:implDAD}, we present a more efficient communication-based implementation where agents compute their individual actions then share it rather than computing every agent's action.


For readability and better understanding, we will reference both the code and the flowchart in Figure~\ref{flowchart}, which gives a high-level representation of the algorithm.

The input to DADENCE is a deployment point $x_d$, an empty set of mobile agents $A$, and $\beta$, which represents the discrete field of view of the agents (see {\it line 1}). On {\it line 3} the function continues until no border is observed. This means that the world is fully covered, so block 2 in Figure~\ref{flowchart} outputs ``False'' and the algorithm terminates. In the exceptional case where $\mathcal F = W$ with $A = \emptyset$, the world is already covered by the deployment point $x_d$ alone. In the case where a border exists, meaning the world is not covered, block 2 outputs ``True'' and the loop proceeds to select the $x^*$ target in line 6 (block 3).

The ``True'' condition of the \texttt{if} statement at {\it line~\ref{835}} (block 4) is triggered when the deployment point is already occupied. This prevents a new agent from spawning. The algorithm then enters the \texttt{while} loop at {\it line~\ref{wh:837}} (block 10), where the agent currently at the deployment point plans to move along a shortest path from $x_d$ to the target $x^*$. Whenever this agent tries to step into a cell that is already occupied, the agent in that cell is also moved one step closer to $x^*$ along a shortest path. This process repeats until a moved agent enters a free cell. In other words, all agents lying on that shortest path form a queue, and each of them shifts forward by exactly one cell toward $x^*$, as illustrated in Figure~\ref{fig:shift}. Conceptually, the agent at the deployment point can be seen as ``jumping'' over this queue of neighboring agents by triggering this chain of shifts, allowing it to advance toward the selected target $x^*$. This queue-like shifting mechanism reduces unnecessary travel, preserves the relative ordering of agents, and prevents collisions.

On the other hand, the \texttt{else} branch at {\it line~\ref{1054}} is entered when no agent occupies the deployment point $x_d$. In this case, all the agents declare their actions (block 5) to move towards their goal $x^*$, as shown in \texttt{for} loop starting at {\it line~\ref{for:844}}. Next, the \texttt{while}-loop (block 6) starting at {\it line~\ref{wh:851}} in the algorithm, verifies that the intended action of an agent does not induce (i) a disconnected visibility graph, or (ii) coverage loss while making sure there is no collision. If an intended action is to result in any of these conflictual scenarios, according to DADENCE, each agent affected changes their declared action to instead stay at their current position until the conflicts are resolved (see Section \ref{algo_implementation} for more implementation details). Non-reverted actions are referred to as {\it valid actions}.

Following the \texttt{while}-loop ending at \emph{line~\ref{858}} (Block 6), $A'$ represents the new, conflict-free positions of the agents. Then, the \texttt{if} statement at \emph{line~\ref{1071}} verifies whether these declared actions result in any displacement (block 7). If at least one agent moves (i.e., $A \neq A'$), the positions are updated to the new positions, as shown in \emph{line~\ref{862}} (corresponds to block 8). Otherwise, a new agent is deployed pursuant to \emph{line~\ref{860}} (block 11). In either case, the algorithm returns to the start of the loop and starts again.

Next, we present formal proofs that the algorithm preserves visibility graph connectivity (Proposition \ref{D_prop_a}) and guarantees the full coverage of the world $W$ (Proposition \ref{D_prop_b}); together, these properties ensure the algorithm halts and returns the desired output.

\begin{proposition}
\label{D_prop_a}
Under Assumptions \ref{asp1}--\ref{asp3}, any modification made to the location of the agents, entailed by DADENCE, maintains the connectivity of the discrete visibility graph $\tilde G(A\cup\{x_d\})$.
\end{proposition}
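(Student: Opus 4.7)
The plan is to partition the modifications that Algorithm~\ref{algo:dadence} performs on $A$ into three elementary types and verify, case by case, that the connectivity of $\tilde G(A\cup\{x_d\})$ is preserved by each update. The three types are: (i) an initial or re-introduced spawn at $x_d$ (\emph{line~\ref{833}} and \emph{line~\ref{860}}); (ii) the queue-jump in the if-branch (\emph{lines~\ref{1046}--\ref{1051}}); and (iii) the parallel shift in the else-branch (\emph{lines~\ref{1054}--\ref{864}}). The argument proceeds by induction on the number of operations already executed, so that $\tilde G(A\cup\{x_d\})$ may be assumed connected immediately before the next update.

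The first step is to record an elementary fact used in all three cases: any two cells $u, v \in W$ that are neighbors in $\bar G(W)$ are mutually visible in the discrete sense. Indeed, when $|u_1-v_1|+|u_2-v_2|=1$ the cells $\R(\{u\})$ and $\R(\{v\})$ share a full unit edge, so the line segment from $u$ to any point of $\R(\{v\})$ stays inside $\R(\{u\})\cup\R(\{v\})\subset \R(W)$; hence $v\in\mathcal{D}(u)$ by Definition~\ref{WFOV}, and Assumption~\ref{asp1} turns this into a (symmetric) edge of $\tilde G$. Assumption~\ref{asp3} is used only implicitly, guaranteeing that $\R(W)$ is free of diagonal ``see-through-a-corner'' pathologies. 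Case~(i) is then immediate, since the vertex set $A\cup\{x_d\}$ does not change: $x_d$ is always present as the pseudo-agent.

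For case~(ii), let $x_d = p_0, p_1, \ldots, p_k, x$ denote the chain traversed by the inner \texttt{while}-loop at \emph{lines~\ref{wh:837}--\ref{wh:839}}, with $p_1,\ldots,p_k\in A$, $x\notin A$, and consecutive members adjacent in $\bar G(W)$. The update $A\leftarrow A\cup\{x\}\setminus\{x_d\}$ rewrites the vertex set as $(A\cup\{x_d\})\cup\{x\}$: the cell $x$ is added and no vertex is removed. By the elementary fact above, $x$ is joined to its chain-predecessor $p_k$ (or to $x_d$ itself if $k=0$) by an edge of the new $\tilde G$; since this predecessor already belongs to the connected $\tilde G(A\cup\{x_d\})$ by the induction hypothesis, the updated graph remains connected. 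Case~(iii) is enforced by construction rather than by structure: the \texttt{while}-loop at \emph{line~\ref{wh:851}} terminates only when $\tilde G(A'\cup\{x_d\})$ is connected and $\mathcal{F}\subseteq\beta(A'\cup\{x_d\})$, so the only real question is termination. Each iteration removes one element $s$ from $\mathcal{S}$ and restores the corresponding agent to its pre-update position; thus after at most $|\mathcal{S}|$ iterations the loop either exits earlier or recovers $A' = A$, in which case the induction hypothesis already supplies both exit conditions. Consequently, the subsequent assignment at \emph{line~\ref{862}} (or \emph{line~\ref{860}} when no declared move survives) produces a connected visibility graph.

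The main obstacle I anticipate is case~(ii). It is tempting to read the queue-jump as a simultaneous shift of $k+1$ agents, which would entail removing and re-inserting several vertices and would demand a much more delicate connectivity argument. The correct bookkeeping, faithful to the code, is that only the agent at $x_d$ is relocated to $x$; the intermediate agents $p_1,\ldots,p_k$ stay in place, so the vertex set strictly grows by the single cell $x$, and the argument collapses to the elementary visibility fact applied to the pair $(x, p_k)$. In case~(iii), the only subtle point is termination of the revert-loop, which is handled by observing that the monotone decrease of $|\mathcal{S}|$ forces the worst-case fallback $A' = A$, returning the graph to its assumed connected state.
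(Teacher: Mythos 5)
Your proof is correct and follows essentially the same route as the paper's: a case split over the lines that modify $A$ (spawn, queue-jump, else-branch), an induction with the initial spawn as base case, the observation that the jump only adds a single vertex adjacent to an occupied cell, and reliance on the revert loop's exit condition for the else-branch. The two points you add beyond the paper's argument --- the explicit lemma that adjacency in $\bar G(W)$ implies discrete mutual visibility, and the termination argument for the revert loop via the monotone decrease of $|\mathcal{S}|$ --- are sound refinements rather than a different approach.
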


\begin{proof} In DADENCE, the positions of agents can only be modified at \textit{lines~\ref{833}, \ref{840}, \ref{860}, and \ref{862}}. We now show that any such modification of $A$ at these lines preserves the connectivity of $\tilde G(A\cup\{x_d\})$.

After the agent spawning entailed by either  \textit{lines~\ref{833}} or \textit{lines~\ref{860}}, we have $\tilde G(A\cup\{x_d\}) = \tilde G(A)$. The connectivity follows from the fact that $\tilde G(A\cup\{x_d\})$ was connected before the execution of either of these lines. This holds by induction, with the base case given by the first execution of \textit{line~\ref{833}}, after which we have $\tilde G(A \cup \{x_d\}) = \tilde G(\{x_d\})$, which is connected.

By \textit{line~\ref{840}}, DADENCE attempts to move the agent located at the deployment point $x_d$ along a shortest path from $x_d$ to the target $x^*$. This can lead to many agents shifting positions towards $x^*$ to prevent collisions (as seen in Figure~\ref{fig:shift}). The result of this shifting is as if only the agent at the deployment point jumps over many agents to an unoccupied position $x_k \in W\setminus {A\cup \{x_d \}}$. To note, before the agent is moved, the agents in the visibility graph are $A \cup\{x_d \}$, and the visibility graph is connected. After the agent at the deployment point moves, the set of agents becomes $A \cup\{x_d \} \cup \{x_k \}$. $A \cup\{x_d \} \cup \{x_k \}$ must also form a connected visibility graph, as $x_k$ is placed at a position adjacent to a current agent or the deployment point, meaning it must be connected to at least one agent in the previous visibility graph.

After the execution of \textit{line~\ref{862}} the connectivity is maintained because $\tilde G(A'\cup\{x_d\})$ is connected. This holds because the \texttt{while} loop in {\it lines~\ref{wh:851}} to \ref{858} reverts any action that may destroy the connectivity. 
\end{proof}

\begin{proposition}
\label{D_prop_b}
Under Assumptions \ref{asp1}--\ref{asp3}, the execution of DADENCE guarantees that, eventually, the world $W$ is covered. That is, $A$ becomes a set such that $\tilde F(A \cup \{x_d \}) = W$.
\end{proposition}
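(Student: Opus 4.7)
The plan is to combine the connectivity invariant from Proposition \ref{D_prop_a} with a termination argument showing that the partial field of view $\mathcal{F}$ must eventually coincide with $W$. I would split this into three ingredients: monotonicity of $\mathcal{F}$, boundedness of $|A|$, and an eventual-progress claim whenever $\mathcal{F}\subsetneq W$.

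First, I would check that $\mathcal{F}$ is monotone non-decreasing along the outer \texttt{while} loop. In the \texttt{if} branch, the assignment at line \ref{1051} augments $\mathcal{F}$ by $\beta(\{x\})$. In the \texttt{else} branch, the revert loop at lines \ref{wh:851}--\ref{858} exits only when the current $\mathcal{F}$ is contained in $\beta(A'\cup\{x_d\})$, so the reassignment at line \ref{864} can only enlarge $\mathcal{F}$; since $W$ is finite, strict growths of $\mathcal{F}$ occur at most $|W|$ times. Simultaneously, agents are added to $A$ only at lines \ref{833} and \ref{860}, each time strictly increasing $|A|$, while the selection $x_s\in N_{x^*}(s)\setminus A'$ in the \texttt{for} loop and the queue-shift at lines \ref{wh:837}--\ref{wh:839} prevent any two agents from ever occupying the same cell. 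Therefore $|A|\le|W|$ throughout, spawning events are finite, and past some finite prefix every iteration consists of a pure movement step.

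The main obstacle is to rule out a run in which $\mathcal{F}$ stabilizes at some $\mathcal{F}^*\subsetneq W$. Assume for contradiction that this happens. By connectedness of $W$ (Definition \ref{def:ortw}) the border $B(\mathcal{F}^*)$ is non-empty, so line \ref{dadln:6} always selects a valid target $x^*\in B(\mathcal{F}^*)$. A first key subclaim I would establish is that $x^*\notin A$: otherwise, by Definition \ref{WFOV}, every $W$-neighbor of $x^*$ lies in $\tilde F(\{x^*\})\subseteq\mathcal{F}^*$, contradicting $x^*\in B(\mathcal{F}^*)$. Combined with the boundedness argument, after a finite prefix every iteration is either an \texttt{if}-branch execution, which by definition of $N_{x^*}$ moves an agent to a cell strictly closer to $x^*$ along a shortest path, or an \texttt{else}-branch execution in which at least one agent moves strictly closer to $x^*$ with no new spawn. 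Using a lexicographic potential per fixed $\mathcal{F}^*$, for example $(|A|,\,\min_{a\in A}L_{a,x^*},\,\sum_{a\in A\cup\{x_d\}}L_{a,x^*})$ together with the argmin rule of line \ref{dadln:6}, I would argue that some agent must reach $x^*$ in finitely many steps, at which point its observation region contains a $W$-neighbor of $x^*$ in $W\setminus\mathcal{F}^*$, forcing $\mathcal{F}$ to strictly grow and contradicting stabilization.

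The delicate point I expect to spend most effort on is precisely the strict monotonicity of this potential, because $x^*$ is reselected at each iteration as a function of the current $A$ and a naive distance to $x^*$ may oscillate. My plan is to fix a window between two consecutive strict $\mathcal{F}$-growths, exploit Assumption \ref{asp3} to ensure that $\bar G(W)$ is well behaved and that shortest paths are well defined, and combine the queue-shift semantics with the argmin selection rule to force strict decrease of the lexicographic potential within such a window. Once termination is established, the loop-exit condition $B(\mathcal{F})=\emptyset$ together with the connectedness of $W$ immediately give $\mathcal{F}=W$, and therefore $\tilde F(A\cup\{x_d\})=W$, as claimed.
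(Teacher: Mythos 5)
Your decomposition --- monotonicity of $\mathcal{F}$ plus an eventual-progress argument in which agents march along shortest paths toward a border target $x^*$ until one reaches it and observes beyond the border --- is exactly the structure of the paper's proof, and your maintenance argument (the revert loop at lines \ref{wh:851}--\ref{858} protects coverage, and vacating $x_d$ costs nothing since $x_d$ always contributes its own field of view) matches the paper's line-by-line case analysis. The one step you flag as delicate, namely strict decrease of a potential while $x^*$ is reselected every iteration, is precisely the step the paper does not argue at all (it simply asserts that repeating such moves makes some agent reach $x^*$ in finitely many steps), so your proposal is if anything more careful than the published proof, and your added observations that $|A|\le|W|$ bounds the number of spawn events and that $x^*\notin A\cup\{x_d\}$ are correct and useful supplements.
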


\begin{proof}
    We first show that changes to $A$ entailed by DADENCE do not reduce coverage (any subset of the field of view always remains so after any action). Then, we show that the changes to $A$ expand coverage until the collective field of view $\tilde F(A \cup \{x_d \}) = W$. 

    {\it Proof that the coverage is maintained:} The set $A$ changes after the execution of \textit{lines~\ref{833}, \ref{840}, \ref{860}, and \ref{862}}. We prove that after the execution of either of these lines, there does not exist any part of $W$ whose coverage is lost.

    In \textit{lines~\ref{833}} and \ref{860} no coverage is lost as the coverage of the world includes the discrete field of view of the deployment point $x_d$. Therefore, adding $x_d$ to $A$ does not change $\tilde F(A \cup \{x_d \})$.

    The execution of \textit{line~\ref{840}} makes an agent move from the deployment point $x_d$ to another position. Since the deployment point always adds to the discrete field of view, removing an agent from $x_d$ cannot reduce the collective field of view.
  
    Similar to connectivity (see the proof of Proposition \ref{D_prop_a}), after the execution of \textit{line~\ref{862}}, the coverage is maintained because the \texttt{while} loop in {\it lines~\ref{wh:851}} to \ref{858} reverts any action that may reduce coverage.

    {\it Proof that the coverage expands:} The discrete field of view $\mathcal F$ is initialized with the field of view of $x_d$ ({\it line~\ref{832}}). Then, in  {\it line~\ref{835}}, if the condition holds, an agent moves from $x_d$ to a position $x_k$ (see the proof of Proposition \ref{D_prop_a}) along its shortest path to the target position $x^*$. Alternatively, if the condition in  {\it line~\ref{832}} does not hold, all the agents move towards the target following the shortest path respectively, until no more valid actions are possible. Then, a new agent is spawned in, and advances towards the target in the next execution of the \texttt{while} loop defined in  {\it lines~\ref{833,37}--\ref{867}}. The repetition of such moves implies that after a finite number of steps, an agent reaches the target $x^*$, and observes beyond this point, {\it i.e.,} the coverage expands. 
   
\end{proof}

\section{Algorithm implementation and optimization}
\label{algo_implementation}

We now present implementation refinements that improve the performance of both algorithms by shortening the execution time and reducing the number of agents. Time-reduction strategies for CADENCE and DADENCE are given in Sections~\ref{sec:implCAD} and \ref{sec:implDAD}, respectively, and an agent deallocation policy is described in Section~\ref{sec:dealloc}. These refinements do not affect any of the Propositions~\ref{cadence_prop}--\ref{D_prop_b}.

\subsection{Implementation and optimization of CADENCE}
\label{sec:implCAD}

We first note that the high-level formulation of CADENCE in Section \ref{sec:cadence} is not directly implementable: on {\it line~\ref{cadln:5}} agents are effectively assumed to appear instantaneously at their target, whereas in practice they must move along a path from the deployment point. To simplify path planning and remain consistent with DADENCE, we therefore restrict CADENCE to operate in a discrete world $W \subset \mathbb Z^2$ following Assumption~\ref{asp3}, as shown in the ``pixelized'' illustration in Figure \ref{fig:cad_dad_sample}. In this setting, $T_{\mbox{\scriptsize max}}$ is the number of maximum time steps, and at each step every agent may either remain in place or move to a neighboring cell; simultaneous moves by multiple agents still count as a single time step.

\begin{figure}[h!]
  \centering
  \begin{subfigure}[b]{0.225\textwidth}
    \centering
    \includegraphics[
      width=\linewidth,
      trim=0pt 0pt 220pt 0pt,  
      clip
    ]{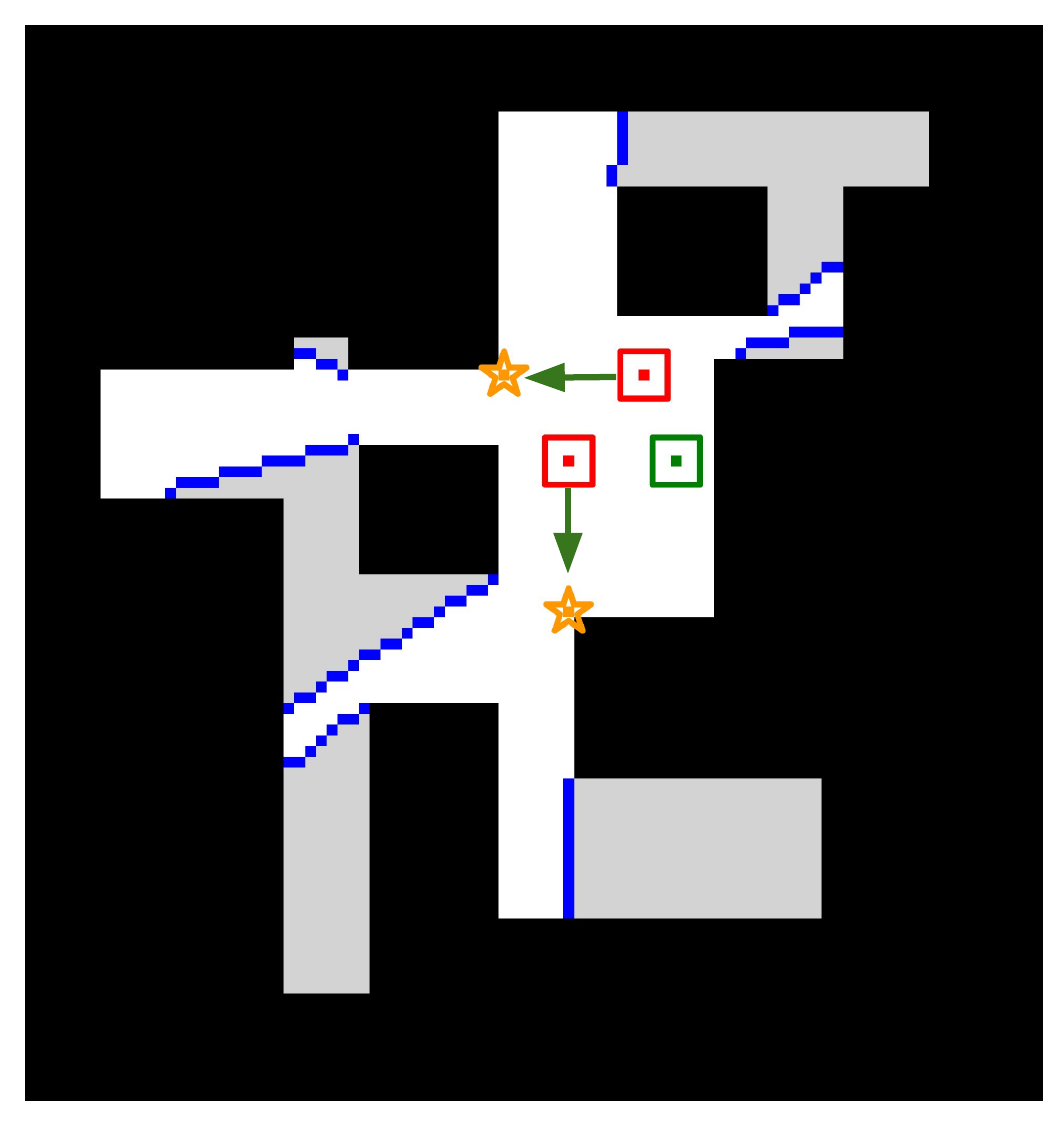}
    \caption{CADENCE}
    \label{fig:cadence}
  \end{subfigure}%
  \hspace{0.5cm}%
  \begin{subfigure}[b]{0.225\textwidth}
    \centering
    \includegraphics[
      width=\linewidth,
      trim=0pt 0pt 220pt 0pt,
      clip
    ]{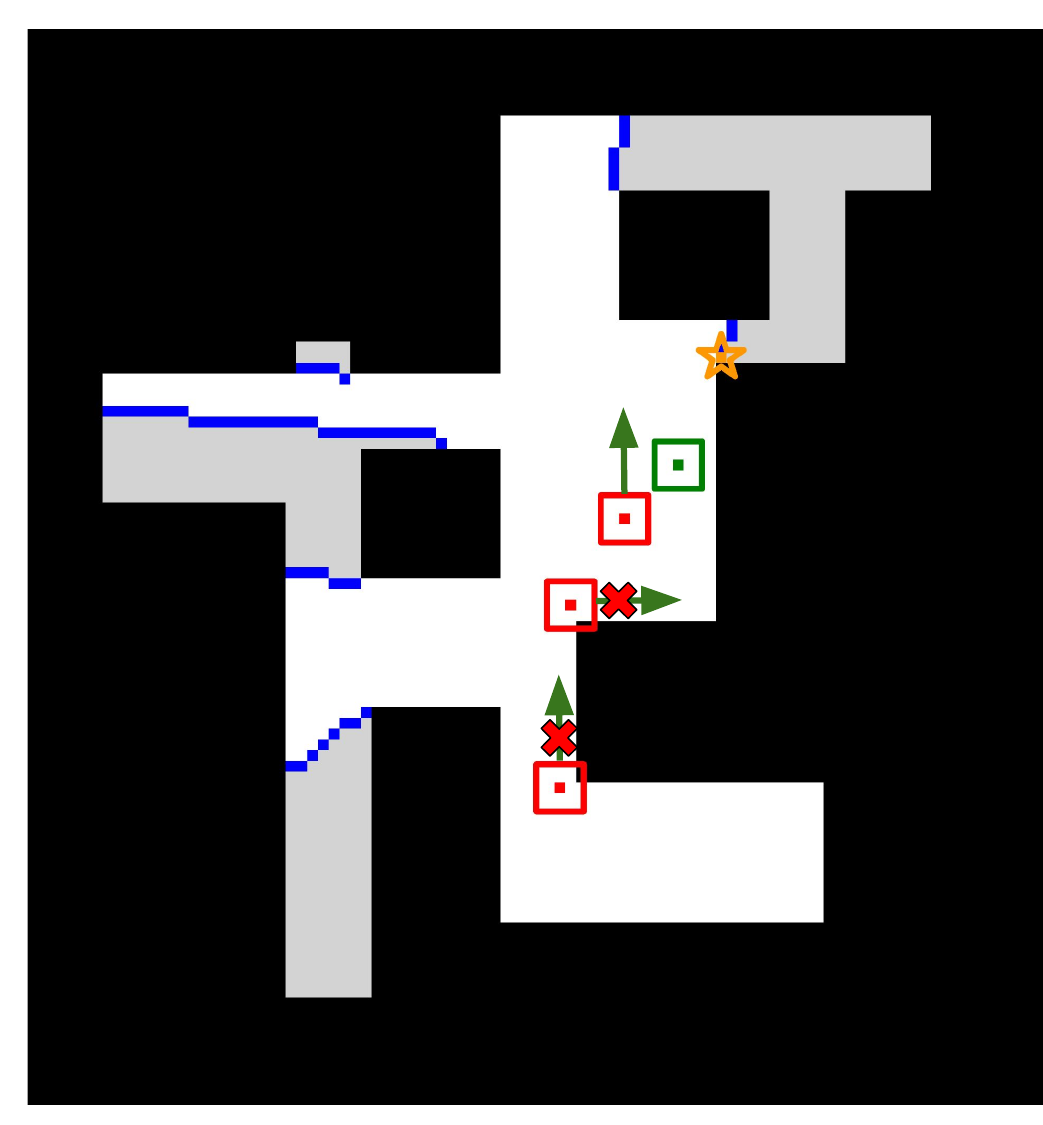}
    \caption{DADENCE}
    \label{fig:dadence}
  \end{subfigure}

  \caption{%
    Sample movement of agents in CADENCE (a) and DADENCE (b), where the deployment point is marked by the green square, the agents are the red, and the goals are the gold stars. Green arrows indicate intended actions, and $\times$'s on the arrows indicate actions that are invalid.
  }
  \label{fig:cad_dad_sample}
\end{figure}


To further accelerate CADENCE, we allow multiple agents to move simultaneously and do not require a new agent to be spawned only after the previous one reaches its destination. Instead, CADENCE can spawn an agent provided a valid corner is observed and no agent is already en route to it. To prevent collisions, at most one agent can be spawned per time step under this rule. Figure~\ref{fig:cadence} illustrates this behavior: two red agents move concurrently toward their respective yellow-star target cells. This strategy reduces the total number of steps, but it requires redefining the \texttt{while} loop as follows:

\begin{algorithm}
  \begin{algorithmic}[1]
    \While{%
      \parbox[t]{\dimexpr\linewidth-3.8em\relax}{%
        \(S' \ne A \ne T\) \textbf{ and } \(S' \ne (A \cup \{x_d\}) \ne (T \cup \{x_d\})\)%
      }%
    }\label{cadln:3bis}
      \If{ \(S' \setminus (T \cup \{x_d\}) \ne \emptyset\) } 
        \State Pick \(x \in S' \setminus (T \cup \{x_d\})\)
        \State Set \(T = T \cup \{x\}\)
        \State \(A = A \cup \{x_d\}\)
      \EndIf
      \State $A = A'$  
      \State $S' := \alpha\big( \,(A \cap T) \cup \{x_d\}\,\big)$ 
      \label{cadln:8bis}
    \EndWhile
  \end{algorithmic}
\end{algorithm}

In the modified \texttt{while} loop, $T$ is the set of target valid corners, $A$ the current agent positions, and $A'$ the agent positions after executing their actions. Each agent independently computes its next move toward its target, so $A'$ is obtained in a distributed manner. CADENCE terminates once every observed valid corner in $T$ has an assigned agent that has reached it.

\subsection{Implementation and optimization of DADENCE}
\label{sec:implDAD}

While DADENCE can be implemented directly as formulated in Section~\ref{sec:dadence}, a naive decentralized approach is computationally expensive because each agent must replicate the action selection logic for the entire group. This can be significantly reduced in two ways. First, similar to how agents share observations, the agents collaboratively compute the target $x^*$ by sharing distances to border cells, which replaces the exhaustive shortest-path computations in Line~\ref{dadln:6}. Furthermore, $x^*$ is recomputed only when it is no longer on the border ($x^* \notin B(\mathcal F)$) (\textit{i.e.}, the vision extends beyond the current border). Second, all the agents declare and share their intended actions, allowing others to verify execution eligibility based on shared data. This eliminates the iterative checks in Lines~\ref{for:844} and~\ref{wh:851}.

We now introduce two optimizations to reduce the number of time steps (actions) of DADENCE. The first optimization is to spawn new agents whenever coverage has not increased for the last $t_h$ steps, rather than waiting until no agent can move. We set $t_h$ to be the average number of steps needed for the current agents to reach the target $x^*$. In large worlds, the distance between $x_d$ and $x^*$ grows as coverage expands, so agents take longer to travel from $x_d$, and each newly revealed border beyond $x^*$ may require additional deployments. By deploying a new agent preemptively once progress has stalled for $t_h$ steps, before $x^*$ is no longer the coverage border, we reduce the total number of time steps.

The second optimization is to modify the underlying navigation graph $\bar{G}(\tilde{F}(A))$, which all agents use to plan paths to $x^*$. In DADENCE, all agents share this graph and greedily follow shortest paths, which can cause repeated reverted actions when they repeatedly choose the same action. To avoid this, we introduce per-agent edge pruning: each agent \(a_i \in A\) maintains its own graph \(\bar{G}_i\!\big(\tilde{F}(A)\big)\)and, whenever a proposed action is reverted, it deletes the corresponding edge from its graph. This forces the agent to switch to alternative paths in subsequent steps. If all outgoing edges at a node are exhausted, the agent stops moving until its local graph is reset. These agent-specific graphs are only updated with all lost edges reinstated when a new target is selected or a new agent is spawned. If all agents have exhausted their edges, a new agent may also be spawned.

To note, this edge-pruning mechanism does not prevent DADENCE from ultimately covering the environment, since lost edges are eventually reinstated. However, if a target point has not yet been reached, pruning can temporarily halt movement. In such cases, a new agent is spawned, allowing DADENCE to resume. Figure~\ref{fig:dadence} illustrates this process: three agents attempt to move toward the same target cell (yellow star), but two intended moves are reverted because they would cause loss of coverage or connectivity. The corresponding edges are crossed out and temporarily removed from the local graphs of the agents, while the action of the third action is valid and can be executed. This prevents the two agents from repeatedly attempting the same invalid actions.

\subsection{Deallocation}
\label{sec:dealloc}
   
Having described strategies to reduce execution time and time steps, we now turn to reducing the number of agents by removing unnecessary {\it terminal-position agents}. A terminal-position agent is one that no longer moves because it has reached its final intended position. An unnecessary agent is defined as follows.

\begin{defi}[Unnecessary terminal-position agent]
    A terminal-position agent at the position $x \in A$ is deemed unnecessary if:
    \begin{enumerate}
        \item Removing it does not reduce coverage of the world. That is, if $F(A\cup \{x_d\}) = F(\,\big(A \setminus \{x\})\cup \{x_d\}\,\big)$
        \item Removing the agent does not disconnect the graph representing the network of remaining agents. That is, if $\tilde G\big(\,(A \cup \{x_d\} \setminus \{x\}\,\big) $ is connected. 
\end{enumerate}
\end{defi}

Algorithm \ref{alg:deallocation-while-for} presents the deallocation process.

\begin{algorithm}
  \caption{Greedy Deallocation}
  \label{alg:deallocation-while-for}
  \begin{algorithmic}[1]
    \State \textbf{Input:} terminal position agents $A$, deployment point $x_d$
    \State Set $\text{changed} \gets \text{True}$
    \While{$\text{changed}$}
        \State $\text{changed} \gets \text{False}$
        \For{each agent $a \in A$}
            \State $A' = A \setminus \{a\}$
            \If{$G(A' \cup \{x_d\})$ is connected 
                      \\  \hspace{18mm} 
                 \textbf{and} $\tilde{F}(A' \cup x_d) = \tilde{F}(A\cup x_d)$}
                \State $A = A'$
                \State $\text{changed} \gets \text{True}$
                \State \textbf{break}
            \EndIf
        \EndFor
    \EndWhile
    \State \textbf{Output:} $A$
  \end{algorithmic}
\end{algorithm}

By design, the algorithm does not result in a reduction of coverage or loss of connectivity. Hence, the deallocation process does not break Propositions \ref{cadence_prop}--\ref{D_prop_b}.

Because CADENCE and DADENCE have different dynamics, their terminal agents are defined differently. In CADENCE, an agent becomes terminal once it reaches its target valid corner, so the deallocation procedure can be invoked at the end of each iteration of the \texttt{while} loop (specifically, after {\it line}~\ref{cadln:6} in Algorithm~\ref{alg:CADENCE}). In DADENCE, agents continue to move until the world is fully covered, so no agent is terminal before completion; deallocation can only be applied once coverage is achieved, {\it i.e.}, after {\it line}~\ref{867}.

Figure~\ref{fig:three_algos} gives an illustration example of the deallocation process. The initial configuration, shown in Figure~\ref{fig:de_alloc_start}, consists of four terminal position agents (purple, pink, blue, orange) deployed to achieve complete coverage of the world, while maintaining connectivity. We then apply the deallocation algorithm, and the result is shown in Figure~\ref{fig:de_alloc_good}. Because the blue agent contributes neither unique coverage nor connectivity, it is unnecessary and can be safely deallocated. In contrast, Figure~\ref{fig:de_alloc_bad} shows that deallocating the orange agent from Figure~\ref{fig:de_alloc_start} breaks both coverage and connectivity, so this agent is necessary and cannot be deallocated.
\begin{figure}[h!]
    \centering
    \begin{subfigure}[b]{0.15\textwidth}
        \centering
        \includegraphics[width=\linewidth]{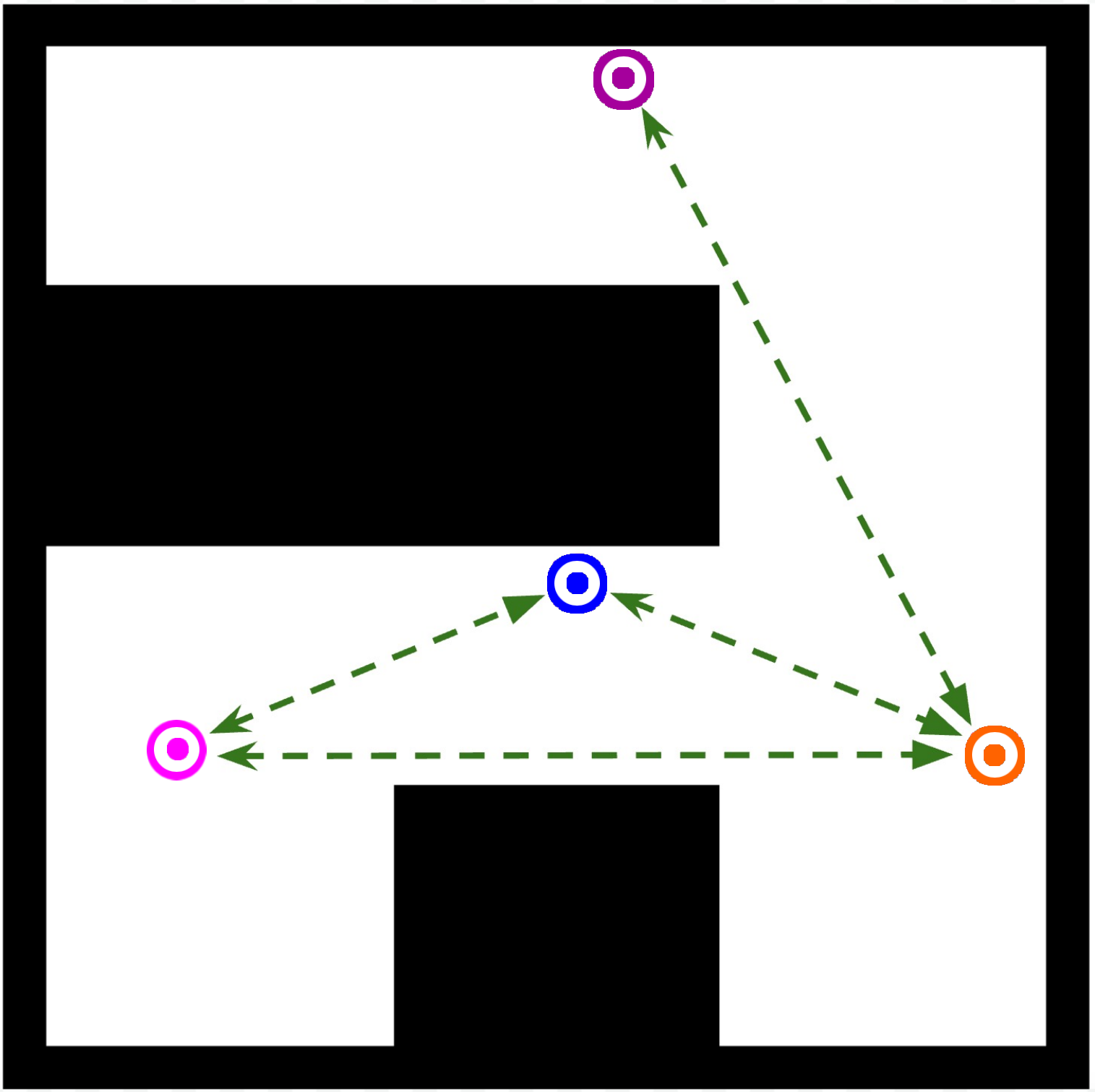}
        \caption{}
        \label{fig:de_alloc_start}
    \end{subfigure}
    \hspace{0.01cm}
    \begin{subfigure}[b]{0.15\textwidth}
        \centering
        \includegraphics[width=\linewidth]{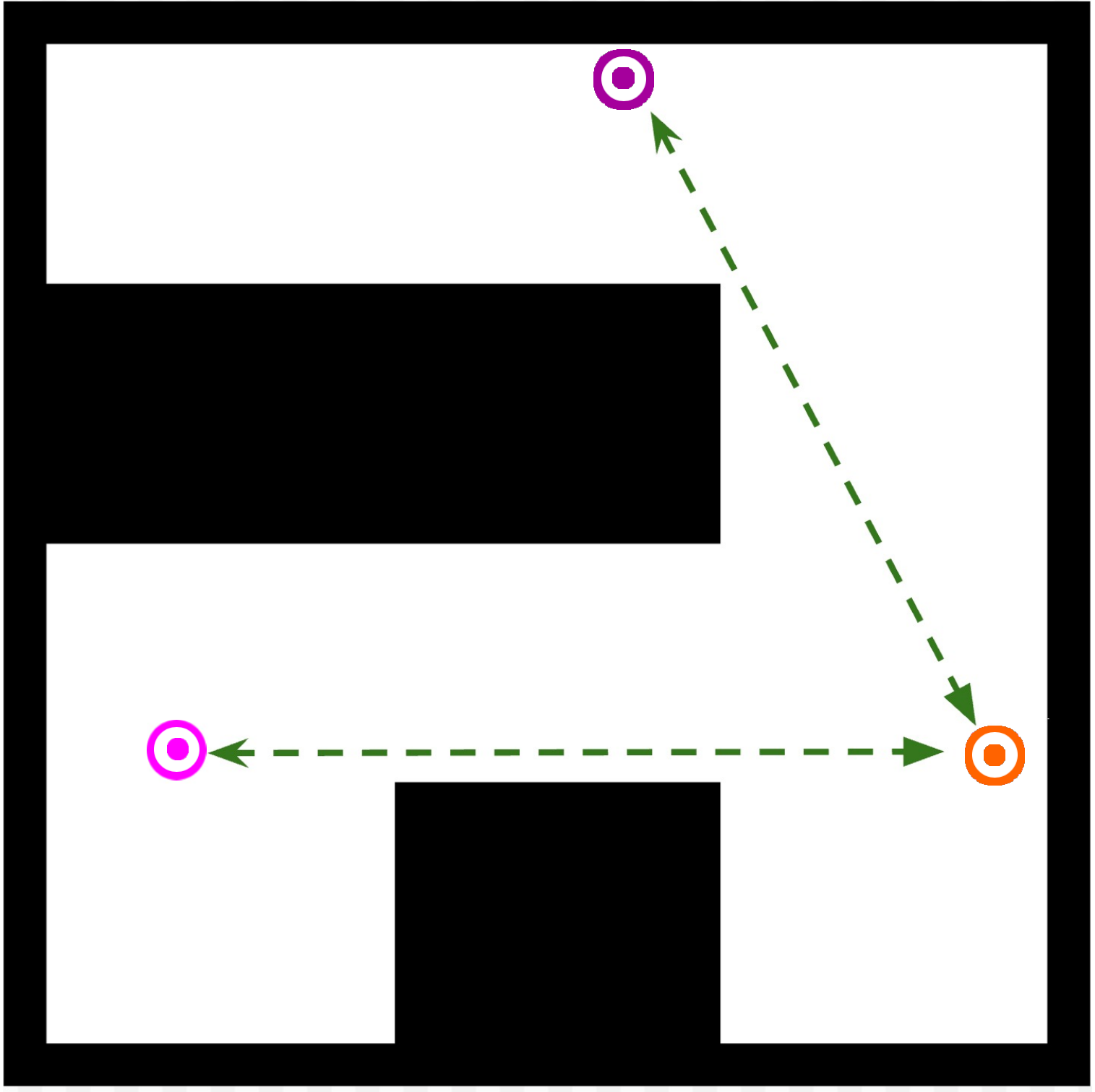}
        \caption{}
        \label{fig:de_alloc_good}
    \end{subfigure}
    \hspace{0.01cm}
    \begin{subfigure}[b]{0.15\textwidth}
        \centering
        \includegraphics[width=\linewidth]{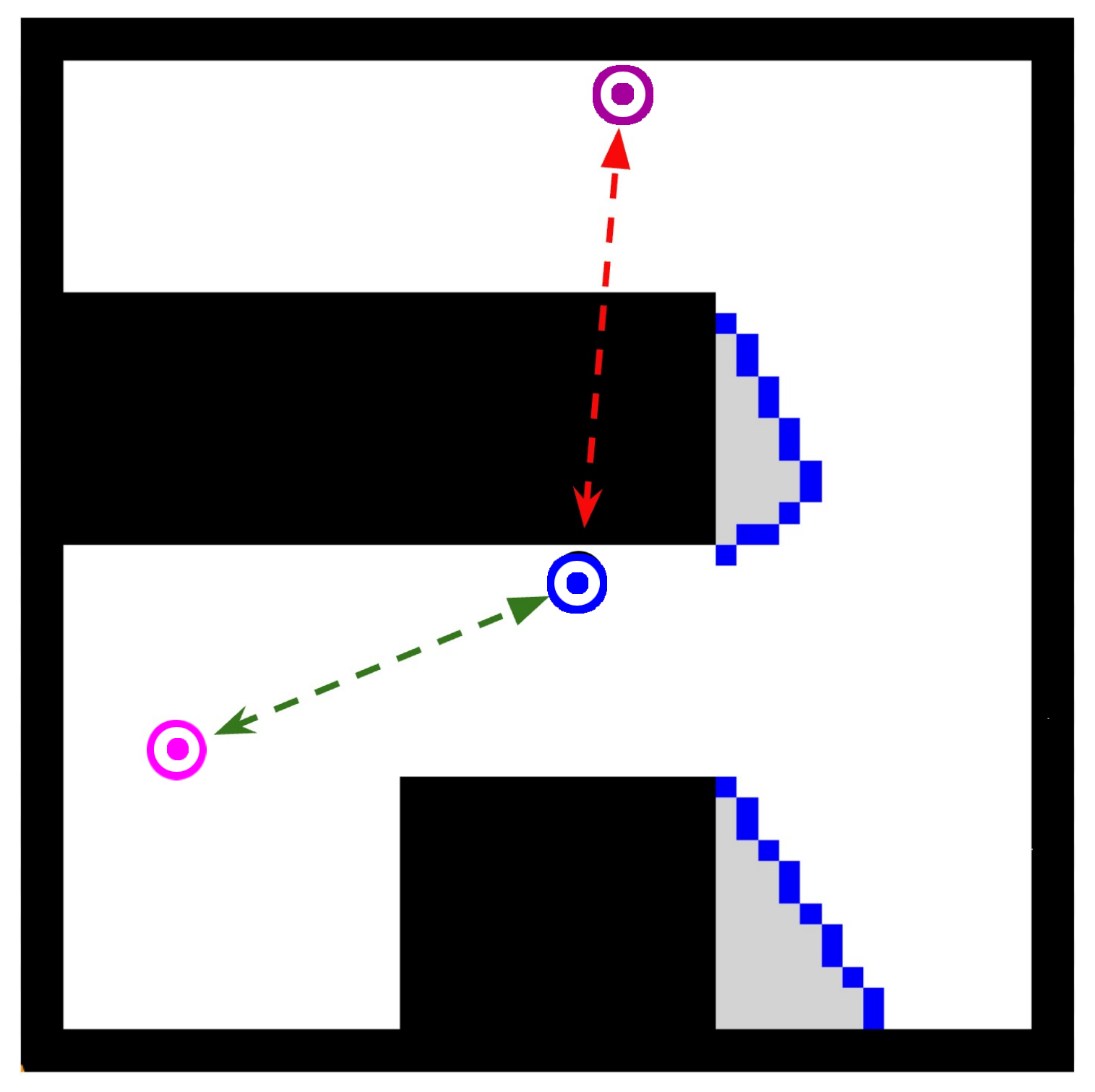}
        \caption{}
        \label{fig:de_alloc_bad}
    \end{subfigure}
    \caption{Example of deallocation. In Figure \ref{fig:de_alloc_start}, four agents (pink, blue, purple, and orange) are deployed to fully cover the world, given the obstacles in black. Figure \ref{fig:de_alloc_good} illustrates the result of deallocating the unnecessary agent (blue), while Figure \ref{fig:de_alloc_bad} depicts the loss of coverage and connectivity when a necessary agent is deallocated.}
    \label{fig:three_algos}
\end{figure}

\section{Comparison through extensive numerical tests}
\label{sec:tests}

To evaluate the ability of CADENCE and DADENCE to deploy MANETs in the POCGAGP, intensive numerical simulations were run on randomly generated discrete worlds of varying complexity (size, number of holes, and corners). We also compared the proposed algorithms to several adapted baseline algorithms for MANET deployment in POCGAGP. In this section, we first present the experimental setup used to measure and compare performance, then introduce the benchmark algorithms. Performance comparison results are reported as a function of world complexity: sixty complexity levels, five distinct worlds per level, and five random deployment points per world, for a total of eight algorithms. That is a total of 12,000 numerical simulations. Simulations for different algorithms were run in parallel across multiple computers, each equipped with an AMD 9950x with 64GB of memory and an RTX 4090.

\subsection{Benchmark environments}

\label{benchmark_env}

We construct test environments that consist of randomly generated dungeons, enabling experiments across a wide range of worlds with varying sizes and complexities, ensuring comprehensive evaluation.

\begin{figure}[h!]
    \centering
    \includegraphics[
        width=\linewidth,
        trim=15.5cm 9.5cm 3cm 5cm, 
        clip
    ]{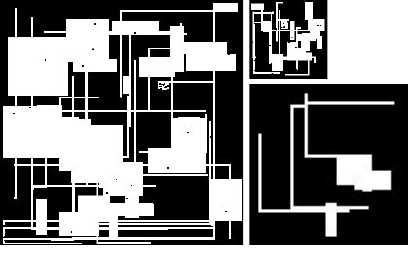}
    \caption{Dungeon examples with sizes $250\times250$ (large), $100\times100$ (medium), and $50\times50$ (small). The largest world is also the most structurally complex, with tunnels and obstacles; the medium world is structurally simplest; and the smallest has intermediate complexity. These examples show that world size alone is not a reliable measure of complexity.}
    \label{fig:snail}
\end{figure}

\subsubsection{Dungeon environments}

A \textit{dungeon environment} is a simulated grid world, commonly used in Robotics, Gaming, and Artificial intelligence. In two dimensions, it represents a confined area containing walls, corridors, obstacles, and rooms. These environments closely match the 2D POCGAGP discrete-world setting, making dungeon simulations an ideal testbed.

All these dungeon worlds will satisfy Assumption~\ref{asp3}. They are drawn from three discrete sizes, measured in cells as defined on page~\pageref{cell}: $50 \times 50$, $100 \times 100$, and $250 \times 250$. Each size class contains worlds with varying numbers of tunnels, rooms, and obstacles (see Figure~\ref{fig:snail}). As the dungeon size increases, the environments generally become more complex to cover, and thus typically require more agents and more time steps to achieve full coverage. This relationship between world size and complexity is, unfortunately, inaccurate and is illustrated in Figure~\ref{fig:snail}, where the largest dungeon is the most complex, but the smallest dungeon is more complex than the medium-sized dungeon. Similarly, properties such as surface area, obstacle count, or corner count are not sufficient on their own to capture this notion of complexity. In each case, even as one increases the world size, total surface area, or number of corners in an attempt to make the environment more complex, it is always possible to construct a world that can be a solution to the POCGAGP with just a single agent.

\begin{figure}[h!]
    \centering
    \includegraphics[width=0.45\textwidth,]{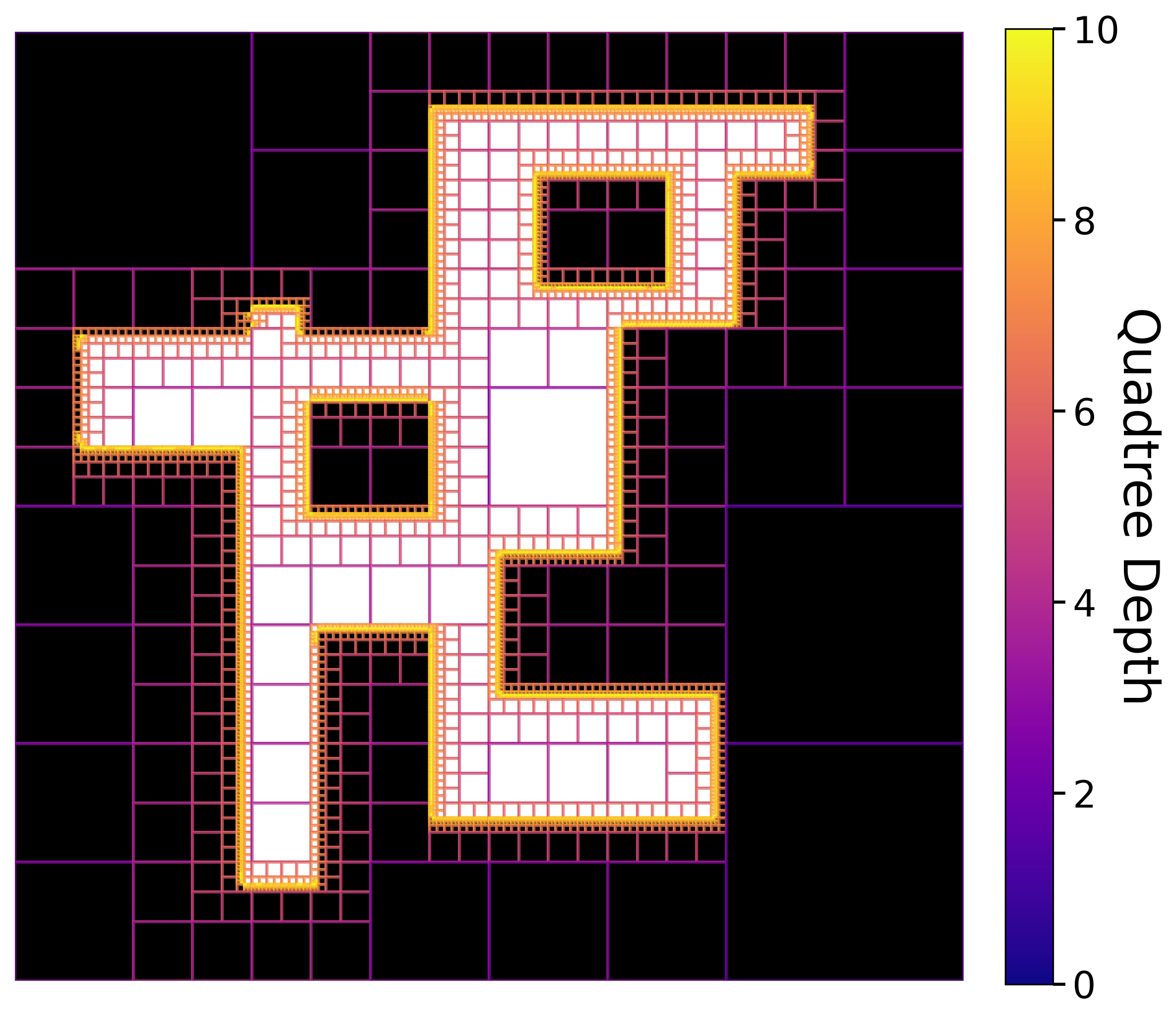} 
    \caption{Quadtree decomposition of the world map seen in Figure~\ref{fig:deployed} for hierarchical representation. Obstacles and outer walls are filled in black, and the free space is in white. The coloured outlines trace each square’s subdivision depth (0 at the coarsest level in deep purple, up to 10 at the finest level in bright yellow).}
    \label{fig:dun_quad}
\end{figure}
We propose to measure coverage complexity by the number of nodes in a quadtree's representation of each dungeon. Each square dungeon is recursively subdivided into smaller regions using a quadtree structure, which continues splitting until the entire world is fully partitioned into hierarchical nodes. An example is shown in Figure~\ref{fig:dun_quad}. Regions with higher obstacle density or fragmentation produce deeper quadtree subdivisions, resulting in brighter colors and a larger total number of nodes. This representation provides a natural proxy for spatial complexity, as it captures both the depth and intricacy of the environment. 

\begin{figure}[h!]
    \centering
    \includegraphics[width=0.5\textwidth,]{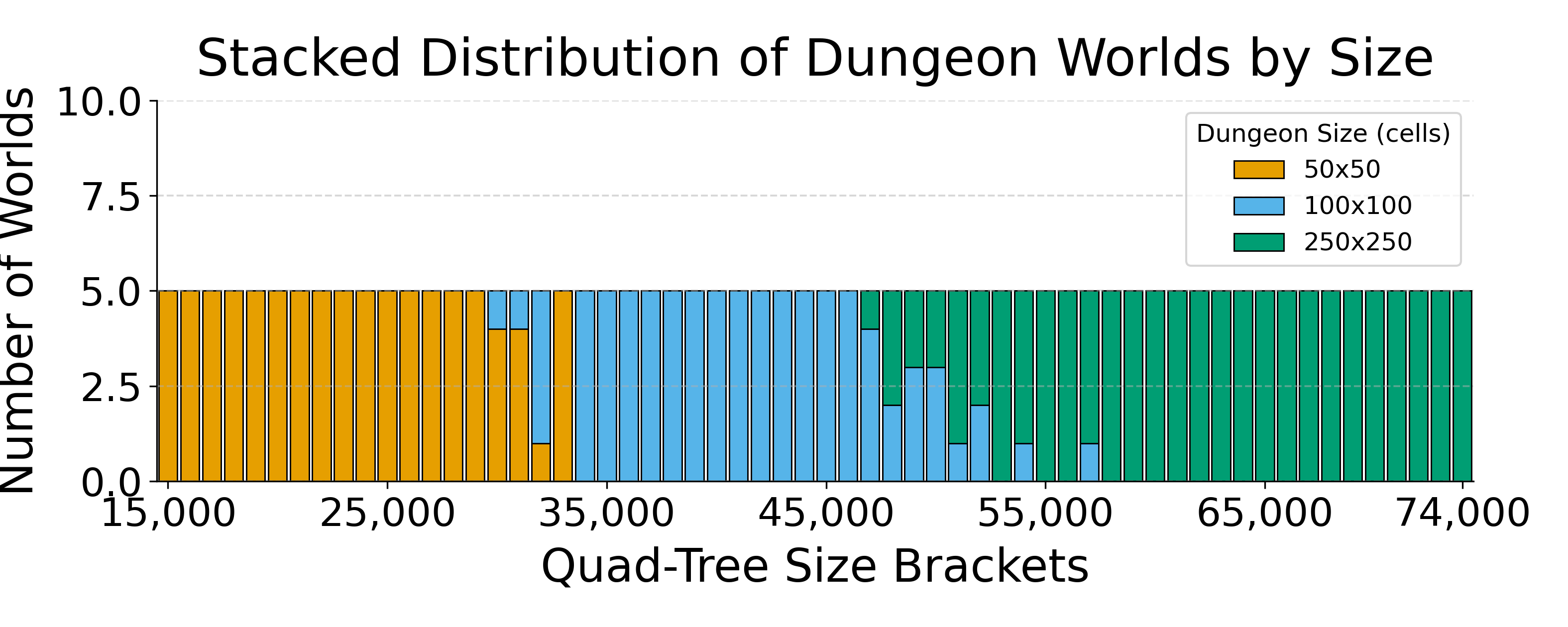} 
    \caption{Dungeon World Histograms by Size}
    \label{fig:histogram}
\end{figure}

Then, to facilitate comparison across dungeons, we introduce the notion of quadtree rank, which groups dungeons based on their total node count, see the histogram in Figure \ref{fig:histogram}. Each rank spans a range of 1{,}000 nodes:  0-999 for rank~0, 1{,}000-1{,}999 for rank~1, and so on.  Each node represents a square division in Figure \ref{fig:dun_quad}.  We generate a total of 15{,}000 dungeons (5{,}000 per dungeon size), sort them by quadtree node count, and randomly select 300 dungeons, five from each rank between 15{,}000 and 74{,}000. The distribution of selected worlds is shown in Figure~\ref{fig:histogram}, color-coded by dungeon size. For instance, one can see that there is a 100x100 cell world having a complexity level of 55,000 nodes, which is more complex than 20 larger $250\times250$ worlds. This illustrates that world size alone does not dictate complexity. 

\begin{remark}
Note that in a continuous world $W \subset \mathbb R^2$ the quadtree is bottomless, {\it i.e.,} an infinite partition is possible, thereby leading to infinite complexity. This further motivates the discretization of $W$.
\end{remark}

\subsubsection{Resource allocation in the tests}

\label{test_info}

A key element of \textsc{POCGAGP} is the use of two resource budgets, $N_{\max}$ and $T_{\max}$, which constrain the algorithms. For each world, we set $N_{\max}$ equal to the bound from Equation~\eqref{cgagp}, guaranteeing sufficient agents in the worst case to cover the space. While this choice is unrealistic in real-world settings, obfuscated by the lack of prior knowledge of the world and resource constraints, it is appropriate for our controlled tests and enables us to see which algorithms can solve \textsc{POCGAGP} with at most this set upper bound. For $T_{\max}$, we scale the step budget with the geometric size of the world: although computational complexity is driven by the number of quadtree nodes, the physical steps required to traverse the environment depend on metric distance, so a size-aware $T_{\max}$ better reflects execution-time constraints. In practice, $50 \times 50$ worlds are allocated $5{,}000$ steps, $100 \times 100$ worlds $10{,}000$ steps, and $250 \times 250$ worlds $30{,}000$ steps.

\subsubsection{Randomness consideration}

Results for each algorithm can vary with different starting positions. To account for this, we run each world five times using five fixed random seeds. Each seed determines the deployment point, and the randomness in the algorithm, if applicable, and because the same seeds are used for all algorithms, they are evaluated from the same five starting positions.

\subsection{Benchmark algorithms}

To evaluate the performance of our proposed algorithms for the POCGAGP, we compare them against adapted MANET deployment algorithms with extensive simulation tests. We first briefly review the benchmark algorithms and then present the results.


\subsubsection{Lattice-forming algorithms}

Lattice structures offer a centralized systematic approach to creating a MANET by deploying agents in regular, grid-like patterns that provide predictable communication coverage~\cite{ccabuk2021max}. These lattices can take the form of many different types of shapes. Two of the most common, the square and triangular layouts, shown in Figure~\ref{fig:lattice}, are used as comparison baselines in this paper.

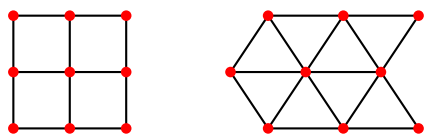
\begin{figure}[h!]
    \centering
    \begin{tikzpicture}

        \draw[thick,black] (0,0) -- (.75,0);
        \draw[thick,black] (1.5,0) -- (.75,0);
        
        \draw[thick,black] (0,.75) -- (.75,.75);
        \draw[thick,black] (1.5,.75) -- (.75,.75);

        \draw[thick,black] (0,1.5) -- (.75,1.5);
        \draw[thick,black] (1.5,1.5) -- (.75,1.5);

        \draw[thick,black] (0,0) -- (0,.75);
        \draw[thick,black] (0,.75) -- (0,1.5);

        \draw[thick,black] (.75,0) -- (.75,.75);
        \draw[thick,black] (.75,.75) -- (.75,1.5);

        \draw[thick,black] (1.5,0) -- (1.5,.75);
        \draw[thick,black] (1.5,.75) -- (1.5,1.5);
    
        \fill[red] (0,0) circle (2pt);
        \fill[red] (.75,0) circle (2pt);
        \fill[red] (1.5,0) circle (2pt);

        \fill[red] (0,.75) circle (2pt);
        \fill[red] (.75,.75) circle (2pt);
        \fill[red] (1.5,.75) circle (2pt);
        
        \fill[red] (0,1.5) circle (2pt);
        \fill[red] (.75,1.5) circle (2pt);
        \fill[red] (1.5,1.5) circle (2pt);

    \end{tikzpicture}
    \hspace{1cm} 
    \begin{tikzpicture}
        \draw[thick,black] (1,0) -- (2,0);
        \draw[thick,black] (1,1.5) -- (2,1.5);
        \draw[thick,black] (1,0) -- (0.5,.75);
        \draw[thick,black] (1,1.5) -- (0.5,.75);
        \draw[thick,black] (2,1.5) -- (2.5,.75);
        \draw[thick,black] (2,0) -- (2.5,.75);
        
        \draw[thick,black] (1,1.5) -- (1.5,.75);
        \draw[thick,black] (2,1.5) -- (1.5,.75);
        \draw[thick,black] (2,0) -- (1.5,.75);
        \draw[thick,black] (1,0) -- (1.5,.75);
        \draw[thick,black] (.5,.75) -- (1.5,.75);
        \draw[thick,black] (2.5,.75) -- (1.5,.75);

        \draw[thick,black] (3,1.5) -- (2,1.5);
        \draw[thick,black] (3,0) -- (2,0);

        \draw[thick,black] (3,1.5) -- (2.5,.75);
        \draw[thick,black] (3,0) -- (2.5,.75);
        
        \fill[red] (1,0) circle (2pt);
        \fill[red] (2,0) circle (2pt);

        \fill[red] (0.5,.75) circle (2pt);
        \fill[red] (1.5,.75) circle (2pt);
        \fill[red] (2.5,.75) circle (2pt);
        
        \fill[red] (1,1.5) circle (2pt);
        \fill[red] (2,1.5) circle (2pt);

        \fill[red] (3,1.5) circle (2pt);
        \fill[red] (3,0) circle (2pt);

    \end{tikzpicture}
    \caption{Sample square lattice and sample triangle lattice}
    \label{fig:lattice}
\end{figure}

In the square lattice model, guaranteed coverage is achieved when the distance between adjacent lattice nodes, $r$, is set to one discretized cell. Given this step size and a sufficient number of agents, specifically, one per cell, the entire environment can be covered in finite time. However, when agents are too few or spaced too far apart, coverage gaps occur, as shown in Figure~\ref{fig:lattice_coverage}(a). Full coverage can be achieved by deploying more agents and minimizing the step size, as shown in Figure~\ref{fig:lattice_coverage}(b).

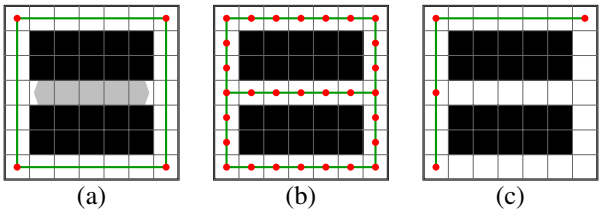
\begin{figure}[h!]
    \centering
    \begin{tikzpicture}[scale = 0.33]
        \fill[white] (-1,-1) rectangle (6,6);
        \fill[lightgray] (0,2) rectangle (5,3);
        \fill[white,thick] (0,2) -- (0,3) -- (.35,3) -- cycle;
        \fill[white,thick] (0,2) -- (0,3) -- (.35,2) -- cycle;
        
        \fill[white,thick] (5,2) -- (5,3) -- (4.65,3) -- cycle;
        \fill[white,thick] (5,2) -- (5,3) -- (4.65,2) -- cycle;
        \draw[thick] (-1.,-1.) rectangle (6.,6.);

    
        \fill[black] (0,0) rectangle (5,2);
    
        \fill[black] (0,3) rectangle (5,5);

        \draw[step=1cm, gray, very thin] (-1,-1) grid (6,6);

        \draw[thick,mydarkgreen] (-0.5,5.5) -- (5.5,5.5);
        \draw[thick,mydarkgreen] (-0.5,5.5) -- (-.5,-.5);
        \draw[thick,mydarkgreen] (5.5,-.5) -- (-.5,-.5);
        \draw[thick,mydarkgreen] (5.5,5.5) -- (5.5,-.5);

        \fill[red] (-0.5,-.5) circle (4pt);
        \fill[red] (-0.5,5.5) circle (4pt);
        \fill[red] (5.5,5.5) circle (4pt);
        \fill[red] (5.5,-.5) circle (4pt);
        
        \node at (2.5, -1.7) {(a)};
    
    \end{tikzpicture}
    \label{fig:sparse}
    \hspace{0.2cm} 
    \begin{tikzpicture}[scale = 0.33]
        \fill[white] (-1,-1) rectangle (6,6);

        \draw[thick] (-1.,-1.) rectangle (6.,6.);

        \fill[black] (0,0) rectangle (5,2);
    
        \fill[black] (0,3) rectangle (5,5);

        \draw[step=1cm, gray, very thin] (-1,-1) grid (6,6);

        \draw[thick,mydarkgreen] (-0.5,2.5) -- (5.5,2.5);
        \draw[thick,mydarkgreen] (-0.5,5.5) -- (5.5,5.5);
        \draw[thick,mydarkgreen] (-0.5,5.5) -- (-.5,-.5);
        \draw[thick,mydarkgreen] (5.5,-.5) -- (-.5,-.5);
        \draw[thick,mydarkgreen] (5.5,5.5) -- (5.5,-.5);

        \fill[red] (-0.5,-.5) circle (4pt);
        \fill[red] (-0.5,5.5) circle (4pt);
        \fill[red] (5.5,5.5) circle (4pt);
        \fill[red] (5.5,-.5) circle (4pt);

        \fill[red] (-0.5,4.5) circle (4pt);
        \fill[red] (-0.5,3.5) circle (4pt);
        \fill[red] (-0.5,2.5) circle (4pt);
        \fill[red] (-0.5,1.5) circle (4pt);
        \fill[red] (-0.5,0.5) circle (4pt);
        
        \fill[red] (5.5,4.5) circle (4pt);
        \fill[red] (5.5,3.5) circle (4pt);
        \fill[red] (5.5,2.5) circle (4pt);
        \fill[red] (5.5,1.5) circle (4pt);
        \fill[red] (5.5,0.5) circle (4pt);

        \fill[red] (4.5,-.5) circle (4pt);
        \fill[red] (3.5,-.5) circle (4pt);
        \fill[red] (2.5,-.5) circle (4pt);
        \fill[red] (1.5,-.5) circle (4pt);
        \fill[red] (0.5,-.5) circle (4pt);
        
        \fill[red] (4.5,5.5) circle (4pt);
        \fill[red] (3.5,5.5) circle (4pt);
        \fill[red] (2.5,5.5) circle (4pt);
        \fill[red] (1.5,5.5) circle (4pt);
        \fill[red] (0.5,5.5) circle (4pt);
        
        \fill[red] (4.5,2.5) circle (4pt);
        \fill[red] (3.5,2.5) circle (4pt);
        \fill[red] (2.5,2.5) circle (4pt);
        \fill[red] (1.5,2.5) circle (4pt);
        \fill[red] (0.5,2.5) circle (4pt);
        \node at (2.5, -1.7) {(b)};
    \end{tikzpicture}
    \hspace{0.2cm} 
    \begin{tikzpicture}[scale = 0.33]
        \fill[white] (-1,-1) rectangle (6,6);

        \draw[thick] (-1.,-1.) rectangle (6.,6.);

        \fill[black] (0,0) rectangle (5,2);
    
        \fill[black] (0,3) rectangle (5,5);

        \draw[step=1cm, gray, very thin] (-1,-1) grid (6,6);

        \draw[thick,mydarkgreen] (-0.5,5.5) -- (5.5,5.5);
        \draw[thick,mydarkgreen] (-0.5,5.5) -- (-.5,-.5);

        \fill[red] (-0.5,-.5) circle (4pt);
        \fill[red] (-0.5,5.5) circle (4pt);
        \fill[red] (5.5,5.5) circle (4pt);

        \fill[red] (-0.5,2.5) circle (4pt);
        
        \node at (2.5, -1.7) {(c)};
    \end{tikzpicture}
    \caption{This figure illustrates the square lattice algorithm in a discrete environment. Figure (a) shows agents deployed with a step size of 6, while (b) shows a denser deployment with a step size of 1. Figure (c) then shows the final state after deallocation, which results from the deployment in (b).}
    \label{fig:lattice_coverage}
\end{figure}

In contrast, triangular lattice models do not provide the same coverage guarantees in 2D orthogonal worlds. As shown in Figure~\ref{fig:lattice}, agents in triangular lattices are not deployed strictly above or below previous positions. Therefore, deploying agents in a triangular lattice in the same environment often results in incomplete coverage, as depicted in Figure~\ref{fig:triangle_lat}(a)--(c).

\begin{figure}[h!]
    \centering
    \begin{tikzpicture}[scale = 0.33]
        \fill[white] (-1,-1) rectangle (6,6);
        \fill[lightgray] (0,2) rectangle (5,3);
        \fill[lightgray] (5,0) rectangle (6,6);
        \fill[white,thick] (5,0) -- (6,0) -- (6,.2) --  cycle;
        \fill[white,thick] (0,2) -- (0,3) -- (.35,3) -- (0.05,2) -- cycle;
        \fill[lightgray] (0,5) rectangle (5,6);
        \fill[white,thick] (0,5) -- (0,6) -- (.15,6) -- cycle;
        \draw[thick] (-1.,-1.) rectangle (6.,6.);

    
        \fill[black] (0,0) rectangle (5,2);
    
        \fill[black] (0,3) rectangle (5,5);

        \draw[step=1cm, gray, very thin] (-1,-1) grid (6,6);

        \draw[->,red,thick] (-0.5,-0.5) -- (0.5,0.5);
        \draw[->,darkgreen,thick] (-0.5,-0.5) -- (0.5,-0.5);

        \fill[red] (-0.5,-.5) circle (4pt);
        
        \node at (2.5, -1.5) {(a)};
    \end{tikzpicture}
    \hspace{0.2cm}
    \begin{tikzpicture}[scale = 0.33]
        \fill[white] (-1,-1) rectangle (6,6);
        \fill[lightgray] (0,2) rectangle (5,3);
        \fill[lightgray] (5,0) rectangle (6,6);
        \fill[white,thick] (5,0) -- (6,0) -- (6,.2) --  cycle;
        \fill[white,thick] (0,2) -- (0,3) -- (.35,3) -- (0.05,2) -- cycle;

        \fill[lightgray] (0,5) rectangle (5,6);
        \fill[white,thick] (0,5) -- (0,6) -- (.15,6) -- cycle;
        
        \draw[thick] (-1.,-1.) rectangle (6.,6.);
    
        \fill[black] (0,0) rectangle (5,2);
    
        \fill[black] (0,3) rectangle (5,5);

        \draw[step=1cm, gray, very thin] (-1,-1) grid (6,6);

        
        \draw[black] (-0.5,-0.5) -- (0.5,-0.5);
        \draw[->,red,thick] (0.5,-.5) -- (1.5,0.5);
        \draw[->,darkgreen,thick] (0.5,-.5) -- (-0.5,0.5);
        \draw[->,darkgreen,thick] (0.5,-.5) -- (-0.5,-0.5);
        \draw[->,darkgreen,thick] (0.5,-.5) -- (1.5,-0.5);

        \fill[red] (-0.5,-.5) circle (4pt);
        \fill[red] (0.5,-.5) circle (4pt);
        
        \node at (2.5, -1.5) {(b)};
    \end{tikzpicture}
    \hspace{0.2cm}
    \begin{tikzpicture}[scale = 0.33]
        \fill[white] (-1,-1) rectangle (6,6);
        \fill[lightgray] (0,2) rectangle (5,3);
        \fill[white,thick] (0,2) -- (0,3) -- (.45,3) -- (0.05,2) --cycle;
        
        \fill[white,thick] (5,2) -- (5,3) -- (4.55,3) --(4.95,2) -- cycle;

        \fill[lightgray] (0,5) rectangle (5,6);
        \fill[white,thick] (0,5) -- (0,6) -- (.15,6) -- cycle;
        
        \fill[white,thick] (5,5) -- (5,6) -- (4.85,6) -- cycle;
        \draw[thick] (-1.,-1.) rectangle (6.,6.);

        \fill[black] (0,0) rectangle (5,2);
    
        \fill[black] (0,3) rectangle (5,5);

        \draw[step=1cm, gray, very thin] (-1,-1) grid (6,6);

        \draw[->,red,thick] (-0.5,0.5) -- (0.5,1.5);
        \draw[->,red,thick] (5.5,0.5) -- (4.5,1.5);

        \draw[thick,mydarkgreen] (-0.5,-0.5) -- (5.5,-0.5);
        \draw[thick,mydarkgreen] (-0.5,0.5) -- (-.5,-0.5);
        \draw[thick,mydarkgreen] (5.5,0.5) -- (5.5,-0.5);

        \fill[red] (-0.5,-.5) circle (4pt);
        \fill[red] (-0.5,.5) circle (4pt);
        \fill[red] (0.5,-.5) circle (4pt);
        \fill[red] (1.5,-.5) circle (4pt);
        \fill[red] (2.5,-.5) circle (4pt);
        \fill[red] (3.5,-.5) circle (4pt);
        \fill[red] (4.5,-.5) circle (4pt);
        \fill[red] (5.5,-.5) circle (4pt);
        \fill[red] (5.5,.5) circle (4pt);
        
        \node at (2.5, -1.5) {(c)};
    \end{tikzpicture}
    
    \caption{The diagrams show the progression of agent deployments using triangle lattices, from the initial configuration as showed in (a), through the intermediate configuration (b) to the final one, depicted in (c). Red arrows indicate triangle lattice positions that are invalid, while green arrows represent valid next deployment  directions.}
    \label{fig:triangle_lat}
\end{figure}
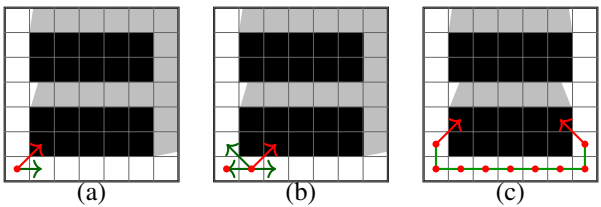

For both lattice types, very small \( r \) values produce overly dense deployments that may exceed \( N_{\max} \) as seen in Figure~\ref {fig:lattice_coverage}(b). The lattice algorithms significantly benefit from deallocation to reduce the number of agents, as seen in Figure~\ref{fig:lattice_coverage}(c).

\subsubsection{Artificial potential fields}

Artificial Potential Field (APF) is a widely used technique for MANETs and robot deployments~\cite{howard2002mobile}. It relies on virtual forces~\cite{khatib1986real} generated by other agents ($\boldsymbol{F}_a$) and obstacles ($\boldsymbol{F}_o$), with the total force $\boldsymbol{F} = \boldsymbol{F}_o + \boldsymbol{F}_a$. These forces can move agents into static equilibrium points, where the net force is zero, analogous to Lagrange points~\cite{broucke1979traveling} in celestial mechanics. A key advantage is that APF enables decentralized deployment: each agent acts using only relative distances to nearby objects and agents, so the overall sensor network operates as a MANET. Traditionally, APF models assume a continuous world ({\it e.g.}, $\mathbb{R}^2$); we implement APF in a discretized environment for consistent comparison, computing forces from distances between discrete cells.

A well-known limitation of APF algorithms is that they do not guarantee complete coverage or network connectivity, especially in complex environments. Agents may become trapped in local minima (static equilibrium points) and be unable to expand. Once all agents are trapped and stop moving, the algorithm terminates, and the agents are considered to have reached their terminal positions for deallocation.

\subsubsection{Frontier deployment algorithms}

Incremental Self-Deployment Algorithm (ISDA) incrementally deploys a MANET by placing new agents at the frontier, or edge, of the current coverage area~\cite{howard2002incremental}. This approach enables a systematic expansion of the network, as each new agent is placed at a location at the edge of the current coverage, ensuring that every agent remains connected to at least one other agent. Given a sufficient number of agents, the ISDA can guarantee full spatial coverage while maintaining network connectivity. In this paper, we follow a previously used implementation, according to which the next point is chosen at random along the edges of coverage points~\cite{howard2002incremental}. As in previous algorithms, the ISDA is extended with deallocation: once an agent reaches its target position, it enters a terminal position and becomes eligible for deallocation.

\subsubsection{Multi-agent reinforcement learning}

\begin{figure}[h!]
  \centering
  \includegraphics[width=0.5\textwidth]{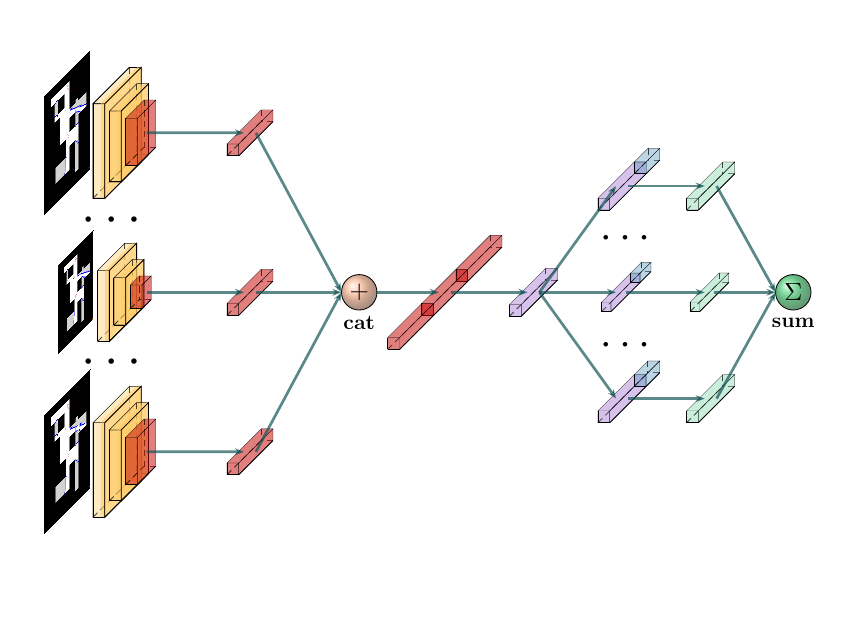}
  \caption{Neural network architecture for VDN}
  \label{fig:nnarch_vdn}
\end{figure}

Multi-agent reinforcement learning (MARL) has emerged as a powerful framework for robotics, where multiple agents learn coordinated policies. Fully cooperative MARL tasks, such as deploying MANETs to cover spaces, can be formulated as a Dec-POMDP~\cite{oliehoek2016concise}, defined by $\mathcal M = \bigl\langle Z,\,A,\,\mathcal{O},\,\mathcal{R},\,\mathcal{P},\,n,\,\gamma \bigr\rangle.$ This tuple consists of $n$ agents, global states $Z$, a shared action space $A$, and an observation space $\mathcal{O}$. Due to partial observability, each agent $i$ receives an observation $o_i^t = O(s;\,i)$ at time $t$ and selects an action $a_i^t \sim \pi_i(\,\cdot\mid o_i^t)$. The resulting joint action is $\boldsymbol{a}^t = \bigl(a_1^t,\dots,a_n^t\bigr) \in A^n$. Then, the system transitions to state $s'$ by dynamics $\mathcal{P} = P\bigl(s' \mid s, \, \boldsymbol{a}^t\bigr)$. In addition, all the agents receive a shared reward $R(s, \,\boldsymbol{a}^t)$. The collective goal is to find a joint policy $\boldsymbol{\pi}=(\pi_1,\dots,\pi_n)$ that maximizes the expected discounted return, $J(\boldsymbol{\pi}) = \mathbb{E}\Bigl[\;\sum_{t=0}^{\infty} \gamma^t\,R\bigl(s^t,\, \boldsymbol{a}^t\bigr)\Bigr],$ where $\gamma$ is the discount factor.

We use two methods as MARL benchmarks for MANETs deployment, each representative of the Centrally Trained Decentralized Executed (CTDE) and the Decentralized Trained Decentralized Executed (DTDE) classes. CTDE methods leverage global information during training to optimize a joint objective~\cite{rashid2020monotonic, sunehag2017value, yu2022surprising}, whereas DTDE methods operate fully independently, with coordination emerging implicitly as agents learn to optimize local objectives that collectively advance the global goal~\cite{jiang2022i2q, tampuu2017multiagent}. During deployment, both paradigms require each agent $i$ to select an action $a_i^t$ based only on its local observation $o_i^t$, without access to the full system state. To improve exploration in complex worlds, we apply Temporally Extended $\epsilon$-greedy exploration~\cite{dabneytemporally}. For both methods, the per-step reward is defined as
\begin{align*}
  \mathcal{R}(s^t, \boldsymbol{a}^t) = & \alpha \cdot \mathrm{AreaCovered} - \beta \cdot \mathrm{CollisionCost} \\ 
  & \quad   - \delta \cdot \mathrm{DisconnectionCost} \cdot \mathrm{NumAgentLost},
\end{align*}
where $\mathrm{AreaCovered}$ is the total area covered by all agents, $\mathrm{CollisionCost}$ penalizes collisions with walls or other agents, and $\mathrm{DisconnectionCost}$ penalizes agents disconnected from the communication graph. $\alpha, \beta, \delta$ are reward scales to control the scale. Collision penalties are assigned to the responsible agent, while coverage and disconnection costs are shared equally among all agents. This reward design encourages agents to maximize coverage while maintaining network connectivity. The DTDE and CTDE methods that we implemented are the following. 

\subsubsection*{DTDE: Independent Q-Learning (IQL)~\cite{tampuu2017multiagent}} This algorithm applies Deep Q‐Learning to a multi‐agent setting by training separate Deep Q‐Networks for each agent under a DTDE paradigm. Each agent \(i\) seeks to approximate its own optimal action‐value function \(Q_i^*\) using only local rewards. The optimality follows the Bellman equation: $Q_i^*(s, a) = \mathbb{E}_{s'}\bigl[r_i + \gamma \max_{a'} Q_i^*(s', a') \mid s, a\bigr].$ Although this approach is simple and computationally less intensive than coordinated methods, it struggles with global coordination because each agent optimizes its objective independently, and the environment becomes nonstationary from any single agent’s viewpoint.

\subsubsection*{CTDE: Value Decomposition Network (VDN)~\cite{sunehag2017value}} This method approximates the joint action-value function as \(Q(\boldsymbol{o},\boldsymbol{a})\approx\sum_{i=1}^nQ_i(o_i,a_i)\), where \(\boldsymbol{o}=(o_1,\dots,o_n)\) and \(\boldsymbol{a}=(a_1,\dots,a_n)\). During training, each agent minimizes the shared loss function $L_{\mathrm{VDN}}=\mathbb{E}_{\boldsymbol{o}', \boldsymbol{a}, \boldsymbol{o}}\bigl[\bigl(R(s,\boldsymbol{a})+\gamma\max_{\boldsymbol{a}'}Q(\boldsymbol{o}',\boldsymbol{a}')-Q(\boldsymbol{o},\boldsymbol{a})\bigr)^2\bigr]. $
By decomposing the joint action-value function into a sum of individual value functions, VDN fosters implicit coordination, as agents learn to maximize a shared global reward while executing their policies independently.

\begin{figure*}
  \centering
  \includegraphics[width=\textwidth]{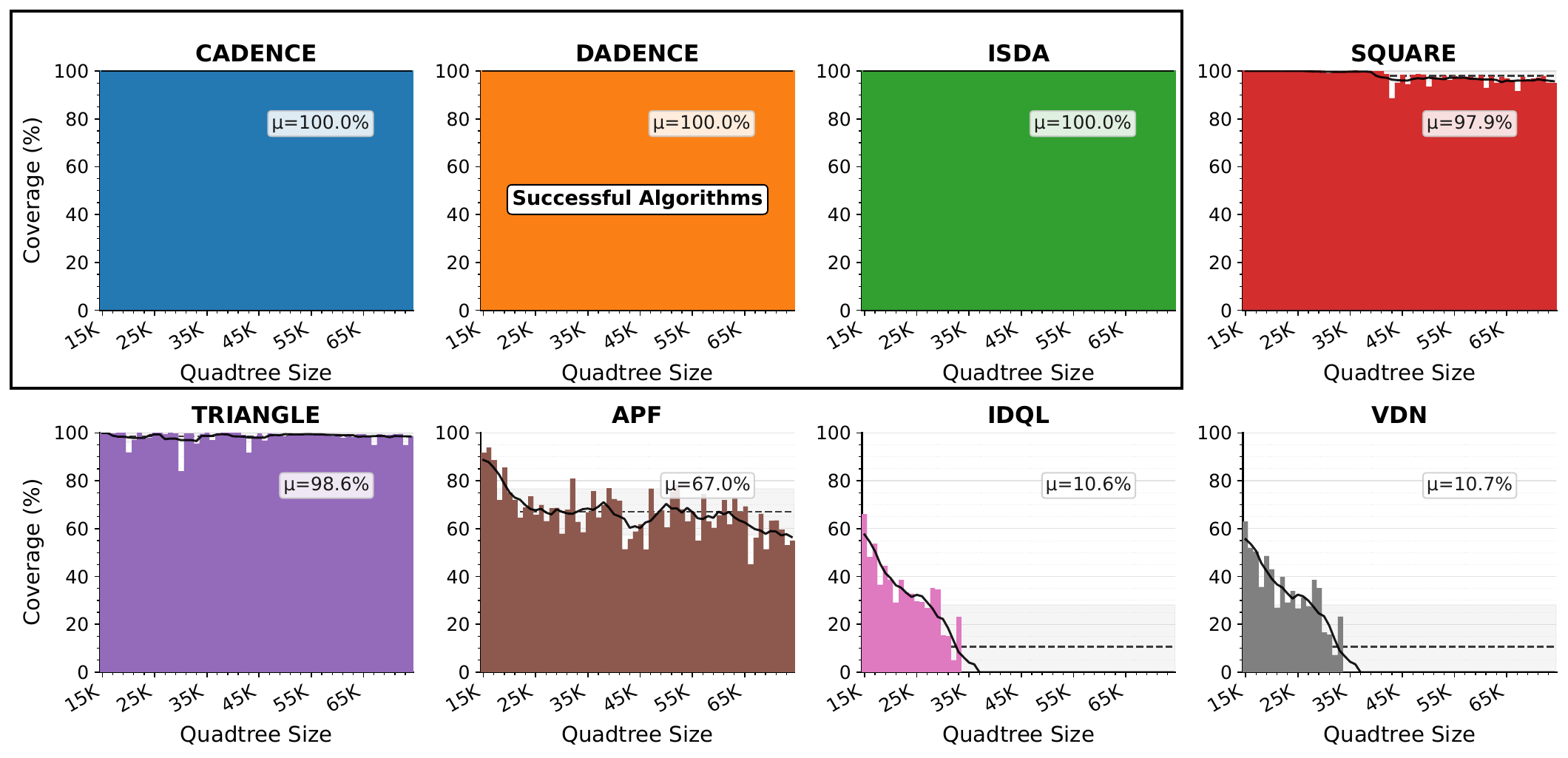}
  \caption{Coverage Data across all experiments and all algorithms. Each plot contains information about the average coverage (represented by $\mu$) and a moving average line showing the average coverage.}
  \label{fig:coverage_data}
\end{figure*}

Both algorithms were trained end-to-end following the setup of \cite{mnih2015human}. At each time step, the input comprises three consecutive grid-world frames as the observation history. These are processed by convolutional layers (kernel size 3, stride 4) with channel sizes \([1,\,32,\,64,\,64]\) to extract spatial features. The resulting feature vector is augmented with a one-hot agent identity and the normalized positions of other agents, then passed through four fully connected layers (hidden size 256) to estimate action values. The only architectural difference is that VDN shares the convolutional backbone among all agents to enhance training efficiency and stability. Figure~\ref{fig:nnarch_vdn} illustrates the VDN structure. Both models were trained using the Adam optimizer~\cite{adam2014method} with a learning rate of \(3 \times 10^{-4} \). We run the deallocation algorithm at the end of the execution.  

Although promising, MARL-based methods face several key limitations. They lack formal coverage guarantees, so some regions in large or complex environments may remain underexplored, leading to poor policy performance. Training is often unstable because each agent’s updates shift the environment distribution for others, creating nonstationarity that can cause oscillations or divergence, compounded by high sample complexity and hyperparameter sensitivity. Scalability is also a challenge. The joint state-action space grows exponentially with the number of agents, increasing memory and computation demands, and complicating coordination in large teams. Moreover, the multi-agent end-to-end training style creates Q-value caveats, leading to imprecise state-action distributions and potentially suboptimal or unstable policies.

These MARL algorithms can only be tested on $50 \times 50$ worlds; with the current network size, larger worlds such as $100 \times 100$ are infeasible due to computational limits. This highlights a key scalability issue in MARL: increasing neural network capacity improves capability, but at a substantial computational cost.

\subsection{Results of numerical tests}
\label{results}

To compare the performance of the algorithms described above, including CADENCE and DADENCE, we describe the numerical tests that we performed, making emphasis on four primary metrics: coverage (in \%), step count, final agent count, and maximal agent count. 

Coverage represents the total area of the environment covered by agents with maintained communication, given as a percentage of the full area covered (0--100\%). Step count denotes the total number of discrete time steps required for algorithm execution, with each deployed agent permitted one action per step. Final agent count indicates the number of agents (including the deployment point pseudo-agent) remaining upon completion of the algorithm after deallocation (Algorithm~\ref{alg:deallocation-while-for}). Finally, the maximum agent count is the peak number of agents (including the pseudo-agent) present simultaneously in the environment during the execution. The step limit of $T_{max}$ is the upper bound on the number of steps an algorithm may take until no more agents can be spawned in, but deallocated agents are allowed to finish their transition to the deployment point even if the step limit is exceeded.

\begin{figure}[h!]
  \centering
  \includegraphics[width=0.47\textwidth]{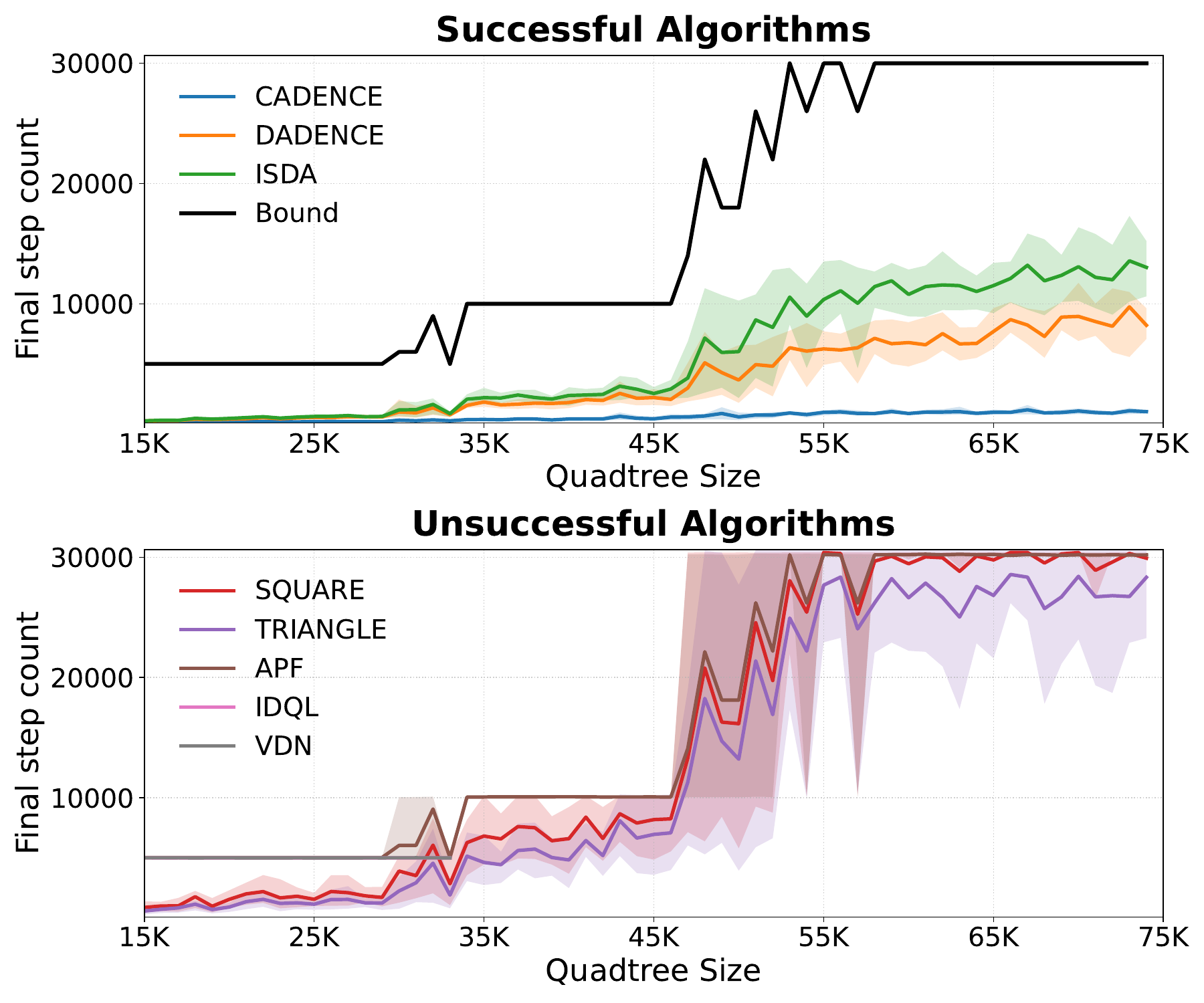}
  \caption{Number of steps taken by algorithm across all experiments. The black line represents the average maximum number of steps allowed at each point. This is an average based on the step limits from each world's size as seen in Section~\ref{test_info}.}
  \label{fig:data_steps}
\end{figure}

\begin{figure}[h!]
  \centering
  \includegraphics[width=0.47\textwidth]{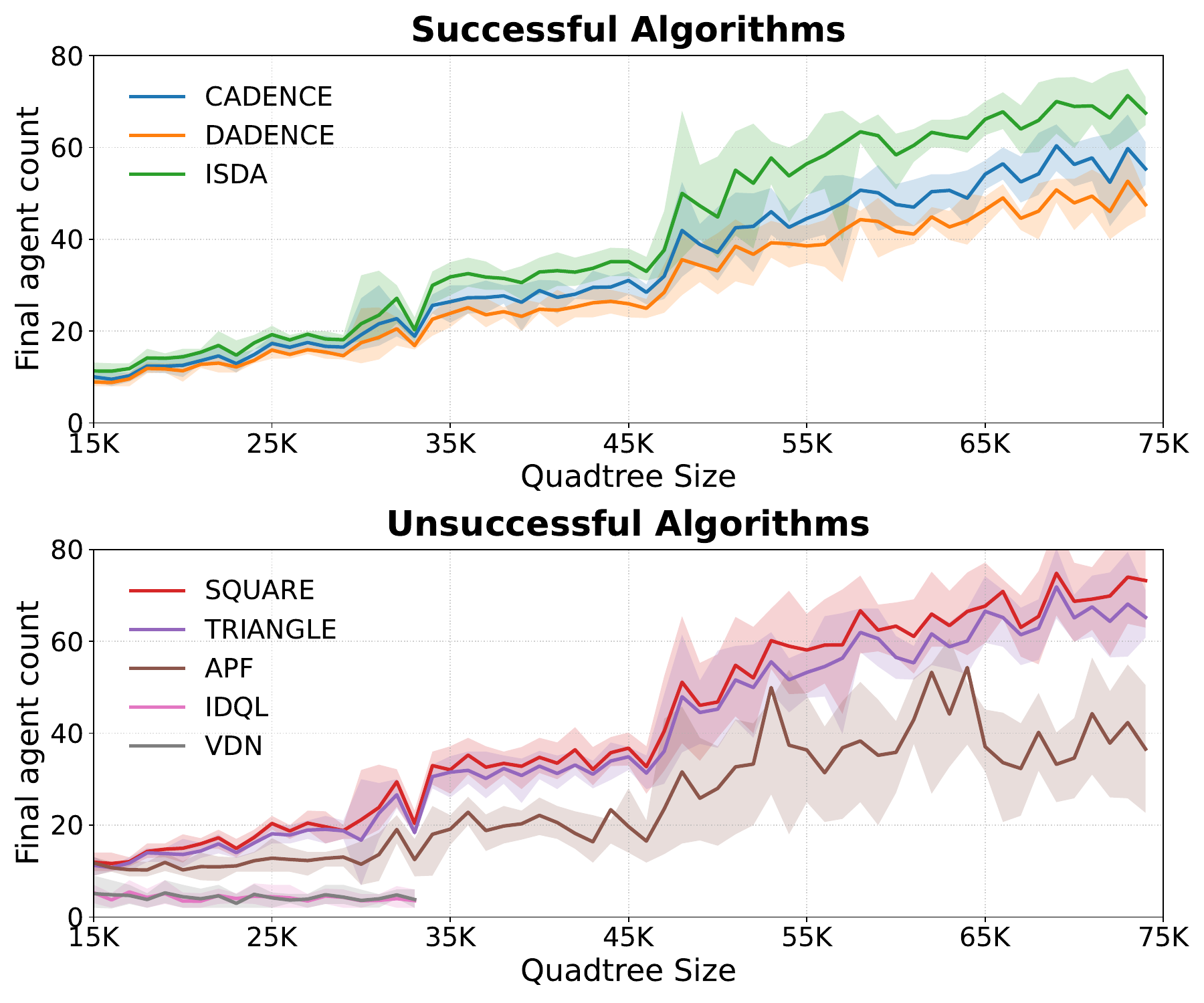}
  \caption{Final number of agents for each algorithm across all experiments. The bound line is omitted from this image as it would affect the scaling of the plot. It is the same bound as seen in Figure~\ref{fig:data_max}.}
  \label{fig:data_final}
\end{figure}

\begin{figure}[h!]
  \centering
  \includegraphics[width=0.47\textwidth]{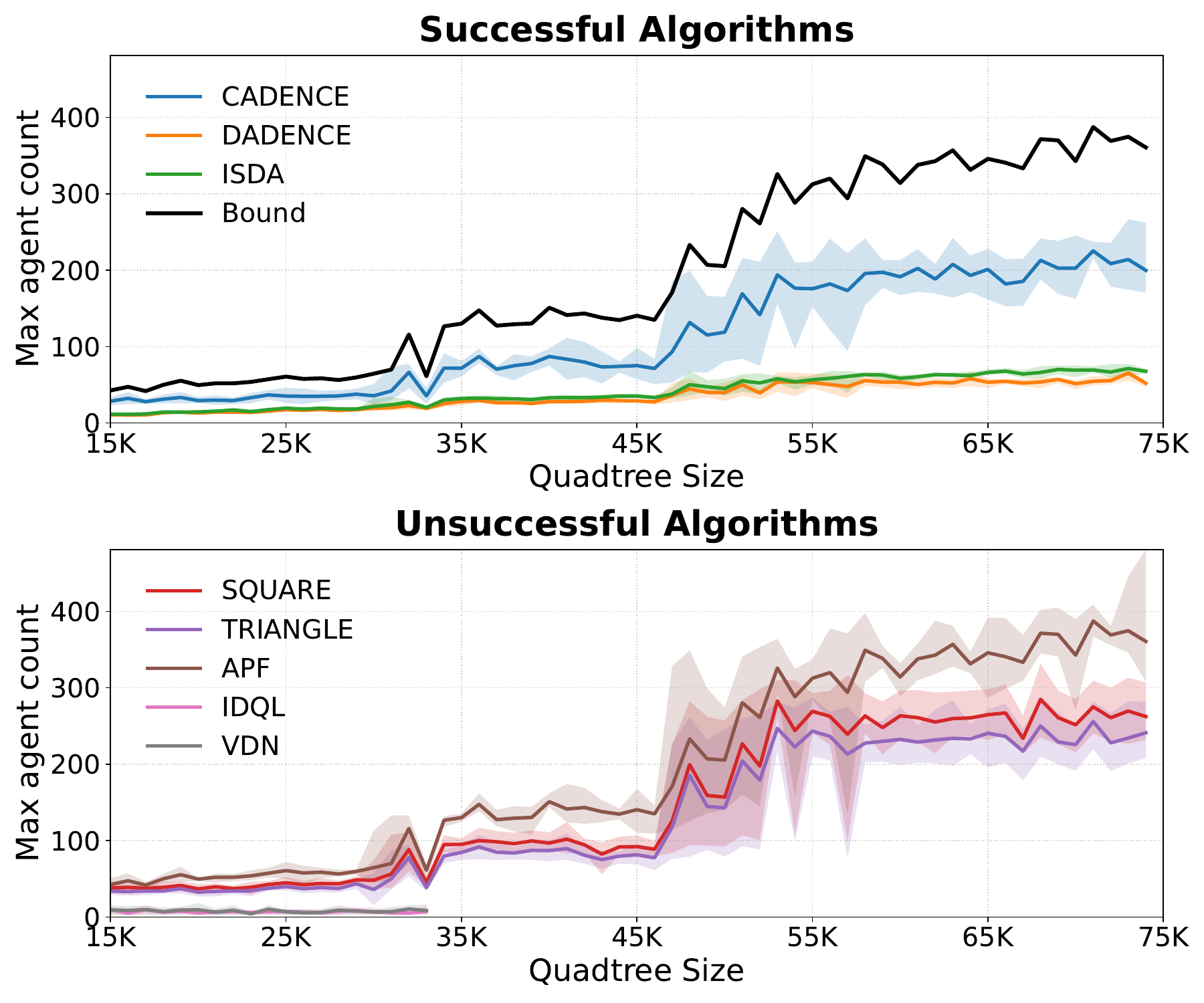}
  \caption{Max number of agents for each algorithm across all experiments. The bound line represents the average maximum number of agents as given from Equation~\ref{457}.}
  \label{fig:data_max}
\end{figure}

\begin{figure}[h!]
  \centering
  \includegraphics[width=0.47\textwidth]{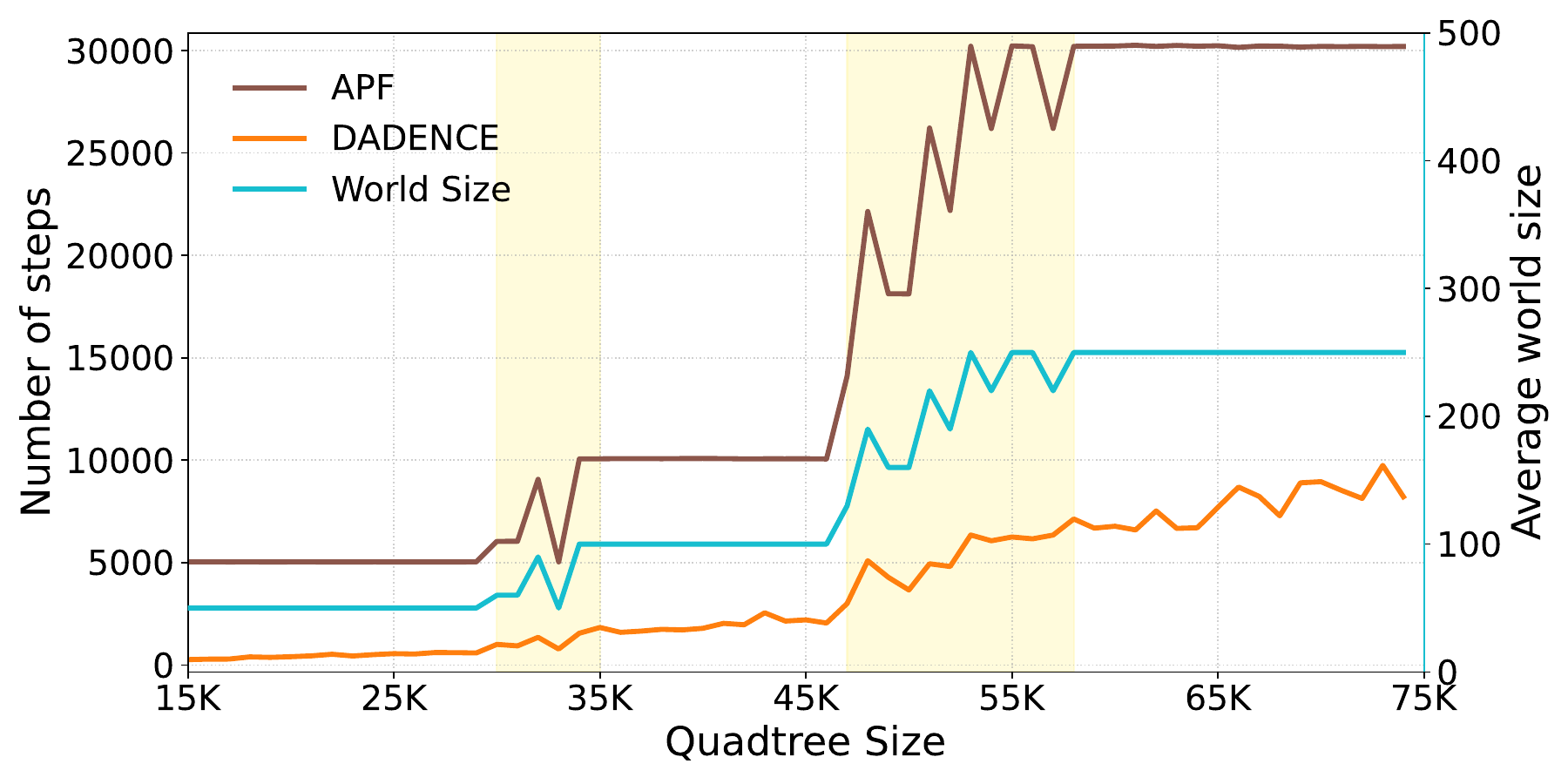}
  \caption{Comparison of the average number of steps required by DADENCE and APF as a function of the average size of the test worlds. Highlighted regions indicate ranges where the world size varies substantially.}
  \label{fig:world_size_steps}
\end{figure}

\begin{figure}[h!]
  \centering
  \includegraphics[width=0.47\textwidth]{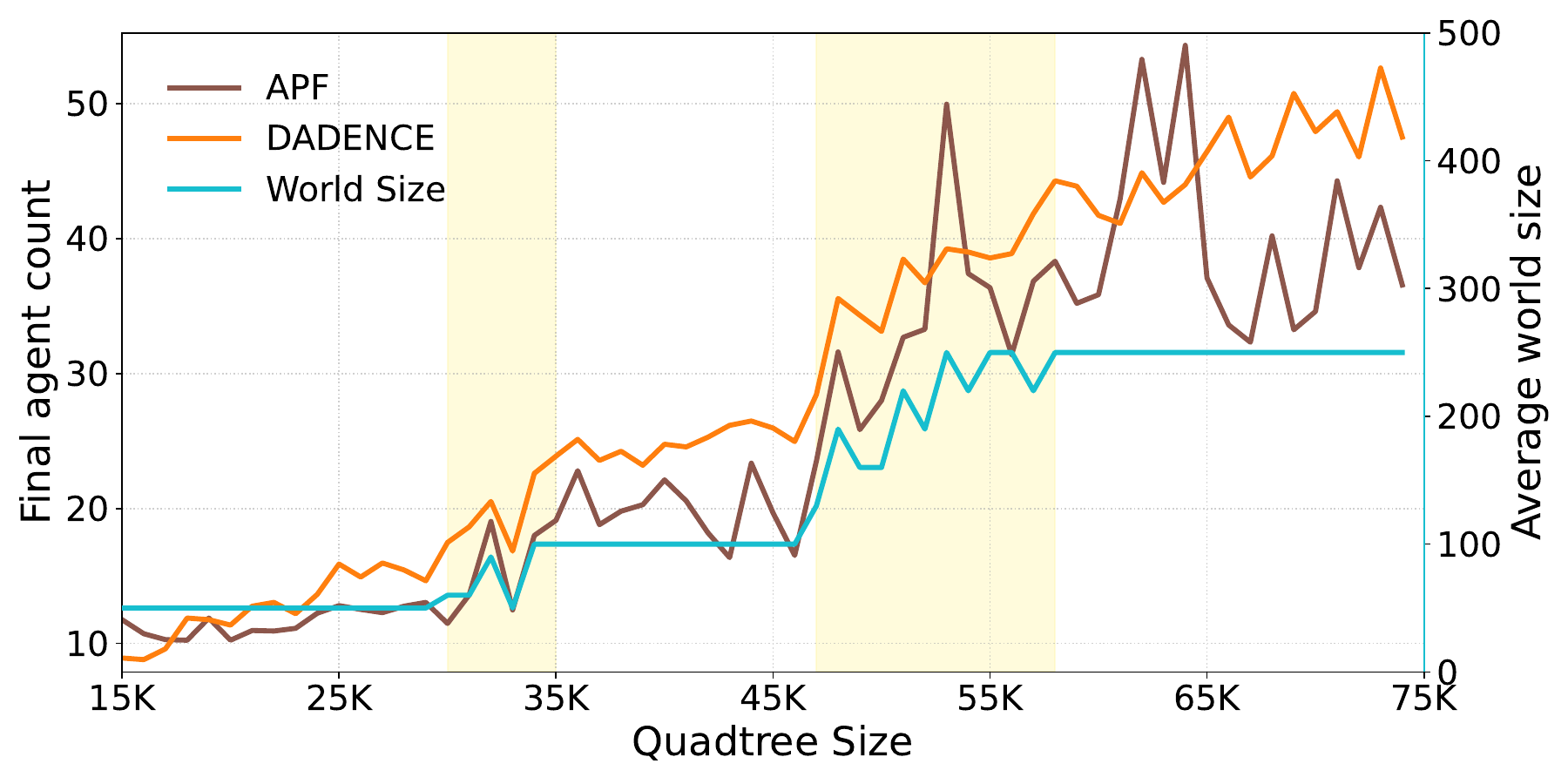}
  \caption{Comparison of the average final agent count for DADENCE and APF as a function of the average size of the test worlds. Highlighted regions indicate ranges where the world size varies substantially.}
  \label{fig:world_size_final}
\end{figure}

\begin{figure}[h!]
  \centering
  \includegraphics[width=0.47\textwidth]{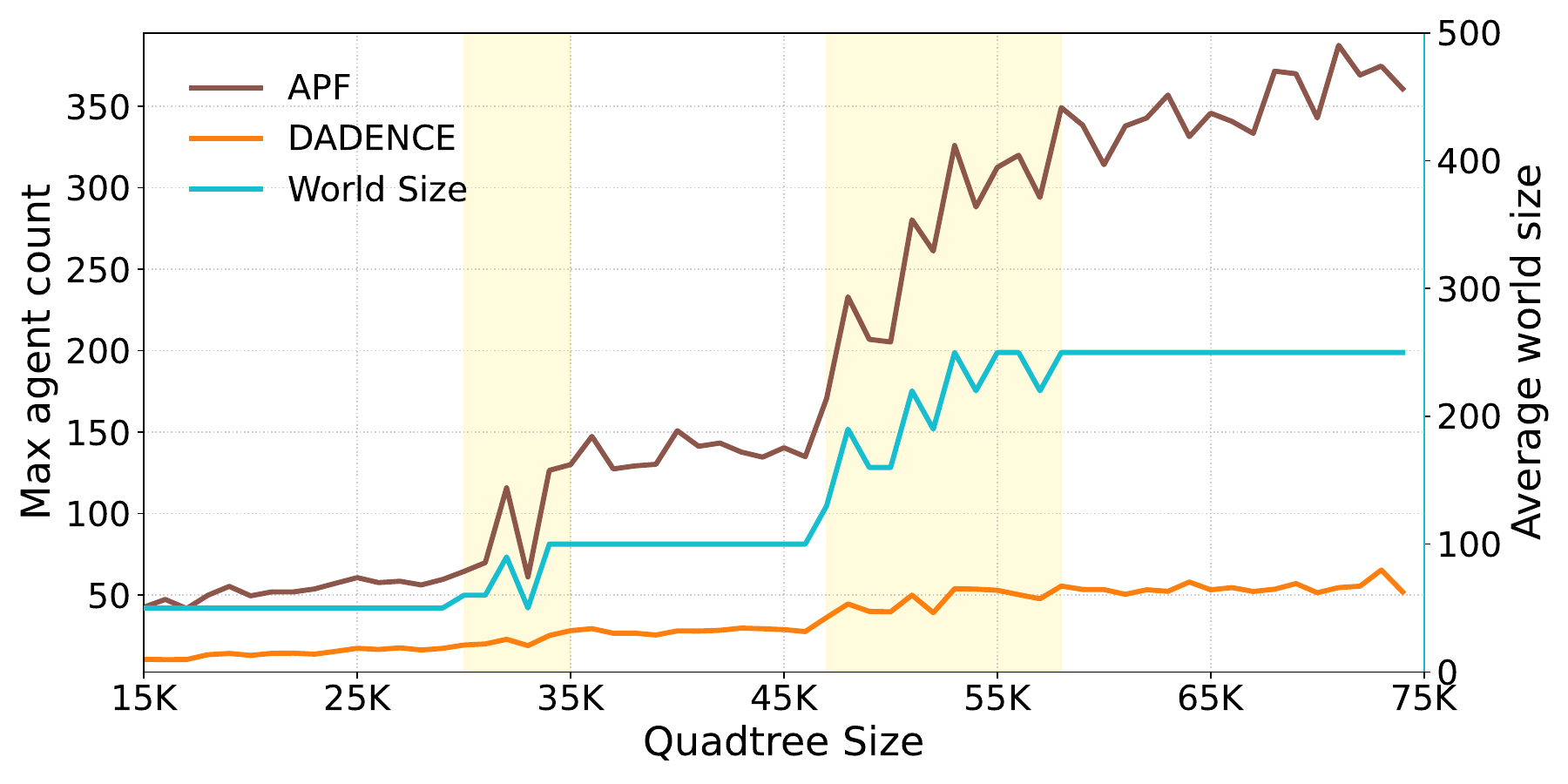}
  \caption{Comparison of the average max agent count for DADENCE and APF as a function of the average size of the test worlds. Highlighted regions indicate ranges where the world size varies substantially.}
  \label{fig:world_size_max}
\end{figure}

\begin{figure}[h!]
  \centering
  \includegraphics[width=0.47\textwidth]{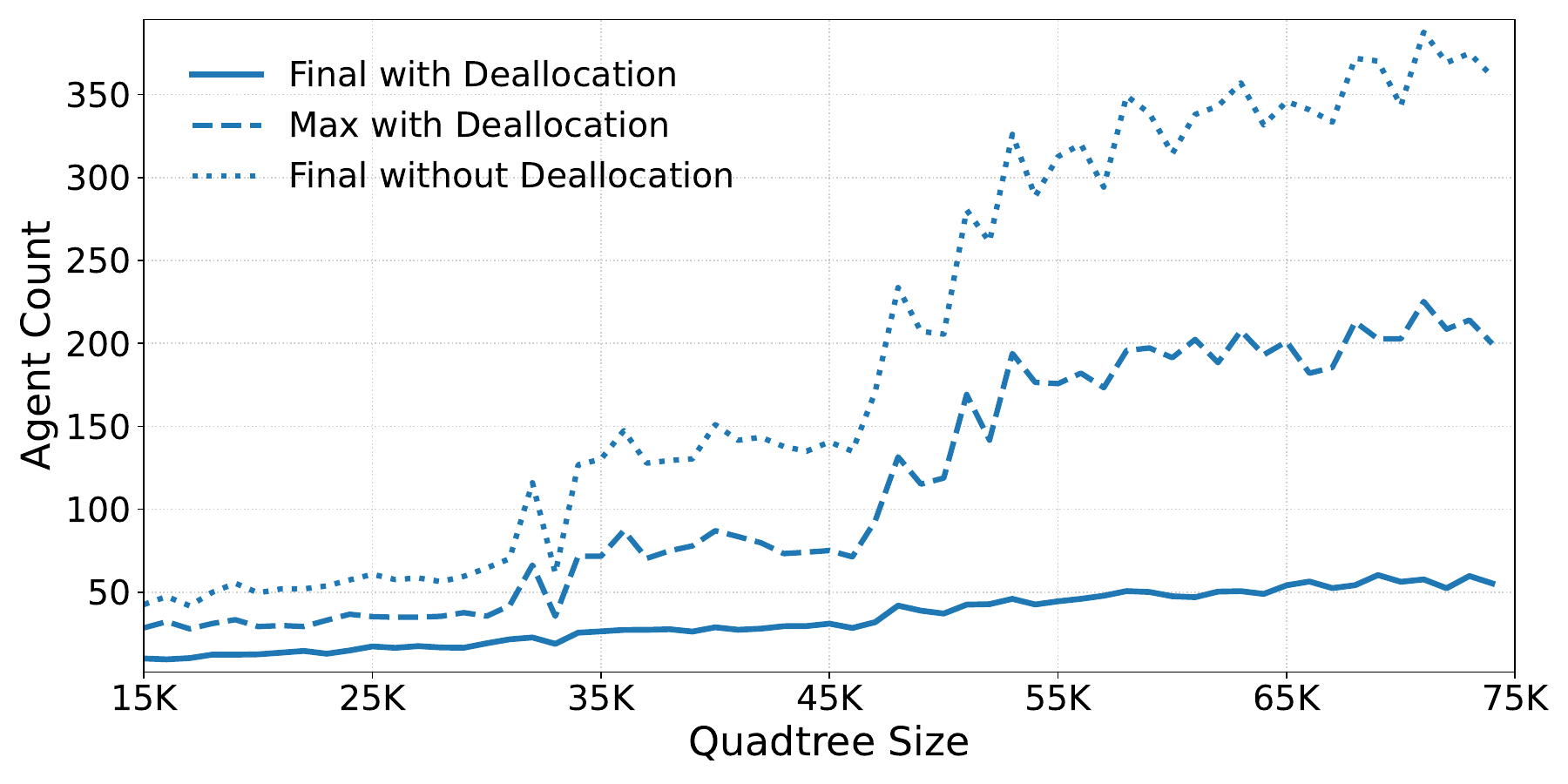}
  \caption{A visual comparison showing the effect of deallocation on CADENCE agent final utilization}
  \label{fig:cadence_de_alloc}
\end{figure}

The results for all tests, grouped by category, are shown in Figures \ref{fig:coverage_data}–\ref{fig:data_max}. Across Figures~\ref{fig:data_steps}, \ref{fig:data_final}, and \ref{fig:data_max}, the results are each split into two subfigures; the top panel shows the successful algorithms and the bottom panel shows the inconsistently successful algorithms, and the figures report step count, final agent count, and maximum agent count, respectively. 

Each aspect of comparison answers specific questions about the ability of each algorithm across the tests. Coverage indicates whether or not the algorithms have been able to solve the POCGAGP on the orthogonal plane with maintained coverage and connectivity, bounded by $N_{\max}$ and $T_{\max}$. Results of coverage below $100\%$ for a given algorithm indicate that the algorithm failed to consistently solve the underlying problem; such an algorithm is then considered {\it unsuccessful}. Then, for successful algorithms, {\it i.e.,} those that consistently achieve complete coverage, we consider how \emph{efficiently} they achieve 100\% coverage. We assess efficiency along two axes: time (total step count) and agent use, including the peak number of agents active at any time (max agent count) and the number remaining at termination (final agent count). The lower these values are, the more resource-usage efficient and algorithm is. Additionally, a comparison can be made between the maximum agent count and the final agent count, showing how many unnecessary agents are deployed at peak but are then no longer needed. Ideally, a smaller difference between these two counts indicates a more efficient deployment with fewer unnecessary agents being utilized. 

Figures~\ref{fig:world_size_steps}--\ref{fig:world_size_max} display the relationship between the number of steps, final number of agents, and max number of agents, and the average world sizes. As shown in Figure~\ref{fig:histogram}, some quadtree–size columns contain a mixture of dungeon sizes rather than a single, consistent world size. The light blue line seen in Figures~\ref{fig:world_size_steps}--\ref{fig:world_size_max} gives an average of world size to be used as a comparison.

Finally, we report the number of agents lost during deployment. Loss of agents arises exclusively in the APF algorithm, which lacks a mechanism to guarantee that deployed agents maintain communication connectivity throughout their movements. In contrast, DADENCE and MARL incorporate mechanisms that prevent agent loss, while CADENCE, ISDA, and the Lattice models deploy agents only into previously explored regions, ensuring that each agent remains connected to the network. Across \(1{,}500\) trials APF deploys nearly \(3\times10^{5}\) agents and loses about \(200\) of them \(\bigl(\approx 6.7\times10^{-4}\text{, i.e., }0.067\%\bigr)\). These lost agents are excluded from APF’s coverage results, shown in Figure~\ref{fig:coverage_data}, and are treated as removed agents in Figure~\ref{fig:data_final}. However, they are still counted in Figure~\ref{fig:data_max}, since the lost agents were initially connected to the network at the start of deployment.

\section{Discussion}
\label{Discussion}

In this section, we analyze our experimental results across all benchmark algorithms and environments. We begin by justifying our choice to measure world complexity using quadtree size.

\subsection{Quadtree complexity measure}
In (Section~\ref{benchmark_env}), we claim that the quadtree size is a more accurate measure of world complexity than raw world size. Figures~\ref{fig:data_steps}--\ref{fig:data_max} support this claim: across all three metrics (step count, final agent count, and maximum concurrent agent count), costs increase with complexity when measured by quadtree size.

Restricting attention to \emph{successful} algorithms in Figures~\ref{fig:data_steps}--\ref{fig:data_max}, we can see that when \emph{world size is fixed}, the metrics still rise with complexity. For instance, for quadtree sizes of $60{,}000$–$74{,}000$ (Figure~\ref{fig:histogram}), all worlds are $250\times250$ cells, yet step count, final agent count, and maximum agent count all trend upward. A similar pattern appears for quadtree sizes of $15{,}000$–$25{,}000$, where all worlds are $50\times50$ cells.

We also observe that fluctuations in \emph{average world size} can affect the \emph{average number of steps}: in Figure~\ref{fig:world_size_steps}, spikes in average world size (yellow) align with step spikes, especially for APF and much less so for DADENCE, revealing algorithm-dependent sensitivity. In contrast, this effect is largely absent in Figures~\ref{fig:world_size_final} and~\ref{fig:world_size_max}, where world-size spikes have little apparent impact on final and maximum agent counts. This reinforces the argument that world size alone is not a reliable proxy for complexity: if it were, its spikes would systematically affect all metrics. Intuitively, as worlds grow, more agents are not necessarily needed to solve POCGAGP, but agents must typically traverse longer paths, increasing the number of steps.

\subsection{General performance comparison}
As we have now empirically analyzed the quadtree-based complexity measure, we begin the analysis of comparing CADENCE and DADENCE to the benchmark algorithms. To do this, we classify algorithms into one of two previously mentioned categories: successful and unsuccessful algorithms. By observing which algorithms consistently covered the worlds in Figure \ref{fig:coverage_data}, it can be seen that among the eight algorithms evaluated, only three, CADENCE, DADENCE, and ISDA, consistently achieve complete coverage across all worlds. In contrast, APF, Square Lattices, Triangle Lattices, IDQL, and VDN fail to consistently cover the world.

Square lattice performance is primarily constrained by time: agents are deployed to every free square until coverage is complete. As the number of agents is capped by \(N_{\max}\); once this cap is reached, the algorithm must wait for agents to be deallocated. In practice, this cap does not consistently affect the algorithm; the maximum agent count seldom hits the limit (Figure~\ref{fig:data_max}), whereas the time budget \(T_{\max}\) is consistently exhausted (Figure~\ref{fig:data_steps}). This indicates that the stricter constraint is simply the time limit. The coverage data result shows the algorithm functions across almost all $50 \times 50$ and $100 \times 100$ and fails across most larger $250 \times 250$ worlds. Triangular lattices suffer from the same underlying issues as square lattices and, in addition, do not guarantee that coverage can be achieved even with infinite time and agents. This lack of guarantee can be seen in the coverage data, where the inability to consistently cover the world is more evenly spread out across all world sizes. APF is primarily limited by the inability of agents to move freely, as they can become stuck in local minima between the various forces pushing on each other. This is the fundamental reason why APF, even in infinite time, has no guarantee of full world coverage~\cite{howard2002mobile}. 

The two MARL algorithms effectively suffer from the same underlying issue, preventing them from covering worlds. The algorithms fail at generalizing agents over uncertainty in spaces. The algorithms in some cases perform so poorly that as the agents are deployed one by one, eventually one does not move from the deployment point, which blocks all other agents from spawning. This is why the MARL algorithms deploy so few agents even at max usage as seen in Figure~\ref{fig:data_max}. In isolated training cases on only one world, with one start location, the MARL algorithms can solve the POCGAGP on the orthogonal plane with maintained coverage and connectivity, but only in cases with no more than five to ten agents. This failure in MARL is consistent with its well-documented scalability issues in complex environments, especially for decentralized algorithms \cite{liu2024scaling,gronauer2022multi,amato2024introduction}.

Taken together, the shortcomings of the failing algorithms are not uniform. MARL performs the worst: high computational cost, poor generalization, no coverage guarantee, and, as can be seen in Figure~\ref{fig:coverage_data}, those two algorithms have the worst coverage results. APF, while simpler, is plagued by its susceptibility to local minima and disconnection. Triangle lattices fare better, as their structured deployment provides more consistent results than APF or MARL. Square Lattices perform at the top of this underperforming group, achieving reliable coverage in smaller environments but scaling poorly as world size increases. This performance hierarchy, MARL and APF at the bottom, Triangle Lattices in the middle, and Square Lattices at the top, is clearly reflected in the coverage results graphs, where the divide between partial and complete coverage widens with increasing complexity.

For these failing algorithms, it is not meaningful to compare step counts, final number of agents, and maximum number of agents, since they do not actually solve the  POCGAGP on the orthogonal plane. For example, IDQL did not fully cover any world, and when it attempts to, it covers very little of each world. Consequently, it utilizes far fewer agents than APF (Figs.~\ref{fig:data_final} and \ref{fig:data_max}). One could compare which algorithm fails more efficiently, by covering more of the world or using fewer resources, but this would solve a different problem than the POCGAGP on the orthogonal plane with maintained coverage and connectivity.

We now turn to the algorithms that consistently solve the POCGAGP problem under the $N_{max}$ and $T_{max}$ bounds. Between CADENCE and DADENCE, an interesting dynamic emerges. While both algorithms significantly underuse the allocated resources, DADENCE consistently utilizes fewer agents, whereas CADENCE achieves coverage in fewer steps. This divergence can be traced to structural differences: DADENCE does not restrict agents to static guard positions, whereas CADENCE enforces vertex-guard placement. Within the CGAGP framework, algorithms restricted to vertex guards (such as CADENCE) are known to admit less favorable worst-case bounds compared to point guard algorithms (such as DADENCE) with a bound of \(\frac{n-2}{2}\)~\cite{zylinski2004cooperative}. Moreover, DADENCE agents remain mobile throughout deployment, reorganizing to maximize coverage as exploration progresses. On the other hand, the reason why CADENCE takes fewer steps is that each agent has their own goal and is simply moving towards it. This policy, according to which all the agents move towards their own targets, prevents any collectively organized re-organization, which DADENCE does benefit from, where all agents share the same goal. This structural distinction manifests in an average difference: CADENCE utilizes an average of \(3{,}100\) fewer steps than DADENCE, while DADENCE uses roughly 4 fewer final agents (after deallocation) and, finally, uses almost an average of 79 fewer maximum agents than CADENCE. Both algorithms remain below the time-step limit and the maximum-agent limit, shown in Figure~\ref {fig:data_max}.

Both algorithms consistently outperform ISDA. Although ISDA does not over-deploy as much as CADENCE (Figure~\ref{fig:data_max}), it requires more steps than both CADENCE and DADENCE (Figure~\ref{fig:data_steps}) and, after deallocation, more final agents remain than CADENCE and DADENCE (Figure~\ref{fig:data_final}); it uses many more agents than DADENCE at peak (Figure~\ref{fig:data_max}). Operationally, CADENCE and ISDA are similar in that they move one agent at a time to expand coverage. The key difference is the agent placement policy: CADENCE selects deployment positions more judiciously. Its superior agent efficiency indicates the advantage of placing agents at valid corners. While edge placements can occasionally beat CADENCE in agent count, such cases are rare and not systematic.

\begin{figure}[t]
    \centering
    \includegraphics[
        width=0.5\textwidth,
        trim=10 0 180 0,
        clip
    ]{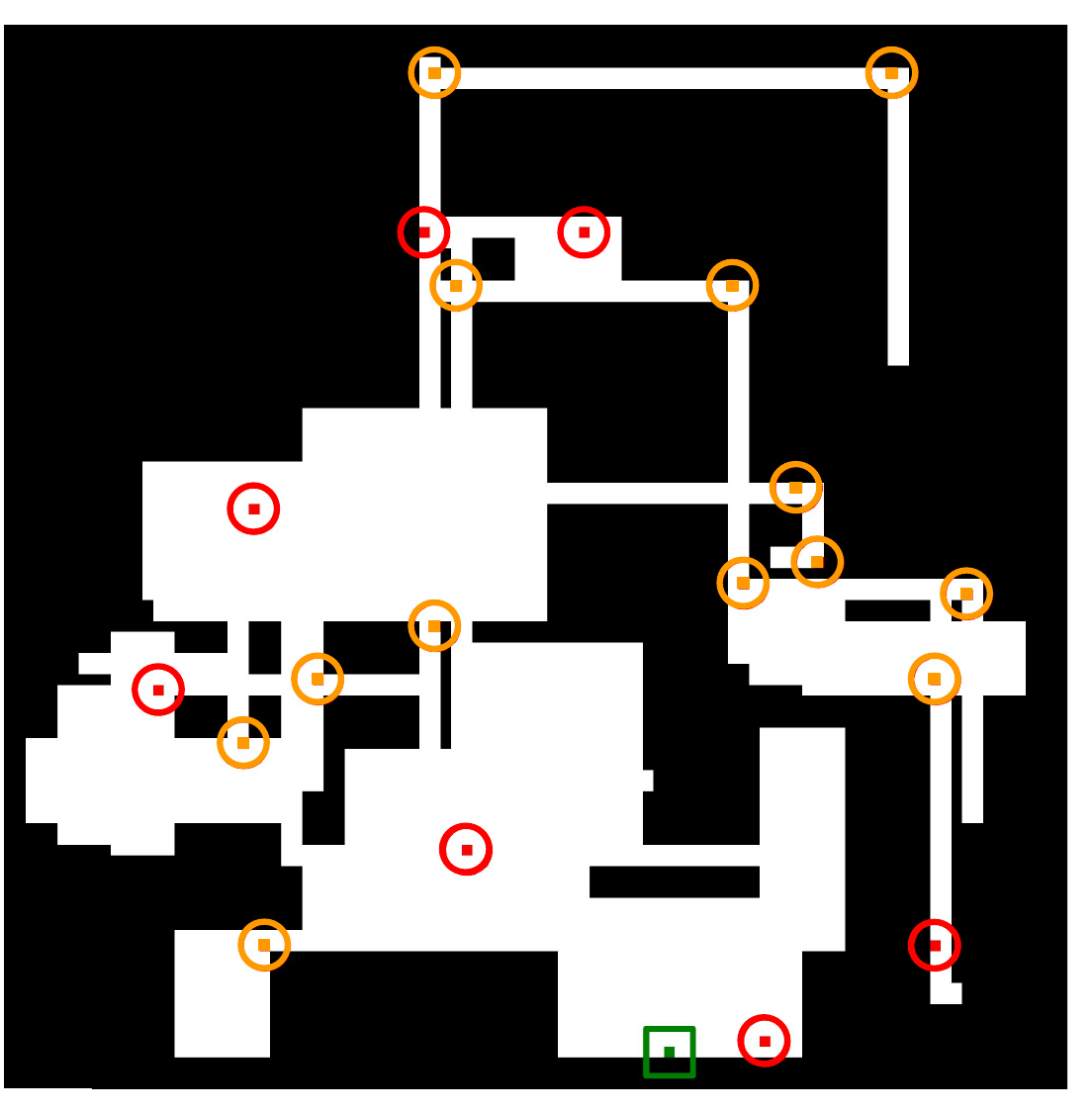}
    \caption{Sample DADENCE deployment: orange circles mark agents positioned at or adjacent to valid corners, red circles mark agents away from valid corners, and the green square indicates the deployment point.}
    \label{fig:dadence_valid}
\end{figure}

Another interesting observation from our tests is that DADENCE tends to place many agents at or immediately adjacent to valid corners, as illustrated in Figure~\ref{fig:dadence_valid}, where 13 out of 20 agents (65\%) occupy such locations. Due to the fact that DADENCE is the most agent-efficient algorithm across all successful trials, this behavior further supports the near-optimality of valid-corner placement. 
\subsection{Centralized versus Decentralized}
Having compared successful and unsuccessful algorithms, we now turn to how the centralized and decentralized methods relate to one another. Among the benchmarked algorithms, CADENCE, ISDA, and Lattices are centralized planners, while DADENCE, APF, VDN, and IDQL are decentralized.

Centralized planners benefit from global information and coordinated decision-making, which yields systematic strategies that reliably find solutions in the POCGAGP setting. The main differences between CADENCE, ISDA, and Lattices lie in their planning strategies. CADENCE’s strategy is the most effective: it solves the POCGAGP on all tested worlds, requires fewer steps than ISDA, and uses fewer agents than the lattice-based methods, which themselves do not always find a solution (Figs.~\ref{fig:data_steps}--\ref{fig:data_max}).

The decentralized algorithms present a sharp contrast. Most of them fail to consistently cover the world and are strongly dominated by the centralized methods in terms of success rate. DADENCE is the notable exception: it clearly separates itself from APF, VDN, and IDQL, reliably covering the worlds and competing with the best centralized approaches. This robustness stems from the fact that DADENCE explicitly enforces both coverage expansion and the maintenance of connectivity, properties that APF and the MARL-based methods lack. In this setting, when decentralized algorithms are carefully designed, agents can continually reorganize and sometimes discover more resource-efficient coverage configurations than centralized planners, as suggested by the final and peak agent counts in Figs.~\ref{fig:data_final} and \ref{fig:data_max}, albeit at the cost of additional time steps (Figure~\ref{fig:data_steps}). By contrast, APF, VDN, and IDQL illustrate how naive or purely learning-based decentralized policies are highly vulnerable to local minima, network disconnections, and poor generalization.

Although our POCGAGP formulation does not explicitly reward the structural benefits of decentralization, it is important to note that decentralized systems avoid dependence on a single controller and are thus more robust to controller failure. This advantage, while not reflected in our current evaluation, becomes critical in POCGAGP settings with intermittent communication, hardware faults, or adversarial disturbances.

\subsection{Impacts of deallocation}
As a final point in this section, we examine the role of deallocation in POCGAGP, noting that its benefits vary across algorithms in our experiments. This is reflected by comparing each algorithm’s final agent count in Figure~\ref{fig:data_final} with its peak count in Figure~\ref{fig:data_max}: the reduction from peak to final agents is entirely due to deallocation.

For agent-efficient methods such as DADENCE, this reduction is modest, since DADENCE already uses few unnecessary terminal agents. CADENCE, by contrast, gains substantially: as shown in Figure~\ref{fig:cadence_de_alloc}, without deallocation it would terminate with $N_{\max}$ agents (matching its peak usage), whereas with deallocation both the peak and final agent counts are much lower.

The benefit is even more pronounced for the square and triangle lattice algorithms. Naively, they heavily over-deploy agents, carpeting the environment with lattice structures. Deallocation allows them to release terminal agents and reposition them to new lattice locations, typically achieving coverage under the same $N_{\max}$ constraint. Without deallocation, these lattice-based methods would cover only a small fraction of the environment and would rarely solve the POCGAGP reliably, even on comparatively small instances.

Taken together, our results show that CADENCE and DADENCE outperform the state-of-the-art results for POCGAGP on the orthogonal plane, maintaining coverage and connectivity in the discrete setting. Both algorithms are able to deploy MANETs that reliably solve POCGAGP, but the choice between them depends on the objective: DADENCE is preferable when minimizing agent count or requiring decentralization, whereas CADENCE is preferable when minimizing the number of steps.

\section{Conclusion}
\label{conclusion}

In this paper, we present a comprehensive framework for self-forming Mobile Ad Hoc Networks (MANETs) by formalizing the Partially Observable Cooperative Guard Art Gallery Problem (POCGAGP), a model that requires agents to collectively achieve full coverage of initially unknown environments while strictly maintaining continuous connectivity. To address this fundamental challenge, we introduce two algorithms: CADENCE, a centralized distributed planner that leverages valid-corner geometry to provide theoretical worst-case guarantees and rapid coverage; and DADENCE, a fully decentralized scheme that maximizes agent efficiency and robustness through local coordination.

We rigorously validate these approaches against the benchmark suite of 1,500 orthogonal dungeon environments, utilizing a quadtree decomposition metric to quantify structural complexity. Our extensive empirical results demonstrate that while standard heuristics and learning-based methods are prone to local minima or scalability issues in complex topologies, our algorithms consistently achieve 100\% coverage.

Ultimately, this work highlights that integrating geometric abstractions into robotics policies enables scalable, guaranteed solutions for critical multi-agent problems. The POCGAGP framework lays the foundation for future research into more unstructured domains, including non-orthogonal worlds, three-dimensional volumes, and adversarial environments where communication resilience is important.

\section{Acknowledgments}
We would like to acknowledge multiple people who have supported this work, with special thanks to Professor Jay Weitzen, Chloe Si, Harley Wiltzer, Yanting Pan, Wilfred Mason, the members of the McGill MRL lab, and the members of Paris-Saclay CentraleSupélec L2S.

ChatGPT 5.1 and Grammarly were used in the grammar editing of the paper. All changes were verified and approved by a human author before the changes were made.

\bibliographystyle{ieeetr}

\bibliography{references}

@inproceedings{huang2003coverage,
  title={The coverage problem in a wireless sensor network},
  author={Huang, Chi-Fu and Tseng, Yu-Chee},
  booktitle={Proceedings of the 2nd ACM international conference on Wireless sensor networks and applications},
  pages={115--121},
  year={2003}
}

@article{thien2018decentralized,
  title={Decentralized formation flight via PID and integral sliding mode control},
  author={Thien, Rebbecca TY and Kim, Yoonsoo},
  journal={Aerospace Science and Technology},
  volume={81},
  pages={322--332},
  year={2018},
  publisher={Elsevier}
}

@incollection{o2017visibility,
  title={Visibility},
  author={O’Rourke, Joseph},
  booktitle={Handbook of discrete and computational geometry},
  pages={875--896},
  year={2017},
  publisher={Chapman and Hall/CRC}
}

@book{latombe2012robot,
  title={Robot motion planning},
  author={Latombe, Jean-Claude},
  volume={124},
  year={2012},
  publisher={Springer Science \& Business Media}
}

@article{howard2002incremental,
  title={An incremental self-deployment algorithm for mobile sensor networks},
  author={Howard, Andrew and Matari{\'c}, Maja J and Sukhatme, Gaurav S},
  journal={Autonomous Robots},
  volume={13},
  pages={113--126},
  year={2002},
  publisher={Springer}
}

@article{borralho2021survey,
  title={A survey on coverage enhancement in cellular networks: Challenges and solutions for future deployments},
  author={Borralho, Ruben and Mohamed, Abdelrahim and Quddus, Atta Ul and Vieira, Pedro and Tafazolli, Rahim},
  journal={IEEE Communications Surveys \& Tutorials},
  volume={23},
  number={2},
  pages={1302--1341},
  year={2021},
  publisher={IEEE}
}

@article{chvatal1975combinatorial,
  title={A combinatorial theorem in plane geometry},
  author={Chv{\'a}tal, Vasek},
  journal={Journal of Combinatorial Theory, Series B},
  volume={18},
  number={1},
  pages={39--41},
  year={1975},
  publisher={Elsevier}
}

@article{wang2011coverage,
  title={Coverage problems in sensor networks: A survey},
  author={Wang, Bang},
  journal={ACM Computing Surveys (CSUR)},
  volume={43},
  number={4},
  pages={1--53},
  year={2011},
  publisher={ACM New York, NY, USA}
}

@article{donald1999art,
  title={The art of computer programming},
  author={Donald, E Knuth and others},
  journal={Sorting and searching},
  volume={3},
  number={426-458},
  pages={4},
  year={1999}
}

@article{khatib1986real,
  title={Real-time obstacle avoidance for manipulators and mobile robots},
  author={Khatib, Oussama},
  journal={The international journal of robotics research},
  volume={5},
  number={1},
  pages={90--98},
  year={1986},
  publisher={Sage Publications Sage CA: Thousand Oaks, CA}
}

@book{lavalle2006planning,
  title={Planning algorithms},
  author={LaValle, Steven M},
  year={2006},
  publisher={Cambridge university press}
}

@article{fan2010coverage,
  title={Coverage problem in wireless sensor network: A survey},
  author={Fan, GaoJun and Jin, ShiYao},
  journal={Journal of networks},
  volume={5},
  number={9},
  pages={1033},
  year={2010},
  publisher={Academy Publisher}
}

@incollection{howard2002mobile,
  title={Mobile sensor network deployment using potential fields: A distributed, scalable solution to the area coverage problem},
  author={Howard, Andrew and Matari{\'c}, Maja J and Sukhatme, Gaurav S},
  booktitle={Distributed autonomous robotic systems 5},
  pages={299--308},
  year={2002},
  publisher={Springer}
}

@INPROCEEDINGS{9515362,
  author={Lippi, Martina and Marino, Alessandro},
  booktitle={2021 30th IEEE International Conference on Robot \& Human Interactive Communication (RO-MAN)}, 
  title={A Mixed-Integer Linear Programming Formulation for Human Multi-Robot Task Allocation}, 
  year={2021},
  volume={},
  number={},
  pages={1017-1023},
  keywords={Collaboration;Switches;Manipulators;Fatigue;Real-time systems;Mixed integer linear programming;Resource management},
  doi={10.1109/RO-MAN50785.2021.9515362}}

@inproceedings{ilie2010distributed,
  title={Distributed multi-agent system for solving traveling salesman problem using ant colony optimization},
  author={Ilie, Sorin and B{\u{a}}dic{\u{a}}, Costin},
  booktitle={Intelligent Distributed Computing IV: Proceedings of the 4th International Symposium on Intelligent Distributed Computing-IDC 2010, Tangier, Morocco, September 2010},
  pages={119--129},
  year={2010},
  organization={Springer}
}

@article{couzin2002collective,
  title={Collective memory and spatial sorting in animal groups},
  author={Couzin, Iain D and Krause, Jens and James, Richard and Ruxton, Graeme D and Franks, Nigel R},
  journal={Journal of theoretical biology},
  volume={218},
  number={1},
  pages={1--11},
  year={2002},
  publisher={Elsevier}
}

@article{mihaly2020model,
  title={Model predictive control for the coordination of autonomous vehicles at intersections},
  author={Mih{\'a}ly, Andr{\'a}s and Farkas, Zs{\'o}fia and G{\'a}sp{\'a}r, P{\'a}ter},
  journal={IFAC-PapersOnLine},
  volume={53},
  number={2},
  pages={15174--15179},
  year={2020},
  publisher={Elsevier}
}

@article{ccabuk2021max,
  title={Max-tree: A novel topology formation for maximal area coverage in wireless ad-hoc networks},
  author={{\c{C}}abuk, Umut Can and Tosun, Mustafa and Dagdeviren, Orhan},
  journal={IEEE/ACM Transactions on Networking},
  volume={30},
  number={1},
  pages={162--175},
  year={2021},
  publisher={IEEE}
}

@article{aileen2021wifi,
  title={Wifi signal strength degradation over different building materials},
  author={Aileen, Aileen and Suwardi, Alexander Dennis and Prawiranata, Fernando},
  journal={Engineering, MAthematics and Computer Science Journal (EMACS)},
  volume={3},
  number={3},
  pages={109--113},
  year={2021}
}

@article{amato2024introduction,
  title={An introduction to centralized training for decentralized execution in cooperative multi-agent reinforcement learning},
  author={Amato, Christopher},
  journal={arXiv preprint arXiv:2409.03052},
  year={2024}
}

@article{gronauer2022multi,
  title={Multi-agent deep reinforcement learning: a survey},
  author={Gronauer, Sven and Diepold, Klaus},
  journal={Artificial Intelligence Review},
  volume={55},
  number={2},
  pages={895--943},
  year={2022},
  publisher={Springer}
}

@article{liu2024scaling,
  title={Scaling up multi-agent reinforcement learning: An extensive survey on scalability issues},
  author={Liu, Dingbang and Ren, Fenghui and Yan, Jun and Su, Guoxin and Gu, Wen and Kato, Shohei},
  journal={IEEE Access},
  volume={12},
  pages={94610--94631},
  year={2024},
  publisher={IEEE}
}

@book{o1987art,
  title={Art gallery theorems and algorithms},
  author={O'rourke, Joseph},
  year={1987},
  publisher={Oxford University Press, Inc.}
}

@article{honsberger1976mathematical,
  title={Mathematical Gems II: The Dolciani Mathematical Expositions},
  author={Honsberger, Ross},
  journal={Mathematical Association of America},
  number={2},
  pages={12},
  volume={2},
  year={1976}
}

@ARTICLE{1057165,
  author={Lee, D. and Lin, A.},
  journal={IEEE Transactions on Information Theory}, 
  title={Computational complexity of art gallery problems}, 
  year={1986},
  volume={32},
  number={2},
  pages={276-282},
  keywords={},
  doi={10.1109/TIT.1986.1057165}}

@inproceedings{liaw1993minimum,
  title={The Minimum Cooperative Guards Problem on K-spiral Polygons.},
  author={Liaw, Bern-Cherng and Huang, Nen-Fu and Lee, Richard CT},
  booktitle={CCCG},
  pages={97--102},
  year={1993}
}

@book{de2000computational,
  title={Computational geometry: algorithms and applications},
  author={De Berg, Mark},
  year={2000},
  publisher={Springer Science \& Business Media}
}

@book{dudek2024computational,
  title={Computational principles of mobile robotics},
  author={Dudek, Gregory and Jenkin, Michael},
  year={2024},
  publisher={Cambridge university press}
}

@inproceedings{kapoor1988efficient,
  title={Efficient algorithms for Euclidean shortest path and visibility problems with polygonal obstacles},
  author={Kapoor, Sanjiv and Maheshwari, SN},
  booktitle={Proceedings of the fourth annual symposium on computational geometry},
  pages={172--182},
  year={1988}
}

@inproceedings{tan2009visibility,
  title={Visibility-graph-based shortest-path geographic routing in sensor networks},
  author={Tan, Guang and Bertier, Marin and Kermarrec, A-M},
  booktitle={IEEE INFOCOM 2009},
  pages={1719--1727},
  year={2009},
  organization={IEEE}
}

@article{zylinski2005cooperative,
  title={Cooperative guards in art galleries with one hole},
  author={Zylinski, Pawel},
  journal={Balkan Journal of Geometry and Its Applications},
  volume={10},
  number={2},
  pages={142},
  year={2005},
  publisher={GEOMETRY BALKON PRESS}
}

@phdthesis{zylinski2004cooperative,
  title   = {Cooperative guards: combinatorial bounds and complexity},
  author  = {{\.Z}yli{\'n}ski, Pawe{\l}},
  school  = {University of Gdańsk},
  address = {Gda{\'n}sk, Poland},
  year    = {2004}
}

@INPROCEEDINGS{8991194,
  author={Sharma, Neha and Ali, Shahjahan},
  booktitle={2019 6th International Conference on Computing for Sustainable Global Development (INDIACom)}, 
  title={Study of Routing Protocols in MANET-A Review}, 
  year={2019},
  volume={},
  number={},
  pages={1245-1249},
  keywords={MANET;Routing Protocol;Proactive;Reactive},
  doi={}}

@article{meriaux2023test,
  title={Test methodologies for evaluating the effectiveness of suas communication links for operation in indoor and subterranean environments},
  author={Meriaux, Edwin and Weitzen, Jay and Yoni, Naye and Norton, Adam},
  journal={Journal of Testing and Evaluation},
  volume={51},
  number={6},
  year={2023},
  publisher={American Society of Mechanical Engineers Digital Collection}
}

@article{zhang2023unmanned,
  title={Unmanned aerial vehicle navigation in underground structure inspection: A review},
  author={Zhang, Ran and Hao, Guangbo and Zhang, Kong and Li, Zili},
  journal={Geological Journal},
  volume={58},
  number={6},
  pages={2454--2472},
  year={2023},
  publisher={Wiley Online Library}
}

@inproceedings{del2016survey,
  title={A survey on IEEE 802.11-based MANETs and DTNs for survivor communication in disaster scenarios},
  author={del Pilar Salamanca, Mar{\'\i}a and Camargo, Jorge},
  booktitle={2016 IEEE Global Humanitarian Technology Conference (GHTC)},
  pages={197--204},
  year={2016},
  organization={IEEE}
}

@article{zhou2015rf,
  title={RF Propagation in Mines and Tunnels: Extensive measurements for vertically, horizontally, and cross-polarized signals in mines and tunnels},
  author={Zhou, Chenming and Plass, Timothy and Jacksha, Ronald and Waynert, Joseph A},
  journal={IEEE Antennas and Propagation Magazine},
  volume={57},
  number={4},
  pages={88--102},
  year={2015},
  publisher={IEEE}
}

@article{broucke1979traveling,
  title={Traveling between the Lagrange points and the Moon},
  author={Broucke, Roger},
  journal={Journal of Guidance and Control},
  volume={2},
  number={4},
  pages={257--263},
  year={1979}
}

@book{oliehoek2016concise,
  title={A concise introduction to decentralized POMDPs},
  author={Oliehoek, Frans A and Amato, Christopher and others},
  volume={1},
  year={2016},
  publisher={Springer}
}

@article{rashid2020monotonic,
  title={Monotonic value function factorisation for deep multi-agent reinforcement learning},
  author={Rashid, Tabish and Samvelyan, Mikayel and De Witt, Christian Schroeder and Farquhar, Gregory and Foerster, Jakob and Whiteson, Shimon},
  journal={Journal of Machine Learning Research},
  volume={21},
  number={178},
  pages={1--51},
  year={2020}
}

@article{sunehag2017value,
  title={Value-decomposition networks for cooperative multi-agent learning},
  author={Sunehag, Peter and Lever, Guy and Gruslys, Audrunas and Czarnecki, Wojciech Marian and Zambaldi, Vinicius and Jaderberg, Max and Lanctot, Marc and Sonnerat, Nicolas and Leibo, Joel Z and Tuyls, Karl and others},
  journal={arXiv preprint arXiv:1706.05296},
  year={2017}
}

@article{yu2022surprising,
  title={The surprising effectiveness of ppo in cooperative multi-agent games},
  author={Yu, Chao and Velu, Akash and Vinitsky, Eugene and Gao, Jiaxuan and Wang, Yu and Bayen, Alexandre and Wu, Yi},
  journal={Advances in neural information processing systems},
  volume={35},
  pages={24611--24624},
  year={2022}
}

@article{jiang2022i2q,
  title={I2Q: A fully decentralized Q-learning algorithm},
  author={Jiang, Jiechuan and Lu, Zongqing},
  journal={Advances in Neural Information Processing Systems},
  volume={35},
  pages={20469--20481},
  year={2022}
}

@article{tampuu2017multiagent,
  title={Multiagent cooperation and competition with deep reinforcement learning},
  author={Tampuu, Ardi and Matiisen, Tambet and Kodelja, Dorian and Kuzovkin, Ilya and Korjus, Kristjan and Aru, Juhan and Aru, Jaan and Vicente, Raul},
  journal={PloS one},
  volume={12},
  number={4},
  pages={e0172395},
  year={2017},
  publisher={Public Library of Science San Francisco, CA USA}
}

@inproceedings{dabneytemporally,
  title={Temporally-Extended $\varepsilon$-Greedy Exploration},
  author={Dabney, Will and Ostrovski, Georg and Barreto, Andre},
  booktitle={International Conference on Learning Representations},
  year={2020},
}

@article{mnih2015human,
  title={Human-level control through deep reinforcement learning},
  author={Mnih, Volodymyr and Kavukcuoglu, Koray and Silver, David and Rusu, Andrei A and Veness, Joel and Bellemare, Marc G and Graves, Alex and Riedmiller, Martin and Fidjeland, Andreas K and Ostrovski, Georg and others},
  journal={nature},
  volume={518},
  number={7540},
  pages={529--533},
  year={2015},
  publisher={Nature Publishing Group}
}

@article{adam2014method,
  title={A method for stochastic optimization},
  author={Adam, Kingma DP Ba J and others},
  journal={arXiv preprint arXiv:1412.6980},
  volume={1412},
  number={6},
  year={2014}
}

@Article{ER_TAC_drones-sat,
  author = 	 {E. Restrepo-Ochoa and A. Lor\'{\i}a{} and I. Sarras and J. Marzat},
  title = 	 {Robust Consensus of High-Order Systems under Output Constraints: 
        Application to Rendezvous of Underactuated {UAV}s},
  journal	= {IEEE Trans. on Automatic Control},
  year = 	 2023,
  volume = 	 68,
  number = 	 1,
  pages = 	 {329--342},
  doi = 	 {10.1109/TAC.2022.3144107},
  hal_id = 	 {hal-03275331}
}

@article{poonawala_preserving_2017_7858679,
	title        = {Preserving Strong Connectivity in Directed Proximity Graphs},
	author       = {H.~A. Poonawala and M.~W. Spong},
	year         = 2017,
	month        = sep,
	journal      = {IEEE Trans. Automatic Control},
	volume       = 62,
	number       = 9,
	pages        = {4392--4404},
	issn         = {0018-9286, 1558-2523},
	urldate      = {2019-07-05},
	abstract     = {This paper proposes a method to maintain the strong connectivity property of a mobile robot ad hoc network in the presence of disturbances or additional control goals. Each robot has a communication range modeled by an n-dimensional sphere centered at the robot. The spheres for different robots may have different radii, resulting in a directed communication network. This paper is based on two concepts. The first is the structure of the Perron vector for reducible stochastic matrices. The second is the design of nonlinear controllers that ensure that two robots remain within a certain distance of each other despite disturbances. The results are supported by analysis and simulations.},
	language     = {en},
	pdf          = {poonawala\_preserving\_2017\_7858679.pdf}
}

@Misc{louvre-AGP,
  author = 	 {K. Yates},
  title = 	 {Louvre robbery: Could a 50-year-old maths problem have kept the museum safe?},
  howpublished = {\href{https://www.bbc.com/future/article/20251030-louvre-robbery-the-50-year-old-maths-problem-that-can-boost-museum-security}{https://www.bbc.com/future/article/20251030-louvre-robbery-the-50-year-old-maths-problem-that-can-boost-museum-security}},
  month = 	 {30th of October},
  year = 	 2025}

\end{document}